\documentclass{article}

\PassOptionsToPackage{numbers, compress}{natbib}

\usepackage[preprint]{neurips_2026}

\usepackage[utf8]{inputenc} %
\usepackage[T1]{fontenc}    %
\usepackage{hyperref}       %
\usepackage{url}            %
\usepackage{booktabs}       %
\usepackage{amsfonts}       %
\usepackage{nicefrac}       %
\usepackage{microtype}      %
\usepackage{xcolor}         %
\usepackage{float}
\usepackage{microtype}
\usepackage{graphicx}
\usepackage{subcaption}
\usepackage{booktabs} %
\usepackage[dvipsnames,table]{xcolor}
\usepackage{researchpack}
\usepackage{multicol}
\usepackage{multirow}
\usepackage[ruled,vlined]{algorithm2e}
\usepackage{siunitx}
\usepackage{hyperref}
\usepackage{tikz}
\newcommand{\LN}{\textsc{LN}}

\newcommand{\VQ}{\textsc{VQ}}
\newcommand{\TS}{\textsc{TS}}
\newcommand{\DC}{\textsc{DC}}
\newcommand{\IR}{\textsc{IR}}
\newcommand{\PS}{\textsc{PS}}
\newcommand{\SM}{\textsc{SM}}
\newcommand{\PC}{\textsc{PC}}
\newcommand{\KC}{\textsc{KC}}
\newcommand{\NC}{\textsc{NC}}
\newcommand{\VQDC}{\textsc{VQ-DC}}
\newcommand{\VQNC}{\textsc{VQ-NC}}
\newcommand{\tissue}{\texttt{TissueMNIST}}
\newcommand{\cifarTen}{\texttt{Cifar10}}
\newcommand{\cifar}{\texttt{Cifar100}}

\usepackage[capitalize,noabbrev]{cleveref}

\usepackage[textsize=tiny]{todonotes}

\author{%
  Cesare~Barbera\thanks{Corresponding author: mail to cesare.barbera@phd.unipi.it or cesare.barbera@unitn.it.} \\  
  University of Pisa, Univeristy of Trento\\  
  \And Lorenzo~Perini \\
  Meta\\
  \And Giovanni~De Toni \\
  Fondazione Bruno Kessler  \\
  \And Andrea~Passerini\\
  University of Trento \\
  \And Andrea~Pugnana\\
  University of Trento\\
}
\title{Divide et Calibra: \\Multiclass Local Calibration via Vector Quantization}

\begin{document}

\maketitle
\begin{abstract}
Accurate and well-calibrated Machine Learning (ML) models are mandatory in high-stakes settings, yet effective multiclass calibration remains challenging: global approaches assume calibration errors are homogeneous across the latent space, while local methods often rely on latent-space dimensionality reduction, which leads to information loss.
To address these issues, we propose a compositional approach to multiclass calibration, where region-specific calibration maps are constructed from shared codeword-dependent factors. We instantiate this idea via Vector Quantization (VQ), which induces a structured partition of the representation space, and an indexed parameterization of Dirichlet concentrations that enables parameter sharing across regions.
Our approach learns heterogeneous calibration maps that generalize well even to sparse regions of the latent space. Experiments on benchmark datasets show significant improvements in local calibration while maintaining competitive global calibration and predictive performance.
\end{abstract}

\section{Introduction}

In high-stakes settings such as healthcare and finance, Machine Learning (ML) models must be not only accurate but also well-calibrated~\citep{DBLP:journals/corr/abs-2308-01222, DBLP:journals/cmpb/SambyalNKB23}. In these settings, the reliability of the \textit{entire predicted probability vector} is critical: clinicians, judges, or automated policies act on relative likelihoods between multiple competing outcomes, rather than on the most likely class alone.

Consider an ML model that predicts cancer stage in a cell: if it is calibrated only on its most confident prediction~\citep{DBLP:conf/icml/GuoPSW17}, the probabilities it assigns to rarer transitional states (e.g., cells halfway between benign and malignant) may be systematically distorted, hiding patterns about tumor progression.

This phenomenon is known as \textit{proximity bias}~\citep{DBLP:conf/nips/XiongDKW0XH23} and it refers to such systematic distortions that arise in low-density regions of the representation space.
To address this problem, recent work has moved towards \emph{local calibration}, which requires probability estimates to be reliable not only on average, but within specific regions of the input or latent space~\citep{DBLP:conf/uai/LuoBBZWXSESP22}. However, local multiclass calibration has a critical statistical bottleneck: \textit{data sparsity}. As the dimensionality of the representation space increases, data points become more isolated, making it harder to estimate local corrections. Conversely, global calibration methods~\citep{DBLP:conf/nips/KullPKFSF19,DBLP:journals/corr/abs-2511-03685} avoid data sparsity by applying a single correction map to all samples, but fail to correct \textit{proximity bias} in low-density regions. This describes a key trade-off: global methods are stable but biased, while local methods are flexible but high-variance. %

To address this challenge,  we reinterpret local calibration as a compositional learning problem, where region-specific calibration functions are constructed from reusable latent components, rather than estimated independently for each region. \Cref{fig:pipeline} showcases our proposed methodology: first, we consider a discretization of the representation space via vector quantization (VQ). VQ induces a Voronoi tessellation of the embedding space, mapping each continuous latent vector to a discrete sequence of elements in a shared \textit{codebook} (\ie a finite set of prototype vectors, or ``\textit{codewords}''). This replaces a continuous representation with a structured composition of discrete components, drawn from frequently reused codewords. Intuitively, even if a particular index sequence corresponds to a rare region, its codewords are shared across many samples (assuming good codebook utilization), providing better statistical support than direct conditioning on rare continuous features.
Second, we apply this same discretization principle also to calibration parameters, \ie we learn calibration maps for rare or isolated instances through combinations of well-estimated, frequently reused calibration factors.
This yields an implicit form of density regularization: the model is locally adaptive, yet avoids learning fully instance-specific calibration maps in sparse regions.

\textbf{Our Contributions.} To sum up, our contributions are:

\begin{itemize}
    \item[$(i)$] We propose a \emph{compositional formulation of local multiclass calibration}, where calibration maps are defined over a Voronoi tessellation of the embedding space induced by Vector Quantization, enabling region-aware modeling of miscalibration (\cref{sec:quantization}).

    \item[$(ii)$] To make this formulation tractable, we introduce an \textit{indexed parameterization trick} that constructs region-specific calibration maps by reusing shared codeword-dependent factors. This yields a parameter-efficient model that scales to exponentially many regions, enables compositional generalization, and guarantees statistical stability~(\cref{sec:local_dir_cal,sec:stability}).

    \item [$(iii)$] We empirically validate the approach showing large and significant improvements over existing baselines, particularly in low-support regions (\cref{sec:experimental-evaluation}).
\end{itemize}

\begin{figure*}[t]
\begin{subfigure}[t]{.680\textwidth}
        \centering
        \includegraphics[width=\linewidth]{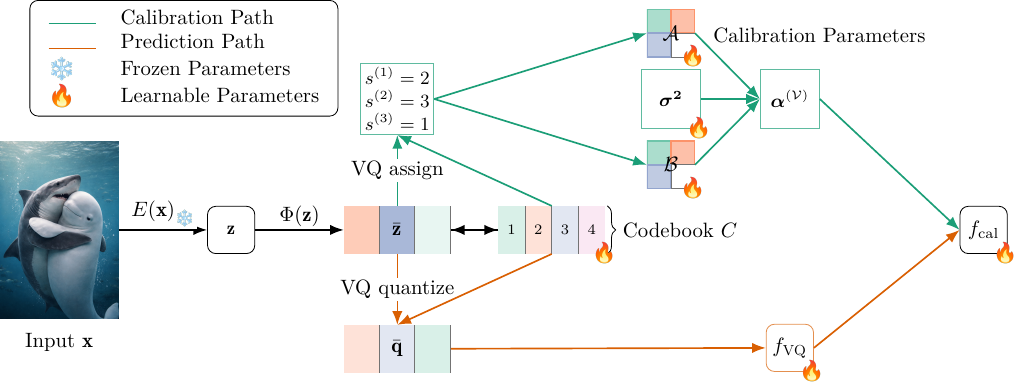}
    \caption{Toy example pipeline when $w=3$ and $|\mathcal{C}|=4$.}
    \label{fig:pipeline}
\end{subfigure}\hfill
\begin{subfigure}[t]{.3\textwidth}
        \centering
        \includegraphics[width=\linewidth]{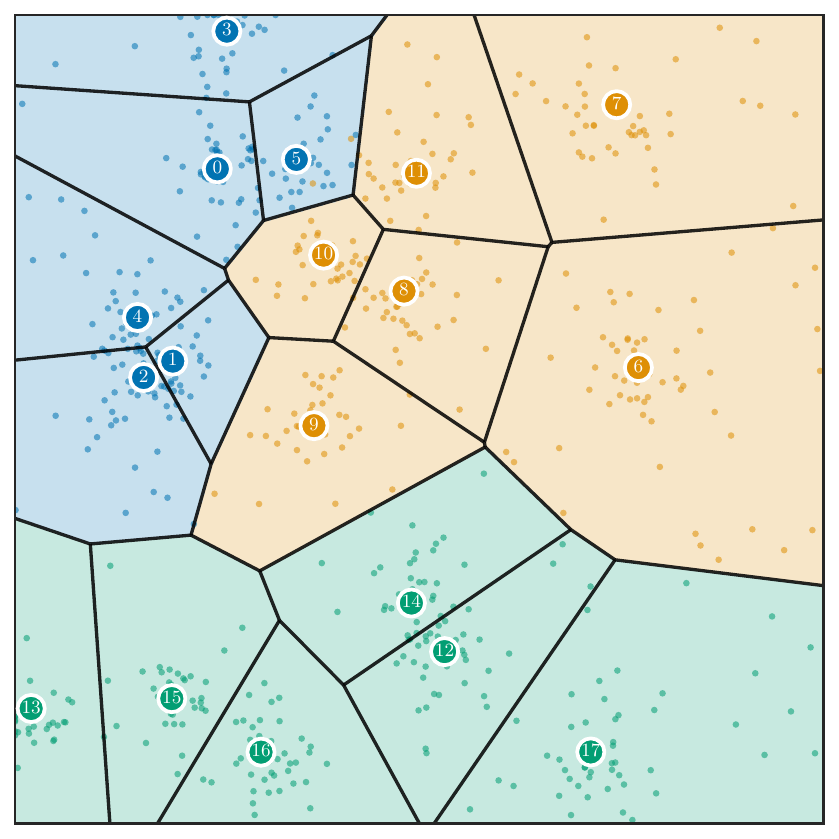}
    \caption{Voronoi Tessellation Example.} %
    \label{fig:wvoronoi}
\end{subfigure}
\caption{\cref{fig:pipeline} depicts our approach; \cref{fig:wvoronoi} shows how Voronoi tessellation works, assigning points to the closest centroid.}

\end{figure*}

\section{Background}

\textbf{Preliminaries.}
We consider a multi-class classification task, where $\mcX\subseteq\mathbb{R}^m$ is the feature space and $\mcY = \{0,\ldots, |\mathcal{Y}|-1\}$ is a finite target space with $|\mathcal{Y}|$ distinct labels. Let us assume we have access to a given dataset \( D = \{(\mbx_i, y_i)\}_{i=1}^n \) of input-output pairs drawn from an unknown joint distribution \( \mathcal{P} \) over \( \mathcal{X} \times \mathcal{Y} \). Each input \( \mbx_i \in \mcX \) is a feature vector of $m$ dimensions, and each label \( y_i \in \mcY \) has a corresponding one-hot encoded vector $\mathbf{y}_i$ indicating the correct class among the \( |\mathcal{Y}| \) possible classes.
We consider a probabilistic classifier \( f \colon \mathcal{X} \rightarrow \Delta^{|\mathcal{Y}|} \) mapping inputs to the \((|\mathcal{Y}|-1)\)-dimensional probability simplex, with \( f(\mbx) = \mathbf{\hat{p}} \) and \( \mathbf{\hat{p}}_{k} = f_k(\mbx) \) the predicted probability of class \( k \).

\textbf{Calibration notions and metrics.}
\label{sec:notions_metrics}
The strictest notion is \textit{strong calibration}~\citep{DBLP:conf/aistats/VaicenaviciusWA19}, which requires the target class conditional distribution on any prediction of the classifier to match that prediction:
\[
\mathbb{P}(\mathbf{y}_k =1 \mid \mathbf{\hat{p}}) =  \mathbf{\hat{p}}_k \quad \quad \forall \; k \in \{0, \ldots, |\mcY|-1\}.
\]
While strong calibration is defined as a population-level expectation, recent work has shifted attention to \textit{local calibration} \citep{DBLP:conf/uai/LuoBBZWXSESP22, valk2023assuming, DBLP:journals/corr/Barbera25}, which requires calibration to hold within localized regions of the input or representation space. A model is then \textit{locally calibrated} if its predictions match consistent kernel estimates of the conditional label distribution.

We consider both \textit{global} and \textit{local} calibration metrics to assess calibration.
Expected Calibration Error (ECE)~\citep{DBLP:conf/aaai/NaeiniCH15} measures the discrepancy between predicted confidence and empirical accuracy across bins; in multiclass settings it extends to Class-wise ECE \citep{DBLP:conf/nips/KullPKFSF19} (per-class average) and Multidimensional ECE (MECE), the latter impractical due to combinatorial simplex-binning. Expected Cumulative Calibration Error (ECCE)~\citep{DBLP:journals/jmlr/IbarraGTTX22} aggregates errors cumulatively, providing a more robust assessment.

For local calibration metrics, we consider the multiclass extension of
\textit{Local Calibration Error} (LCE) \cite{DBLP:conf/uai/LuoBBZWXSESP22} proposed by~\citet{DBLP:journals/corr/Barbera25}. %
This metric captures discrepancies between predicted probabilities and ground-truth
labels for neighboring points of an anchor $\mbx_i$, weighted by a kernel function.
We also consider the \textit{Maximum Local Calibration Error} (MLCE), which measures the worst-case local calibration error over the dataset. 
See \cref{sec:metrics} for a detailed description of all metrics.

\section{Methodology}
\label{sec:methodology}
Given a pre-trained classifier $f$ and a calibration set $\mathcal D_{\mathrm{cal}}$, our goal is to construct a region-aware calibration function $f_{\mathrm{cal}}$ that maps the model output $\hat{\mbp}$ to a calibrated posterior $\hat{\mbp}_{\mathrm{cal}}$.
Unlike global methods that apply a unique map to the model scores for all instances, learning \textit{local} calibration functions requires access to a (frozen) representation space and replacing the original classification head, as in \citet{kirichenko2022last} or other local calibration approaches \citep{DBLP:conf/iclr/LinT023, DBLP:journals/corr/Barbera25}.

Note that obtaining a heterogenous calibrator in high-dimensional settings is challenging for three main reasons. First, defining \textit{``local regions''} in high-dimensional embedding spaces is non-trivial. Standard distance-based methods suffer from the curse of dimensionality \citep{beyer1999nearest, wasserman2006all, hastie2009elements, polianskii2022voronoi}, leading to isolated neighbourhoods and unreliable density estimation. Second, even if we define discrete regions, the number of such regions typically grows \textit{exponentially} with the dimensionality. Learning a distinct, unstructured calibration map for every region is computationally intractable and memory-heavy. Third, models that act locally are often prone to overfitting. Thus, we need to ensure that a local calibration map does not come at the cost of statistical instability, especially for rare instances.

In the following, we address these challenges. In \cref{sec:quantization} (\textit{Divide}), we solve the first problem by partitioning the space using vector quantization, mapping each instance to a structured region defined by shared codewords. In \cref{sec:local_dir_cal} (\textit{Calibra}), we first introduce a compositional Dirichlet calibration model based on shared codeword-dependent factors that assigns region-specific calibration maps while sharing parameters across regions. Then, we show how parameter sharing yields statistical stability, enabling reliable calibration even for rare or isolated instances.

\subsection{\textit{Divide:} Voronoi Tessellation of the Latent Space} %
\label{sec:quantization}
The encoder representation space induced by modern neural networks is typically \textit{high-dimensional} and \textit{structured}, yet calibration behavior varies substantially across different regions of this space.
To reason about such locality in a principled way, we adopt a geometric perspective: partitioning the space into a collection of discrete regions, where the model's predictions are approximately similar.
We construct this partition by independently quantizing contiguous sub-vectors of the latent representation using a shared codebook.

Let \(\mathbf x\) be mapped by the encoder (\textit{i.e.} penultimate layer of a Neural Network) to a feature vector \(\mbz = E(\mbx) \in \mathbb{R}^{m'}\).
Let us define a segmentation map as $\Phi: \mathbb{R}^{m'} \rightarrow (\mathbb{R}^d)^w$, where $d,w \in \mathbb{N}$. Such a map partitions \(\mbz\) into \(w\) contiguous segments (\textit{``slots''}) of size $d$, so that we have $m'=d\cdot w$ and:
\[
\mathbf{\overline{z}} = \bigl[\mbz^{(1)}, \mbz^{(2)}, \dots, \mbz^{(w)}\bigr],
\qquad \mbz^{(i)} \in \mathbb{R}^{d}
\]

We define a \textit{codebook} as  \(\mathcal C = \{ \mbc_1, \dots, \mbc_{|\mathcal C|} \} \subset \mathbb{R}^{d}\) where $\mbc_i \in \mathbb{R}^d$ is a \textit{codeword}. A codebook is a finite set of representative vectors (codewords) in $\mathbb{R}^{d}$ that serve as building blocks for the quantization procedure.
Given the segmented vector $\overline{\mbz} = \Phi(\mbz)$, we assign each $i$-th slot \(\mbz^{(i)}\) independently to a codebook entry, following a nearest-neighbor rule (\eg the $\ell_2$ norm), via the indices:
\[
s(i) = \argmin_{j \in \{1, \dots, |\mathcal C|\}} \| \mbz^{(i)} - \mbc_j \|_2
\]
We denote the full vector of quantization indices as \(\mathbf{s}=(\, s(1), \ldots,s(w) \,)\) and the fully quantized latent representation as
\(
\mathbf{\overline{q}}_{\mathbf{s}} = [\, \mbc_{s(1)}, \dots,\, \mbc_{s(w)} \,] \in \mathbb{R}^{w\times d}
\). We define the corresponding flattened representation (obtained by concatenating the $w$ codewords) as $\mbq_{\mathbf{s}} \in \mathbb{R}^{m'}$.
The set of all possible flattened quantized configurations is
\(
\mathcal{Q} = \big\{ \mbq_{\mathbf{s}} \, : \, \mathbf{s} \in  \{1, \dots,  |\mathcal C|\}^w\big\}\subset \mathbb R^{m'}.
\)

Although this operation is performed locally at the level of individual slots, it induces a global partition of the full latent space into a combinatorially large number of regions, as the following proposition illustrates:
\begin{proposition}
A global minimizer of the Euclidean distance in the flattened space, %
\[
\mbq^\star = \argmin_{\mbq \in \mathcal{Q}} \| \mbz - \mbq \|_2
\]
can be obtained by assigning each slot independently to its nearest codebook vector, namely:
\[
s^\star(i) = \argmin_{j \in \{1, \dots, |\mathcal C|\}} \| \mbz^{(i)} - \mbc_j \|_2 \quad \forall i,
\qquad
\mbq^\star = \mbq_{\mathbf{s}^\star}
\] %
\label{prop:global-minimizer-euclidean-space}
\end{proposition}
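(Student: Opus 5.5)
The plan is to decompose the squared Euclidean distance slot by slot and exploit the product structure of $\mathcal{Q}$. Minimizing $\|\mbz-\mbq\|_2$ over $\mathcal{Q}$ is equivalent to minimizing $\|\mbz-\mbq\|_2^2$, since $t\mapsto t^2$ is strictly increasing on $[0,\infty)$. So I will work with the squared norm throughout and only return to the norm at the end.

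\textbf{Additive decomposition.} Both $\mbz$ and every $\mbq_{\mathbf{s}}\in\mathcal{Q}$ are concatenations of $w$ blocks of size $d$: for $\mbz$ the blocks come from $\Phi$, and for $\mbq_{\mathbf{s}}$ they are the codewords $\mbc_{s(1)},\dots,\mbc_{s(w)}$. The squared $\ell_2$ norm of a concatenated vector is the sum of the squared norms of its blocks. Hence
\[
\|\mbz-\mbq_{\mathbf{s}}\|_2^2=\sum_{i=1}^{w}\|\mbz^{(i)}-\mbc_{s(i)}\|_2^2 .
\]
The objective therefore separates into $w$ nonnegative terms, and the $i$-th term depends only on the coordinate $s(i)$ of the index vector.

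\textbf{Product structure and termwise minimization.} By definition, $\mathcal{Q}$ is parametrized by $\mathbf{s}\in\{1,\dots,|\mathcal C|\}^w$, which is a Cartesian product with no coupling constraints between slots. Minimizing a separable sum over a product set is the same as minimizing each summand over its own factor. Concretely, fix $s^\star(i)\in\argmin_j\|\mbz^{(i)}-\mbc_j\|_2$ for each $i$; these minimizers exist because the codebook is finite. Then for any $\mathbf{s}$ we have $\|\mbz^{(i)}-\mbc_{s^\star(i)}\|_2^2\le\|\mbz^{(i)}-\mbc_{s(i)}\|_2^2$ for every $i$. Summing these $w$ inequalities gives $\|\mbz-\mbq_{\mathbf{s}^\star}\|_2\le\|\mbz-\mbq_{\mathbf{s}}\|_2$ for all $\mathbf{s}$, so $\mbq_{\mathbf{s}^\star}$ is a global minimizer over $\mathcal{Q}$.

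\textbf{Ties.} The one subtle point is non-uniqueness. If some slot has several equidistant codewords, the $\argmin$ is a set, and the proposition only claims that a global minimizer can be obtained this way. Any choice of tie-breaking gives the same minimal value, so each such choice is a valid minimizer. Conversely, every global minimizer must minimize each slot term, since otherwise improving a single slot would strictly decrease the sum. Beyond this, no real obstacle is expected: the whole argument rests on the orthogonal block decomposition of the squared norm.
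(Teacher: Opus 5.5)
Your proposal is correct and follows essentially the same route as the paper: the squared Euclidean norm splits additively over the $w$ blocks, so minimizing over the product index set $\{1,\dots,|\mathcal C|\}^w$ reduces to $w$ independent nearest-codeword problems. Your explicit treatment of the passage from $\|\cdot\|_2$ to $\|\cdot\|_2^2$, the summed termwise inequalities, and tie-breaking makes the argument slightly more careful than the paper's, but it is not structurally different.
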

\begin{proof}
Proof is provided in \cref{sec:prop1_proof}. %
\end{proof}

Each element \(\mbq \in \mathcal{Q}\) defines a \emph{Voronoi cell} centroid in the flattened feature space, namely:
\(
\mathcal{V}(\mbq)
= \bigl\{\, \mbz \in \mathbb{R}^{m'} : \| \mbz - \mbq \|_2 \le \| \mbz - \mbq' \|_2
\ \quad \forall \; \mbq' \in \mathcal{Q} \,\bigr\}
\), with potential ties handled by random assignment.
These Voronoi cells form a partition of \(\mathbb{R}^{m'}\) (see \cref{fig:wvoronoi}), as
\[
\mathbb{R}^{m'} = \bigcup_{\mbq \in \mathcal{Q}} \mathcal{V}(\mbq),
\qquad
\mathcal{V}(\mbq) \cap \mathcal{V}(\mbq') = \emptyset
\ \text{for } \mbq \ne \mbq'
\]
Note that $(i)$ the standard per-slot nearest-neighbour quantization is equivalent to assigning
\(\mbz = E(\mbx)\)
to the centroid \(\mbq^\star \in \mathcal{Q}\) whose Voronoi region contains it; $(ii)$ the global Voronoi tessellation in \(\mathbb{R}^{m'}\) factorizes into $|\mathcal Q|=|\mathcal C|^w$ Voronoi cells.
Thus, we can interpret the latent space as a tessellation of Voronoi cells, each associated with a distinct quantized representation.
Notably, this partition 
provides the structural foundation to build locally adaptive yet parameter-efficient calibration maps.

\subsection{\textit{Calibra:} Learning Local Calibration Maps}
\label{sec:local_dir_cal}

In this subsection, we formalize our calibration approach from a probabilistic perspective. We model the distribution of predicted probabilities conditioned on the true label and Voronoi region, and derive the corresponding calibrated posterior over labels via Bayes' rule. The resulting posterior admits a tractable log-linear form presented in \cref{prop:log_linear}.
\begin{proposition}
    \label{prop:log_linear}
    Assume that, conditioned on a cell $\mathcal{V}$ and the true label $y=j$, the predicted probability vector $\hat{\mbp}$ follows a Dirichlet distribution, i.e.,
    \(\mathbf{\hat{p}} \mid (y=j,\,\mcV) \ \sim\ \operatorname{Dir}\bigl(\boldsymbol\alpha^{(j, \mcV)}\bigr).\)
     The calibration bias and weight vector for \(j\) and \(\mcV\) are:
\[
\mathbf b_{j, \mcV} = \log \pi_{j\mid \mcV} - \log B\bigl(\boldsymbol\alpha^{(j, \mcV)}\bigr)
\quad\quad
\mathbf w_{j, \mcV}=\boldsymbol\alpha^{(j, \mcV)}- \mathbf{1}
\]
and the logarithm of posterior probabilities admits the form:
\begin{equation}
    \log p(y=j \mid \mathbf{\hat p}, \mcV) = \mathbf b_{j, \mcV} + \mathbf w_{j, \mcV}^{\top}\log\mathbf{\hat p} \;+\; \mathrm{const}.
    \label{eqn:log-linear-form}
\end{equation}
\end{proposition}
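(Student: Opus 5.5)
The plan is to apply Bayes' rule within a fixed cell $\mcV$, treating the cell as part of the conditioning throughout. For each label $j$ we write
\[
p(y=j \mid \hat{\mbp}, \mcV) \;=\; \frac{p(\hat{\mbp}\mid y=j,\mcV)\,\pi_{j\mid\mcV}}{p(\hat{\mbp}\mid\mcV)},
\]
where $\pi_{j\mid\mcV}=\mathbb P(y=j\mid\mcV)$ is the class prior inside the cell. The denominator is $p(\hat{\mbp}\mid\mcV)=\sum_{k}\pi_{k\mid\mcV}\,p(\hat{\mbp}\mid y=k,\mcV)$. It depends on $\hat{\mbp}$ and $\mcV$ but not on $j$. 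Taking logarithms, this term is the additive normalizer that we call ``$\mathrm{const}$'' in \eqref{eqn:log-linear-form}. The constant is therefore understood as constant in $j$, which is exactly what is needed for a softmax parameterization of the calibrated posterior.

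Next we substitute the Dirichlet likelihood from the assumption. It is
\[
p(\hat{\mbp}\mid y=j,\mcV)=\frac{1}{B(\boldsymbol\alpha^{(j,\mcV)})}\prod_{k}\hat{\mbp}_k^{\alpha^{(j,\mcV)}_k-1},
\qquad
B(\boldsymbol\alpha)=\frac{\prod_k\Gamma(\alpha_k)}{\Gamma(\sum_k\alpha_k)}.
\]
Taking the logarithm gives
\[
-\log B(\boldsymbol\alpha^{(j,\mcV)})+(\boldsymbol\alpha^{(j,\mcV)}-\mathbf 1)^\top\log\hat{\mbp}.
\]
Adding $\log\pi_{j\mid\mcV}$ and subtracting $\log p(\hat{\mbp}\mid\mcV)$, we group the terms. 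The $\hat{\mbp}$-independent part is $\log\pi_{j\mid\mcV}-\log B(\boldsymbol\alpha^{(j,\mcV)})=\mathbf b_{j,\mcV}$. The part linear in $\log\hat{\mbp}$ has coefficient $\boldsymbol\alpha^{(j,\mcV)}-\mathbf 1=\mathbf w_{j,\mcV}$. This is exactly \eqref{eqn:log-linear-form}.

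There are two points where care is needed; both are clarifications rather than technical difficulties.

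\emph{Meaning of $\mathrm{const}$.} The log-evidence $-\log p(\hat{\mbp}\mid\mcV)$ varies with $\hat{\mbp}$, so the statement holds only if $\mathrm{const}$ is read as independent of the label $j$. We should state this explicitly and note that, equivalently,
\[
p(y=j\mid\hat{\mbp},\mcV)=\operatorname{softmax}_j\!\left(\mathbf b_{\cdot,\mcV}+\mathbf W_{\mcV}\log\hat{\mbp}\right),
\]
where $\mathbf W_{\mcV}$ is the matrix whose $j$-th row is $\mathbf w_{j,\mcV}^\top$.

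\emph{Dirichlet support.} The Dirichlet density is taken with respect to Lebesgue measure on the open simplex, whose dimension is $|\mcY|-1$. So we should assume $\hat{\mbp}$ lies in the interior of the simplex, which makes $\log\hat{\mbp}$ well defined. We should also assume $\pi_{j\mid\mcV}>0$ for every $j$.
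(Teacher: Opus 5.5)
Your proposal is correct and follows the paper's proof essentially step for step: Bayes' rule within the cell with prior $\pi_{j\mid\mathcal{V}}$, substitution of the Dirichlet density, taking logarithms, and absorbing the $j$-independent log-evidence into the additive constant, which the paper likewise describes as the term enforcing normalization across $j$. Your extra remarks are sensible clarifications that the paper leaves implicit: the constant is independent of $j$ but not of $\hat{\mathbf{p}}$, $\hat{\mathbf{p}}$ must lie in the interior of the simplex, and the priors must be positive.
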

\begin{proof}
Proof is provided in \cref{sec:proof_prop2}. %
\end{proof}
Here, $B(\cdot)$ denotes the multivariate Beta function associated with the Dirichlet distribution, and $\pi_{j\mid \mcV}$ a possibly cell-dependent prior probability.
This results in a linear calibration model where the per-cell parameter is determined by Dirichlet concentrations.

The formulation in \cref{prop:log_linear} shows that learning a calibration map requires specifying the Dirichlet concentration parameters $\boldsymbol{\alpha}^{(\mcV)}$ for each Voronoi cell $\mcV$. However, the number of such cells grows exponentially with the number of quantization slots, making an unconstrained per-cell parameterization computationally and statistically infeasible.
To address this challenge, we seek a parameterization of $\boldsymbol{\alpha}^{(\mcV)}$ that (i) captures class-dependent redistribution of confidence mass, while (ii) enabling parameter sharing across regions. We achieve this by constructing region-specific calibration maps compositionally from shared codeword-dependent factors, yielding a parameter-efficient representation that generalizes across exponentially many cells.

\textbf{Indexed Parameterization Trick.}
Instead of learning independent calibration maps per region, we construct them compositionally from shared codeword-dependent factors, allowing information to be reused across exponentially many regions. To enable compositional generalization across regions, we introduce an \textit{indexed parameterization trick}: discrete quantization indices are reused as selectors of calibration parameters. This induces a compositional parameterization that shares statistical strength across exponentially many regions while using only a linear number of parameters. We thereby obtain a parameter-efficient way to associate each region with a pair of \textit{sender}-\textit{receiver} calibration maps.

Let $\mathcal{C}=\{\mbc_1,\dots,\mbc_{|\mathcal C|}\}\subset\mathbb{R}^d$ be the
\emph{embedding codebook}, and
$\mathcal{A}=\{\mba_1,\dots,\mba_{|\mathcal C|}\}\subset\mathbb{R}^{|\mcY|}$,
$\mathcal{B}=\{\mathbf{b}_1,\dots,\mathbf b_{|\mathcal C|}\}\subset\mathbb{R}^{|\mcY|}$
the corresponding \emph{receiver} and \emph{sender} calibration codebooks,
respectively. %
All three codebooks share the same discrete indexing set $\{1,\dots,|\mathcal C|\}$.
We also introduce slot dependent scaling $\boldsymbol\sigma^2 = (\sigma_1^2,\ldots,\sigma_w^2)$.
Each sub-vector $\mbz^{(i)}$ is assigned independently to its nearest embedding centroid, yielding an index sequence $\mathbf{s}=(s(1),\dots,s(w))$, which uniquely identifies a region $\mcV$.
The same index sequence is reused to select calibration parameters from $\mathcal{A}$ and $\mathcal{B}$. This induces local \textit{sender} and \textit{receiver} weight matrices of shape $w\times|\mcY|$:
\[
\mathbf A^{(\mcV)}=
[\mba_{s(1)}, \dots, \mba_{s(w)}]^{\!\top} \quad \mathbf B^{(\mcV)}=
[\mathbf b_{s(1)}, \dots, \mathbf b_{s(w)}]^{\!\top}
\]
The matrices $\mathbf A^{(\mcV)}$ and $\mathbf B^{(\mcV)}$ define the Dirichlet concentration
parameters through the bilinear form:
\begin{equation}
\label{eq:bilinear_alpha_quant}
\boldsymbol\alpha^{(\mcV)}\;:=\;\phi\big(\mathbf A^{(\mcV)\!\top}\mathrm{diag}(\boldsymbol\sigma^2)\,\mathbf B^{(\mcV)}\big),
\end{equation}
where $\phi$ is a positive, monotone link function applied
element-wise to ensure positivity.

\textbf{Interpretation.}
This construction admits a natural interpretation in terms of class-dependent evidence redistribution. Each codeword contributes to a pair of vectors: a \emph{sender} $\mathbf b_k$, describing how confidence mass is emitted when a given class is true, and a \emph{receiver} $\mathbf a_k$, describing how this mass is accumulated across predicted classes. The bilinear interaction combines these contributions across slots, capturing how evidence is transferred between classes in a region-specific manner. In this sense, the model represents miscalibration as structured, class-dependent redistribution of confidence mass.

Moreover, the diagonal weighting $\mathrm{diag}(\boldsymbol{\sigma}^2)$ introduces slot-dependent scaling, allowing different positions in the index sequence to contribute unequally. This breaks permutation invariance over the sequence $\mathbf s$, ensuring that distinct index configurations correspond to distinct calibration maps. 

This construction changes the number of parameters from $ |\mcY| ^2\times|\mathcal C|^w$ to $2\times |\mathcal C| \times |\mcY| + w$. In practice, \ie when $|\mathcal C|,  w > 1$, our proposal substantially decreases the number of parameters. Finally, parameter sharing is key to statistical stability, as it leverages recurring structure across regions. As we will show in the following, it enables reliable estimation even in low-density regions.

\paragraph{Statistical Stability.}
\label{sec:stability}
The key insight of our approach is the following: while individual Voronoi cells may be sparsely populated, the \textit{codewords} that compose them are frequently reused. By tying calibration parameters to codewords rather than cells, we gain statistical stability. We support the benefits of this parameter sharing through two \emph{local} statistical results, conditional on the optimization trajectory entering a well-behaved neighbourhood of a stationary point of the loss. %

First, let us assume the following regularity conditions hold, i.e., (A1) \textit{smoothness}, (A2) \textit{uniform convergence}, (A3) \textit{gradient concentration}, (A4) \textit{local curvature}, and (A5) \textit{approximate stationarity}\footnote{We provide a detailed description of all our assumptions for \cref{thm:local_consistency} and \cref{thm:convergence_rates} in \cref{ap:assumptions}.}. Under such assumptions, we show that, despite the compositional structure, the estimation procedure is locally statistically consistent, as formally stated in the following Theorem:

\begin{theorem}[Local Rate and Consistency]\label{thm:local_consistency}
Let $\mathcal L_N(\theta)$ be the empirical cross-entropy loss and $\Theta_{\text{local}}$ be a compact convex neighbourhood of a population local minimizer $\theta^\star$ (the pseudo-true parameter). Let $\hat\theta_N \in \Theta_{\text{local}}$ be an approximate stationary point satisfying $\|\nabla \mathcal L_N(\hat\theta_N)\|_2 \le \varepsilon_N$, where $\varepsilon_N \xrightarrow{p} 0$. Under the assumptions (A1)-(A5), the estimator converges in probability to the population minimizer:
\[
\|\hat\theta_N-\theta^\star\|_2
=
O_p(N^{-1/2}) + O_p(\varepsilon_N).
\]
In particular, if \(\varepsilon_N=o_p(1)\), then \(\hat\theta_N \xrightarrow{p}\theta^\star\).
\end{theorem}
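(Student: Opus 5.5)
The plan is the standard $Z$-estimator (M-estimation) argument, built on a mean-value expansion of the empirical gradient around the pseudo-true parameter $\theta^\star$. We read the assumptions in the natural way:
\begin{itemize}
\item (A1) makes $\mathcal L_N$ twice continuously differentiable on $\Theta_{\text{local}}$, with a Lipschitz Hessian.
\item (A2) gives $\sup_{\theta\in\Theta_{\text{local}}}\|\nabla^2\mathcal L_N(\theta)-\nabla^2\mathcal L(\theta)\|\xrightarrow{p}0$.
\item (A3) gives $\|\nabla\mathcal L_N(\theta^\star)\|_2=O_p(N^{-1/2})$. This follows from a CLT or Bernstein bound, since $\nabla\mathcal L(\theta^\star)=0$ and the per-sample gradients have finite variance.
\item (A4) gives $\lambda_{\min}(\nabla^2\mathcal L(\theta))\ge\mu>0$ on $\Theta_{\text{local}}$.
\item (A5) is the hypothesis $\|\nabla\mathcal L_N(\hat\theta_N)\|_2\le\varepsilon_N$.
\end{itemize}

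\textbf{Step 1 (expansion).} Since $\Theta_{\text{local}}$ is convex, the segment between $\hat\theta_N$ and $\theta^\star$ lies in it. The integral form of the mean-value theorem then gives
\[
\nabla\mathcal L_N(\hat\theta_N)-\nabla\mathcal L_N(\theta^\star)=\bar H_N(\hat\theta_N-\theta^\star),
\qquad
\bar H_N=\int_0^1\nabla^2\mathcal L_N\bigl(\theta^\star+t(\hat\theta_N-\theta^\star)\bigr)\,dt .
\]

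\textbf{Step 2 (curvature transfers to the sample).} By (A2) and (A4), with probability tending to one every $\nabla^2\mathcal L_N(\theta)$ with $\theta\in\Theta_{\text{local}}$ has smallest eigenvalue at least $\mu/2$. Averaging over the segment, $\bar H_N$ inherits the same bound, so $\bar H_N$ is invertible with $\|\bar H_N^{-1}\|\le 2/\mu$ on this event.

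\textbf{Step 3 (rate).} Inverting the expansion and using the triangle inequality yields, on the same event,
\[
\|\hat\theta_N-\theta^\star\|_2\le \tfrac{2}{\mu}\bigl(\|\nabla\mathcal L_N(\hat\theta_N)\|_2+\|\nabla\mathcal L_N(\theta^\star)\|_2\bigr)\le \tfrac{2}{\mu}\bigl(\varepsilon_N+O_p(N^{-1/2})\bigr).
\]
The complement event has vanishing probability, so it does not affect $O_p$ statements. This gives the claimed rate $O_p(N^{-1/2})+O_p(\varepsilon_N)$. If $\varepsilon_N=o_p(1)$, both terms vanish in probability, so $\hat\theta_N\xrightarrow{p}\theta^\star$.

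\textbf{Main obstacle.} The delicate point is Step 2: the uniform Hessian control. The parameterization in \eqref{eq:bilinear_alpha_quant} is bilinear in the codebooks $\mathcal A,\mathcal B$ and scaled by $\boldsymbol\sigma^2$, so it has intrinsic non-identifiabilities. For example, rescaling $\mathbf a_k\mapsto c\,\mathbf a_k$, $\mathbf b_k\mapsto \mathbf b_k/c$ leaves the model unchanged, and so does trading scale between $\boldsymbol\sigma^2$ and the codebooks. The population Hessian is therefore only positive semidefinite in these directions. I would first check that (A4) is imposed on a quotient or normalized parameterization, for instance by fixing $\boldsymbol\sigma^2$ or the codebook norms. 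Otherwise I would restate the result for the identifiable map $\theta\mapsto\boldsymbol\alpha^{(\mcV)}(\theta)$, applying Steps 1--3 in a chart where the curvature is strictly positive.

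The Hessian Lipschitzness in (A1) is needed only to make $\bar H_N$ close to $\nabla^2\mathcal L(\theta^\star)$; the eigenvalue bound alone suffices for the rate. The Dirichlet log-linear form of \cref{prop:log_linear} gives bounded derivatives on the compact set once the $\log\hat{\mbp}$ entries are bounded, which makes (A2) plausible via a uniform law of large numbers.
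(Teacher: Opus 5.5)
Your proposal is correct and is essentially the paper's proof: a mean-value expansion of $\nabla\mathcal L_N$ around $\theta^\star$, an $O_p(1)$ bound on the inverse sample Hessian from (A2) and (A4), and the triangle inequality with (A3) and (A5). Your integral-averaged Hessian $\bar H_N$ is the rigorous version of the paper's single intermediate point $\tilde\theta_N$, since an exact one-point mean-value identity fails in general for vector-valued maps. Two small notes: the Hessian Lipschitzness you attribute to (A1) is really (A9), and, as you say, it is not needed here; your identifiability caveat is the one the paper itself raises in its remark under (A4), where it is resolved by weight decay or a normalization constraint.
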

\begin{proof}
Proof is provided in \cref{sec:proof_thm1}. %
\end{proof}
While consistency ensures correctness in the limit, it does not describe behaviour under data sparsity.
We introduce additional assumptions$^1$ - \ie (A6) weak \textit{cross-block coupling}, (A7) \textit{bounded conditional influence between codewords}, (A8) \textit{standard moment and concentration conditions} and (A9) \textit{Local Hessian Lipschitzness} - to capture the mechanism by which parameter sharing induced by VQ can stabilize optimization empirically. If these assumptions hold, then the convergence rate of the calibration parameters depends on the frequency of codeword usage, not on the density within $\mathcal V$, as shown in the following Theorem:
\begin{theorem}[Frequency-Weighted Convergence of Codeword Parameters] \label{thm:convergence_rates}
Assume (A1)--(A9) and condition on the global scale parameter $\boldsymbol\sigma^2$ and on the (fixed) codeword assignments induced by quantization.
For any active codebook parameter $\pi_k \in \{\mathcal A \cup \mathcal B\}$ with effective occurrence count $N_k = \Theta_{p}(N) $ (pointwise in $k$), we have:  \begin{equation*}
\| \pi_k -  \pi_k^\star\|_2 = O_p\!\left(N_k^{-1/2}\right) + O_p\!\left(\varepsilon_N\right) 
\end{equation*}
\end{theorem}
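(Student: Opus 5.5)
The plan is a blockwise M-estimation argument. We start from the first-order expansion already used for \cref{thm:local_consistency} and then localize it to the block of coordinates belonging to a single codeword $k$. Fix $\boldsymbol\sigma^2$ and the codeword assignments. The empirical loss then splits as $\mathcal L_N(\theta)=\frac1N\sum_{i}\ell(\theta;\hat{\mbp}_i,y_i,\mathbf s_i)$. The parameter $\pi_k\in\{\mathcal A\cup\mathcal B\}$ enters $\ell$ only through samples whose index sequence $\mathbf s_i$ contains $k$. The number of such occurrences, counted with multiplicity across slots, is the effective count $N_k$.

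Consequently the block gradient has the form
\[
\nabla_{\pi_k}\mathcal L_N(\theta)=\frac{N_k}{N}\cdot\frac{1}{N_k}\sum_{(i,t):\, s_i(t)=k} g_k(\theta;\hat{\mbp}_i,y_i,\mathbf s_i),
\]
with an analogous expression for the diagonal Hessian block $H_{kk}$.

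First step: write the mean-value (Taylor) expansion around $\theta^\star$,
\[
\nabla\mathcal L_N(\hat\theta_N)=\nabla\mathcal L_N(\theta^\star)+\bar H_N(\hat\theta_N-\theta^\star),
\]
which is valid on the convex set $\Theta_{\text{local}}$. By \cref{thm:local_consistency}, $\hat\theta_N\to\theta^\star$. Assumption (A9) then gives $\bar H_N=H_N(\theta^\star)+o_p(1)$, and (A2) together with (A4) gives $H_N(\theta^\star)\to H^\star$ with $H^\star$ positive definite.

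Second step: extract the $k$-th block. Rearranging,
\[
\bar H_{kk}(\hat\pi_k-\pi_k^\star)=\nabla_{\pi_k}\mathcal L_N(\hat\theta_N)-\nabla_{\pi_k}\mathcal L_N(\theta^\star)-\sum_{k'\neq k}\bar H_{kk'}(\hat\pi_{k'}-\pi_{k'}^\star).
\]
The three terms on the right are bounded as follows.
\begin{itemize}
\item The first term is at most $\varepsilon_N$ by (A5).
\item For the second term, population stationarity gives $\mathbb E[g_k(\theta^\star)]=0$. The per-occurrence scores are bounded and weakly dependent by (A3), (A7) and (A8). A concentration bound over the $N_k$ occurrences then yields $O_p(N_k^{-1/2})$ for the normalized average, so the term is $O_p\big(\frac{N_k}{N}N_k^{-1/2}\big)$.
\item The third term is the cross-block coupling. (A6) makes $\|\bar H_{kk'}\|$ small, and (A7) bounds how much the estimation error of neighbouring codewords can feed into block $k$.
\end{itemize}

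Third step: invert the diagonal block. Because the occurrences of $k$ contribute curvature proportionally, $\bar H_{kk}=\frac{N_k}{N}\tilde H_{kk}$, where (A4) bounds $\lambda_{\min}(\tilde H_{kk})$ away from zero. Dividing through, the factor $N_k/N$ cancels in the score term and leaves $O_p(N_k^{-1/2})$. In the stationarity term it produces $\frac{N}{N_k}\varepsilon_N$. Since $N_k=\Theta_p(N)$, this is $O_p(\varepsilon_N)$.

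The main obstacle is the coupling term. Its naive bound is $\sum_{k'}\|\bar H_{kk'}\|\,\|\hat\pi_{k'}-\pi_{k'}^\star\|$, which uses the global rate $O_p(N^{-1/2})+O_p(\varepsilon_N)$ from \cref{thm:local_consistency}. I would handle it in two ways.
\begin{itemize}
\item \emph{Crude bound.} Under $N_k=\Theta_p(N)$ we have $N^{-1/2}\lesssim N_k^{-1/2}$. The coupling term is therefore absorbed into the main term as long as (A6) makes the total off-diagonal mass relative to $\lambda_{\min}(\tilde H_{kk})$ bounded, which is a block diagonal-dominance condition.
\item \emph{Finer bound.} Stack the block equations into the system $(D+O)\Delta=r$, with $D$ the diagonal blocks and $O$ the off-diagonal blocks. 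If (A6) gives $\|D^{-1}O\|<1$ in a blockwise $\ell_\infty$ operator norm, a Neumann series expansion yields $\Delta_k=\sum_{m\ge0}[(-D^{-1}O)^mD^{-1}r]_k$. The blockwise bounds on $D^{-1}r$ from the first three steps then propagate to $\Delta_k$ with only a constant-factor loss.
\end{itemize}
Getting this contraction argument to hold uniformly, with random Hessians replaced by their limits via (A9), is the delicate part of the proof.
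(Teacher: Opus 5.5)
Your proof is correct, with the crude bound doing the real work, but it handles the cross-block coupling differently from the paper.

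The paper first replaces $\tilde\theta_N$ by $\theta^\star$ using (A9). It then splits $\theta=(\pi_k,\theta_{-k})$ and eliminates $\Delta_{-k}$ exactly through the Schur complement:
\[
\Delta_k=-S_{k,N}^{-1}\bigl(g_k-R_{k,N}\bigr)+S_{k,N}^{-1}\bigl(r_k-H_{k,-k}H_{-k,-k}^{-1}r_{-k}\bigr),\qquad R_{k,N}=H_{k,-k}H_{-k,-k}^{-1}g_{-k}.
\]
The coupling therefore enters only through the other blocks' \emph{scores} $g_{-k}$, not through their estimation errors. (A6) gives $\|S_{k,N}^{-1}\|_2=O_p(N/N_k)$, and (A7)--(A8) give $\|R_{k,N}\|_2=O_p\bigl(N_k/(N\sqrt{N})\bigr)$.

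You instead keep $\sum_{k'\neq k}\bar H_{kk'}\Delta_{k'}$ and bound it with the global rate from Theorem~\ref{thm:local_consistency}. Both proofs finish the same way: an $O_p(N^{-1/2})$ term (your coupling term, the paper's leakage term) is absorbed into $O_p(N_k^{-1/2})$.

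Your route makes it plain that the block analysis is not needed for the stated bound. Since $N_k\le wN$, we have $N^{-1/2}\le\sqrt{w}\,N_k^{-1/2}$. The claim then follows directly from $\|\pi_k-\pi_k^\star\|_2\le\|\hat\theta_N-\theta^\star\|_2$ and Theorem~\ref{thm:local_consistency}, without (A6), (A7) or (A9).

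What the paper's decomposition buys is an explicit separation of the block's own sampling noise $S_{k,N}^{-1}g_k$ from leakage. That is the form a genuinely frequency-dependent result (e.g.\ with $N_k/N\to0$) would need. Under the theorem's hypotheses, however, it yields nothing beyond Theorem~\ref{thm:local_consistency}.

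Two corrections to your coupling step. First, the crude bound needs only $\|\bar H_{kk}^{-1}\|_{\mathrm{op}}\,\|\bar H_{k,-k}\|_{\mathrm{op}}=O_p(1)$. This follows from (A2), (A4) and fixed dimension, since principal submatrices inherit the eigenvalue bound $\underline\lambda/2$ w.p.\ $\to1$. No diagonal dominance is involved.

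Second, the Neumann-series refinement needs $\|D^{-1}O\|<1$, and (A6) does not supply it. (A6) is a positive-semidefinite domination $S_k\succeq cH_{kk}$, which in fixed dimension already follows from positive definiteness of the Hessian. Jacobi-type contraction, by contrast, can fail for positive definite matrices. For example, take $K\ge3$ equicorrelated scalar blocks with correlation $\rho\in(1/(K-1),1)$: the matrix is positive definite, yet $D^{-1}O$ has spectral radius $(K-1)\rho>1$.

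So the step you flag as delicate is both unnecessary and unsupported by (A1)--(A9). Drop it and let the crude bound carry the proof.
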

\begin{proof}
Proof provided in \cref{sec:proof_thm2} %
\end{proof}
The result is pointwise in $k$, meaning that the stated rate applies to each active codeword individually and does not imply uniform convergence across the entire codebook. 
This result provides theoretical insight into why we can learn reliable calibration maps for rare, isolated regions (rare combinations of indices) as long as their constituent codewords are observed frequently enough elsewhere. %

\paragraph{Learning procedure.}

In practice, we propose a two-stage procedure (shown in \cref{alg:appendix_calibration} in the Appendix) to decouple representation learning from calibration:
\begin{enumerate}
    \item \textit{Quantization-Aware Representation Learning:} We freeze the backbone encoder and train the codebook $\mathcal{C}$ and a quantization-aware classification head. Codebooks are updated via Exponential Moving Average (EMA) to capture the density of the latent space~\citep{van2017neural}.
    \item \textit{Region-Aware Calibration:} We freeze the codebook and quantized classifier, then optimize the calibration parameters $(\mathcal{A}, \mathcal{B}, \boldsymbol\sigma^2)$ by minimizing the negative log-likelihood (Cross-Entropy) of the calibrated posterior.
\end{enumerate}

The output is a codebook \(\mathcal C\) defining the tessellation of the latent space and a calibrated predictor \(f_{\mathrm{cal}}\).

\section{Experimental Evaluation}
\label{sec:experimental-evaluation}

In our experiments, we address the following questions:

\begin{itemize}
    \item \textbf{Q1:} Does our method improve over existing baselines on \textit{local calibration} metrics?
    \item \textbf{Q2:} How does our method perform on low-support regions, where calibration is hardest?
    \item \textbf{Q3:} Is our method competitive with existing baselines on \textit{global calibration} metrics?
\end{itemize}

The code \footnote{The code was developed at the University of Trento.} can be found at ~\url{https://github.com/Cesbar99/Divide-et-Calibra}.

\subsection{Experimental Settings}

\textbf{Datasets.}

We evaluate the research questions over the same image multiclass datasets used by \citet{DBLP:journals/corr/Barbera25}, \ie classic benchmark data \texttt{cifar10}, \texttt{cifar100}~\citep{krizhevsky2009learning}, and real-world medical data \texttt{tissuemnist} from the \texttt{MedMNIST} collection~\citep{DBLP:conf/isbi/YangSN21,yang2023medmnist,ljosa2012annotated}. %
We also evaluate our approach on tabular data, using the \texttt{Weather} dataset~\citep{DBLP:conf/nips/MalininBGGGCNPP21}; for space reasons, tabular results are reported in \cref{app:tabular}, where we observe consistent improvements in local calibration and competitive global performance. %

\textbf{Methods and Architectures.} We evaluate our approach (\VQ{}) against both global and local calibrators. We consider publicly available calibration baselines, including Temperature Scaling (\TS{})~\citep{DBLP:conf/icml/GuoPSW17}, Isotonic Regression (\IR{})~\citep{DBLP:conf/kdd/ZadroznyE02}, and Platt Scaling (\PS{})~\citep{platt1999probabilistic}. In addition, we compare with multiclass parametric models, \ie Dirichlet Calibration (\DC{})~\citep{DBLP:conf/nips/KullPKFSF19} and Structured Matrix Scaling (\SM{})~\citep{DBLP:journals/corr/abs-2511-03685}, and local approaches such as KCal (\KC{})~\citep{DBLP:conf/iclr/LinT023}, Local Nets (\LN{})~\citep{DBLP:journals/corr/Barbera25} and ProCal (\PC{})~\citep{DBLP:conf/nips/XiongDKW0XH23}. Finally, we include the base model when no calibration is performed (\NC{}). 

For image data, we report results using \texttt{ResNet} architectures in the main paper, for a direct comparison with the \citet{DBLP:journals/corr/Barbera25} benchmark. Nonetheless, results are consistent across different backbones and we provide values for  \texttt{ConvNeXt} \citep{liu2022convnet} and \texttt{ViT} \citep{dosovitskiy2020vit} architectures in \cref{sec:convnext_results} and \cref{sec:vit_results}, respectively.
For all experiments, we set the number of slots $w=64$ and codebook size $|\mathcal{C}|=64$ for all VQ methods. Still, our proposal is robust to this hyper-parameters choice, showing good performance across a wide range of values. We provide results when varying $w$ and $|\mathcal{C}|$ in \cref{sec:codewordsandslots}. Moreover, we fix $\mathrm{diag}(\boldsymbol{\sigma}^2) = \mathbf I$ to avoid over-fitting due to limited calibration sets. Indeed, over-fitting is a well-known issue in post-hoc calibration, where simpler models and stronger regularization are often preferred \cite{DBLP:journals/corr/abs-2511-03685}. Our choice yields a regularized model while still preserving the compositional structure induced by slot ordering.  We detail all other hyperparameters in \cref{sec:experimental_details}. %

\textbf{Metrics.}
We evaluate local calibration using both $LCE$ and $MLCE$ (see \cref{sec:notions_metrics}) as the main metrics of interest.
We assess global calibration using $ECCE$ and the Negative Log-Likelihood ($NLL$), which is a proper scoring rule and captures both calibration and predictive performance.
Moreover, we report in \cref{app:extraRes} also $ECE$ and accuracy $(ACC)$ for all datasets and baselines.

\textbf{Experimental Setup.}
For image datasets, we consider the \citet{DBLP:journals/corr/Barbera25} data splits, \ie $(a)$ a training set; $(b)$ a calibration set (with an internal $90\%/10\%$ calibration/validation split); and $(c)$ a held-out test set used exclusively for evaluation. 
Then, we $(i)$ train classifiers on the training set; $(ii)$ train calibration methods on the calibration set; $(iii)$ compute the metrics defined above on the test set. We repeat this procedure using $5$ different seeds and average results.
We provide further implementation details in \cref{sec:experimental_details}. 

\textbf{Additional Experiments.}
We include in the Appendix further ablations for the components of our calibration pipeline (confirming quantization as the primary source of improvement, with consistent gains from our compositional parameterization; see \cref{sec:ablate_roles}) and for the calibration set size (showing consistent performance; see \cref{sec:ablate_calsize}). Finally, in~\cref{sec:usage_stats} we report codeword usage statistics, showing good codebook utilizations and lack of unused codewords.

\subsection{Experimental Results}
\textbf{Q1: \VQ{} outperforms all baselines in local calibration metrics.} \Cref{fig:Q1} shows the results for local calibration metrics, \ie $LCE$ and $MLCE$. %

Regarding $LCE$, our approach outperforms all existing baselines on all datasets: on \cifarTen{}, \VQ{} reaches $LCE\approx 0.0059\pm 0.0002$, obtaining a $\approx25\%$ drop with respect to \LN{}, the closest competitor ($LCE \approx 0.0079\pm 0.0003$); on \cifar{}, \VQ{} obtains $LCE\approx 0.0017\pm 0.0001$, a $\approx 29\%$ drop compared to \LN{} (the second-best with $LCE \approx 0.0024 \pm 0.0001$); on \tissue{}, \VQ{}'s $LCE$ is $\approx 0.0083\pm 0.0002$, which is a $\approx 60\%$ drop with respect to \LN{} ($LCE\approx 0.0208\pm 0.0007$). %

Regarding $MLCE$, we see similar results, with \VQ{} achieving the best results on \cifarTen{} and \cifar{} and ranking second best on \tissue{}.

Overall, results suggest that our approach outperforms competitors on local calibration metrics.

\begin{figure*}[t]
        \includegraphics[width=\linewidth]{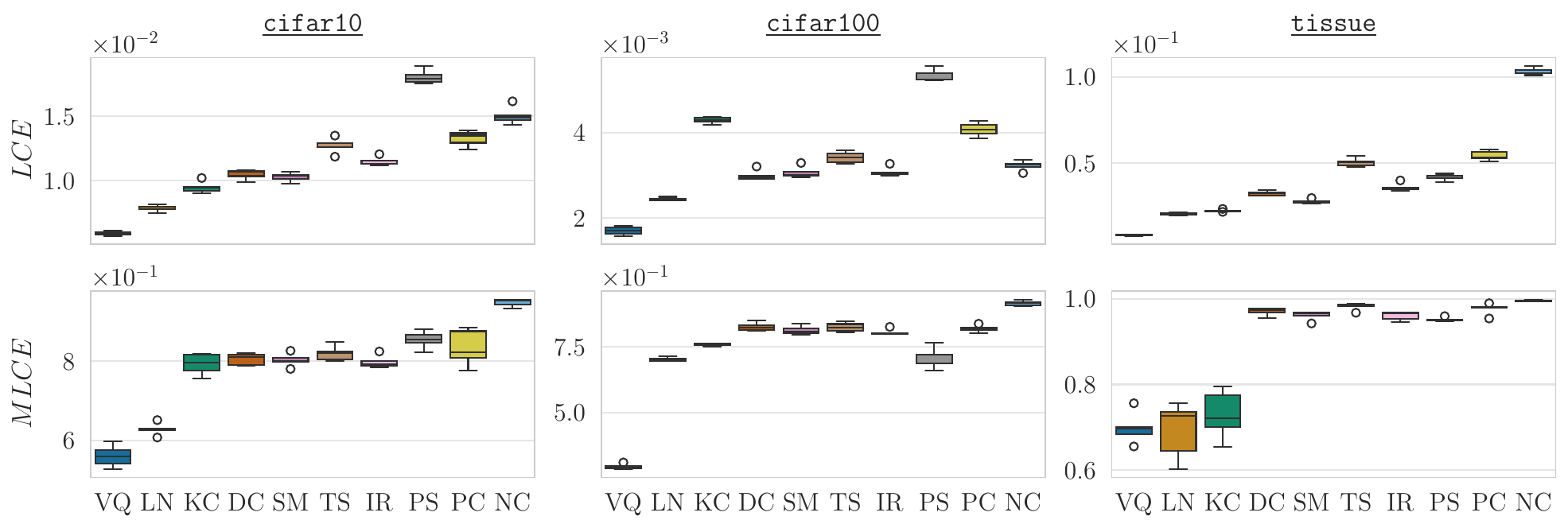}
    \caption{Local calibration metrics over five runs (\texttt{ResNet} backbone). The lower the better.}
    \label{fig:Q1}
\end{figure*}

\textbf{Q2: \VQ{} obtains most pronounced gains in low support regions.}
To better understand the behaviour of calibration under data sparsity, we analyze local calibration error as a function of the Effective Sample Size ($ESS$) of each point’s neighbourhood. Specifically, we partition samples into quantile-based bins according to their $ESS$, computed from kernel weights in the representation space, and report the average absolute $LCE$ within each bin (\cref{fig:Q2}).

We observe similar trends for all datasets: local calibration error decreases as the effective sample size increases. This behaviour is expected, as points with low $ESS$ belong to low-density regions, which are harder to correct.
Importantly, while all methods exhibit this trend, our \VQ{}-based approach consistently achieves lower $LCE$ across the entire $ESS$ spectrum, with the most pronounced gains in low-support regions. This suggests that \VQ{} effectively mitigates the impact of data sparsity.
We attribute this behaviour to the parameter sharing induced by vector quantization: although individual Voronoi regions may be sparsely populated, their constituent codewords are reused across many samples.
Interestingly, this empirical observation is consistent with our theoretical analysis (\cref{sec:stability}), where the estimation error depends on codeword frequency rather than region density.

Overall, these findings highlight that our method reduces the sensitivity of calibration error to data sparsity, enabling reliable calibration even in low-density regions of the representation space.

\begin{figure*}[t]
        \includegraphics[width=\linewidth]{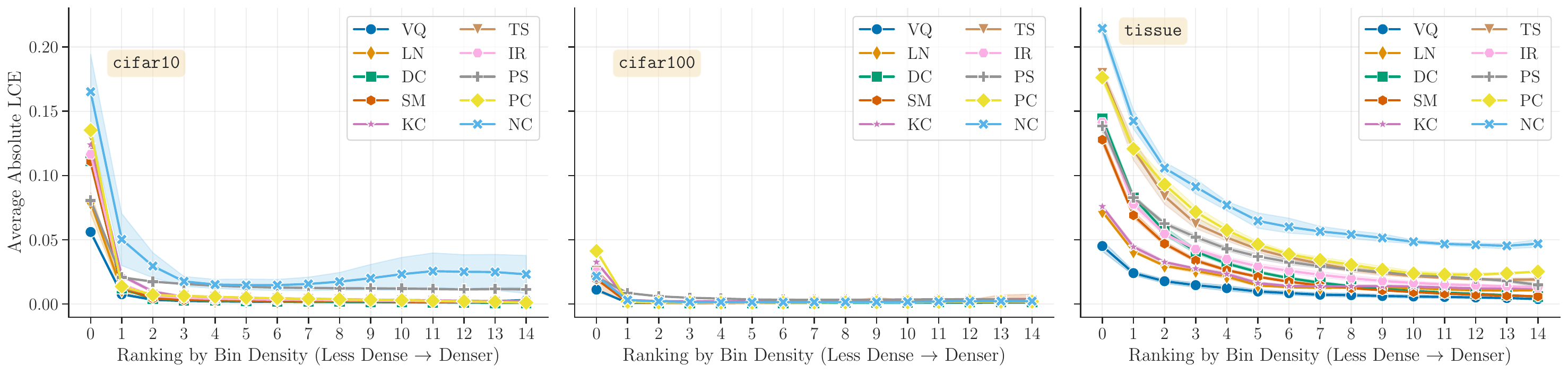}
    \caption{{Local calibration in density-based sub-bins over five runs (\texttt{ResNet} backbone).}} 
    \label{fig:Q2}
\end{figure*}

\textbf{Q3: \VQ{} reaches comparable results on other metrics.}
\cref{fig:Q3} reports $ECCE$ and $NLL$ for all methods and datasets ($ECE$ and $ACC$ are reported in \cref{tab:allResults} of the appendix).

Regarding $ECCE$, our approach improves upon other local approaches, such as \KC{}, \LN{} and \PC{}, on \cifarTen{} and \tissue, while performing similarly on \cifar{}.
Similarly, when compared to \SM{}, the strongest global calibration method, \VQ{} achieves comparable results on \cifar{} while closely following on the remaining datasets (see \cref{tab:allResults} for detailed results).
For $NLL$, \VQ{} improves over the base model on and performs comparably to the strongest competitors overall.
As intended, \VQ{} delivers significant improvements on local calibration metrics, with its clearest advantages on sparse regions. On global calibration and predictive metrics, it remains competitive with strong baselines, indicating that local gains are not obtained at the cost of severe global degradation.

\begin{figure*}[t]
        \includegraphics[width=\linewidth]{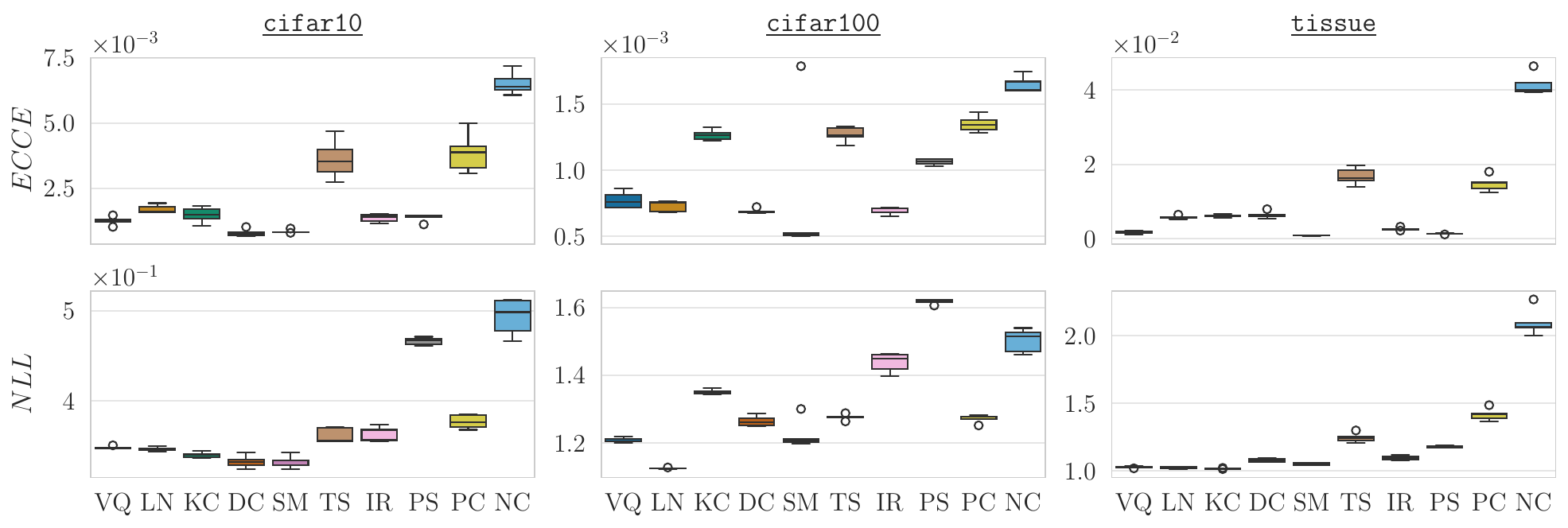}
    \caption{Global calibration metrics over five runs (\texttt{ResNet} backbone). The lower the better.} %
    \label{fig:Q3}
\end{figure*}

\section{Related Work}

\textbf{Global Calibration.} Standard post-hoc calibration involves learning a single, global map that transforms the uncalibrated outputs of a model into calibrated probabilities. Parametric methods such as \textit{Platt Scaling}~\citep{platt1999probabilistic}, \textit{Temperature Scaling}~\citep{DBLP:conf/icml/GuoPSW17}, and \textit{Beta Calibration}~\citep{kull2017beta} are widely used. These methods generally learn a monotonic function to rescale the logits (or probabilities), preserving the rank ordering of classes. Non-parametric approaches, such as \textit{Isotonic Regression}~\citep{DBLP:conf/kdd/ZadroznyE02} and \textit{Histogram Binning}~\citep{DBLP:conf/icml/ZadroznyE01}, offer greater flexibility but require significantly more data to estimate the calibration map, as they rely on having sufficient samples in each bin to compute reliable statistics.
In the multiclass setting, \textit{Dirichlet Calibration} (DC)~\citep{DBLP:conf/nips/KullPKFSF19} generalizes the Beta calibration framework to the probability simplex, allowing for interactions between classes. Recently, \citet{DBLP:journals/corr/abs-2511-03685} introduced an effective regularization scheme showing promising results in mitigating over-fitting.

\textbf{Local Calibration.}
To address the limitations of global maps, \textit{local calibration} methods make the correction function dependent on the input features or embeddings, unlike global post-hoc methods applied to model scores. Approaches like \textit{Local Temperature Scaling}~\citep{ding2021local}, \textit{KCal}~\citep{DBLP:conf/iclr/LinT023}, \textit{LECE} \citep{valk2023assuming} and \textit{ProCal} \citep{DBLP:conf/nips/XiongDKW0XH23} typically employ kernel density estimation or nearest-neighbor lookups to estimate local error rates. More recently, \textit{Local Nets}~\citep{DBLP:journals/corr/Barbera25} proposed using a secondary neural network to predict calibrated probabilities dynamically.
Our VQ-based approach differs from these techniques by replacing continuous density estimation with Voronoi tessellations. This allows to pool statistics within learned regions (codewords), ensuring that the local calibration maps are statistically stable.

\textbf{Multicalibration.}
Another calibration notion, known as \textit{Multicalibration}, requires the model to be calibrated not just on the entire population, but on virtually any sub-population identifiable by a specific hypothesis class~\citep{hebert2018multicalibration,jung2021moment,baldeschi2025multicalibration,jin2025discretization,perini2025mcgrad}.
Thus, the goal of multicalibration approaches is to ensure that ML models do \textit{not} make biased predictions on any of these sub-populations~\citep{globus2023multicalibrated,noarov2023statistical}.
While related, multicalibration differs from our setting in two key aspects. First, it is mostly studied for binary/regression tasks, whereas we address \emph{multiclass} calibration and the complexities of the probability simplex. Second, it identifies semantically meaningful subgroups (\eg \texttt{User Age $>$ 18}) using expressive features in tabular data. %
In contrast, our approach operates in the high-dimensional representation space of neural networks, where dimensions are abstract and ``auditing'' for subgroups via standard multicalibration techniques is computationally intractable.

\section{Conclusions}

\textbf{Conclusions.} In this work, we tackle the trade-off between stable but coarse global calibration techniques and flexible but high-variance local calibration approaches.
We proposed a compositional approach to local multiclass calibration, instantiated via vector quantization.
Empirical evidence shows improvements over existing baselines in terms of local calibration metrics, while retaining comparable results on global calibration and classification metrics. %

\textbf{Limitations and Future Work.} 
\label{sec:limitations}
Our method relies on a frozen representation space, and therefore its success depends on the extent to which that space captures locality relevant to miscalibration. If latent features are poorly aligned with the calibration structure or the pre-trained model is already well-calibrated, the induced Voronoi partition may be less effective. 

Moreover, our theory is local and conditional on regularity assumptions, stable quantization assignments, and convergence to a well-behaved stationary point; extending it to joint end-to-end learning via recent VQ differentiable methods~\citep{WArno26} or growing-codebook regimes is left for future work.  

We further stress that, while VQ consistently improves local calibration empirically, it does not uniformly outperform strong global baselines on ECCE or NLL, and its advantages are clearest in low-support in-distribution regions. This suggests that a promising direction is to design calibration models that explicitly learn when to apply local corrections and when to rely on global adjustments.

Additionally, while our empirical evaluation includes multiple datasets, baselines, and architectures, it remains focused on classification settings where a frozen representation space is available. Extending the study to additional modalities, larger-scale tasks, and more diverse forms of distribution shift is an important direction for future work.

In conclusion, better-calibrated probabilities matter because they turn model scores into information that people can act on:
investigating how reliable probability estimates interact with downstream decision-making and uncertainty-aware guarantees is a fundamental direction for future research.

\section*{Acknowledgements}

The research of Cesare Barbera, Giovanni De Toni, Andrea Passerini, and Andrea Pugnana was supported by the  Horizon Europe Programme \#101120763-TANGO.
The research of Giovanni De Toni was also supported by EU-horizon grant \#10112\-0237-ELIAS.
Funded by the European Union. Views and opinions expressed are however those of the author(s) only and do not necessarily reflect those of the European Union or the European Health and Digital Executive Agency (HaDEA). Neither the European Union nor the granting authority can be held responsible for them.

The research of Cesare Barbera, Giovanni De Toni, Andrea Passerini, and Andrea Pugnana was also supported by Ministero delle Imprese e del Made in Italy (IPCEI Cloud DM 27 giugno 2022 – IPCEI-CL-0000007), PNRR-M4C2-Investimento 1.3, Partenariato Esteso PE00000013-“FAIR-Future Artificial Intelligence Research”, funded by the European Commission under the NextGeneration EU programme.

\bibliography{bibliography}
\bibliographystyle{unsrtnat}

\clearpage

\onecolumn
\appendix
\crefname{appendix}{Appendix}{Appendices}
\Crefname{appendix}{Appendix}{Appendices}

\crefalias{section}{appendix}
\crefalias{subsection}{appendix}
\crefalias{subsubsection}{appendix}
\section{Proofs}
\label{app:proofs}

\newcommand{\dN}{\Delta_N}
\newcommand{\Htilde}{\nabla^2\mathcal{L}(\tilde{\theta})}
\newcommand{\Hstar}{\nabla^2\mathcal{L}(\theta^{\star})}
\newcommand{\gtilde}{\nabla\mathcal{L}(\tilde \theta)}
\newcommand{\gstar}{\nabla\mathcal{L}(\theta^{\star})}
\newcommand{\ghat}{\nabla\mathcal{L}(\hat{\theta}_N)}

\subsection{Proof of \texorpdfstring{\cref{prop:global-minimizer-euclidean-space}}{Proposition 1}} %
\label{sec:prop1_proof}
\begin{proof}
It suffices to show that:
\[
\min_{\mbq \in \mathcal{Q}} \| \mbz - \mbq \|^2 = \sum_{i=1}^{w} \min_{j \in \{1, \dots, |\mathcal C|\}} \| \mbz^{(i)} - \mbc_j \|^2. \notag
\]
Take any index vector \(\mathbf{s}\),
\begin{align}
\| \mbz - \mbq_{\mathbf{s}} \|^2
= \big\| ( \mbz^{(1)} - \mbc_{s(1)}, \dots, \mbz^{(w)} - \mbc_{s(w)} ) \big\|^2 = \sum_{i=1}^{w} \| \mbz^{(i)} - \mbc_{s(i)} \|^2. \notag
\end{align} %
Because the Euclidean norm is additive over the \(w\) blocks, minimizing over all vectors \(\mathbf s\) separates into \(w\) independent minimizations, each solved by:
\[
s^\star(i) = \arg\min_j \| \mbz^{(i)} - \mbc_j \|^2.\notag
\]
The vector $\mathbf{s}^\star$ containing those indices
therefore corresponds to the flattened codeword $\mbq^\star$ that attains the global minimum.
\end{proof}

\subsection{Proof of \texorpdfstring{\cref{prop:log_linear}}{Proposition 2}} %
\label{sec:proof_prop2}
\begin{proof}
Conditioned on the true label \(y=j\) and the Voronoi cell \(\mathcal V\), we assume the model's output probability vector \(\mathbf{\hat p}\) is distributed according to a Dirichlet:
\[
\mathbf{\hat{p}} \mid (y=j,\,\mcV) \ \sim\ \operatorname{Dir}\bigl(\boldsymbol\alpha^{(j, \mcV)}\bigr),
\]
where $\boldsymbol\alpha^{(j, \mcV)}$ encodes the model's belief structure over classes conditioned on the true label and the cell. %
Then the Dirichlet density is
\[
p\bigl(\mathbf{\hat p} \mid y=j, \mcV\bigr)
= \frac{1}{B\bigl(\boldsymbol\alpha^{(j, \mcV)}\bigr)}
\prod_{i=1}^{|\mcY|} \mathbf{\hat p}_{i}^{\boldsymbol\alpha^{(j, \mcV)}_{i}-1},
\]
where \(B(\cdot)\) is the multivariate Beta function.
Assuming a (possibly cell-dependent) prior \(\pi_{j\mid \mcV} = p(y=j\mid \mcV)\), the posterior over labels given the observed \(\mathbf{\hat p}\) and cell \(\mcV\) is
\begin{align}
p(y=j \mid \mathbf{\hat p}, \mcV)
& \propto \pi_{j\mid \mcV}\; p\bigl(\mathbf{\hat p} \mid y=j, \mcV\bigr) \nonumber\\
&= \pi_{j\mid \mcV}\; \frac{1}{B\bigl(\boldsymbol\alpha^{(j, \mcV)}\bigr)}
\prod_{i=1}^{|\mcY|} \mathbf{\hat p}_{i}^{\boldsymbol\alpha^{(j,\mcV)}_{i}-1}.\label{eqn:posterior-raw}
\end{align}

\textbf{Log-linear calibration form.}
Taking logs in \eqref{eqn:posterior-raw} gives
\begin{align}\label{eq:log-post}
\log p(y=j \mid \mathbf{\hat p}, \mcV) &= \log \pi_{j\mid \mcV} - \log B\bigl(\boldsymbol\alpha^{(j, \mcV)}\bigr)
+ \sum_{i=1}^{|\mcY|} \bigl(\boldsymbol\alpha^{(j, \mcV)}_{i}-1\bigr)\log \mathbf{\hat p}_{i} + \mathrm{const},
\end{align}
where the additive constant enforces normalization across \(j\).
The calibration bias and weight vector for \(j\) and \(\mcV\) are
\[
\mathbf b_{j, \mcV} = \log \pi_{j\mid \mcV} - \log B\bigl(\boldsymbol\alpha^{(j, \mcV)}\bigr)
\quad
\mathbf w_{j, \mcV}=\boldsymbol\alpha^{(j, \mcV)}- \mathbf{1}
\]
where \(\mathbf{1}\in\mathbb{R}^{|\mcY|}\) is the all-ones vector. Then, \eqref{eq:log-post} simplifies:
\begin{equation*}
    \log p(y=j \mid \mathbf{\hat p}, \mcV) = \mathbf b_{j, \mcV} + \mathbf w_{j, \mcV}^{\top}\log\mathbf{\hat p} \;+\; \mathrm{const}.
\end{equation*} %

\end{proof}

\subsection{Assumptions for \texorpdfstring{\cref{thm:local_consistency} and \cref{thm:convergence_rates}}{Theorems 1 and 2}} 
\label{ap:assumptions}

In the following we discuss the assumptions underlying our statistical results. 
Assumptions (A1)--(A5) are standard local regularity conditions for M-estimators and Z-estimators, see \eg \citep{van2000asymptotic, white1982maximum}. They ensure well-defined population gradients and Hessians, local identifiability of the statistical target, uniform convergence of the empirical objective and curvature in a neighborhood of the solution, and stability under approximate stationarity of the optimization procedure. 

Assumptions (A6)--(A7) impose a weak blockwise coupling condition through Schur dominance of the population Hessian. Intuitively, this rules out pathological over-parameterization in which changes in one block can be arbitrarily compensated by others, and is realistic in compositional models where codewords contribute additively and appear with heterogeneous frequencies. %

Assumption (A8) collects mild moment and dependence conditions needed to control empirical gradients and Hessians via standard concentration and law-of-large-numbers arguments. 

Finally, Assumption (A9) imposes a local Lipschitz continuity condition on the empirical Hessian. 
This is a standard smoothness requirement ensuring that curvature does not vary too abruptly within the neighborhood \(\Theta_{\rm local}\). 
Such conditions are mild and are typically satisfied when the loss is sufficiently smooth in a local neighborhood of the optimum.

Throughout, we work in a local fixed-dimension regime (with $|\mathcal C|, w$ treated as fixed and independent of $N$), and all results are local in nature, characterizing statistical behavior once optimization reaches the basin of a well-specified solution. \Cref{thm:convergence_rates} additionally conditions on the global scale parameters $\boldsymbol\sigma^2$ being locally stable within the same basin of attraction as the pseudo-true parameter $\theta^\star$ to isolate the statistical behavior of the compositional codebook parameters. Extending the analysis beyond this assumption is out of the scope of this work. %
More precisely, we assume there exists a neighborhood $\mathcal N(\theta^\star)$ and a convex compact set
$\Theta_{\mathrm{local}}\subset \mathcal N(\theta^\star)$ with $\theta^\star \in \Theta_{\text{local}}$ such that the following holds:

\begin{enumerate}[label=(A\arabic*)]

\item \textbf{Smoothness and exchangeability of expectation.} The loss $\mathcal{L}(z;\theta) $ is twice continuously differentiable in a neighborhood of $\theta^\star$, with derivatives dominated by integrable envelopes so that differentiation may be interchanged with expectation for both the gradient and Hessian.

\item \textbf{Uniform convergence.}
$\sup_{\theta\in\Theta_{\mathrm{local}}}|\mathcal L_N(\theta)-\mathcal L(\theta)|\xrightarrow{p}0$, \(
\sup_{\theta\in\Theta_{\mathrm{local}}}
\big\|
\nabla\mathcal L_N(\theta)-\nabla\mathcal L(\theta)
\big\|_2
\xrightarrow{p}0
\) and
\(
\sup_{\theta\in\Theta_{\mathrm{local}}}
\big\|
\nabla^2\mathcal L_N(\theta)-\nabla^2\mathcal L(\theta)
\big\|_{\text{op}}
\xrightarrow{p}0 .
\)

\item \textbf{Uniform stochastic equicontinuity and gradient concentration.}
In addition to uniform convergence of the Hessian, assume
\[
\|\nabla L_N(\theta^\star)-\nabla L(\theta^\star)\|_2 = O_p(N^{-1/2}).
\]

\item \textbf{Local curvature.}
$\theta^\star$ is the unique minimizer of $\mathcal L$ on $\Theta_{\mathrm{local}}$, and
$\nabla^2\mathcal L(\theta^\star)$ is positive definite.
In particular, there exists $\underline\lambda>0$ such that
\[
\lambda_{\min}\big(\nabla^2\mathcal L(\theta)\big)\ge \underline\lambda
\qquad \forall \; \theta\in\Theta_{\mathrm{local}}.
\]
Hence, for all large $N$,
$\nabla^2\mathcal L_N(\theta)$ is invertible on $\Theta_{\mathrm{local}}$ with
$\sup_{\theta\in\Theta_{\mathrm{local}}}\|\nabla^2\mathcal L_N(\theta)^{-1}\|_{\text{op}}=O_p(1)$.

\textit{Remark.} In our bilinear calibration parameterization, the unregularized loss can be
invariant under the rescaling $(\mathbf A,\mathbf B)\mapsto(t\mathbf A,\mathbf B/t)$, which violates strict identifiability and
can introduce a flat direction (zero curvature). In practice, this is easily addressed by adding
a small quadratic regularizer (weight decay), or equivalently by fixing a normalization constraint,
which restores local strong convexity for the resulting objective in the neighborhood of the optimum. %

\item \textbf{Approximate stationarity.}
The optimization output satisfies
$\|\nabla \mathcal L_N(\hat\theta_N)\|_2 \le \varepsilon_N$ with
$\varepsilon_N\to 0$ in probability, and $\hat\theta_N\in\Theta_{\mathrm{local}}$ w.p.\ $\to 1$.

\item \textbf{Blockwise Schur dominance (weak coupling).}
For each codeword block $\pi_k$, let $S_k$ denote the population Schur complement
\[
S_k
:=
\nabla^2_{\pi_k \pi_k}\mathcal L(\theta^\star)
-
\nabla^2_{\pi_k,-k}\mathcal L(\theta^\star)
\big[\nabla^2_{-k,-k}\mathcal L(\theta^\star)\big]^{-1}
\nabla^2_{-k,\pi_k}\mathcal L(\theta^\star).
\]
Assume there exists $c\in(0,1]$ such that for all $k$,
\[
S_k \succeq c\,\nabla^2_{\pi_k \pi_k}\mathcal L(\theta^\star).
\]
It guarantees that the effective curvature along $\pi_k$ remains non-degenerate even after accounting for interactions with other parameters.
\item \textbf{Bounded conditional cross-block influence.}
Let $\bar H_{k,-k}
:=
\mathbb{E}\!\left[
H^{(k)}_{n,i,-k}(\theta^\star)
\;\middle|\;
s_n(i)=k
\right]$ be the expected Hessian cross-terms conditional on codeword appearance. We assume there exists $M \in \mathbb{R}_{>0}$ such that:
\[
\sup_k \left\| \bar H_{k,-k} \big[\nabla^2_{-k,-k}\mathcal L(\theta^\star)\big]^{-1} \right\|_{\text{op}} < M < \infty.
\] %
This ensures that while codewords may interact, the strength of cross-block curvature contributed by each occurrence of a codeword is uniformly bounded relative to the global curvature of the remaining parameters.
\item \textbf{Moment and dependence conditions.}
For each block $k$, the per-occurrence gradient contributions
$\{g^{(k)}_{n,i}(\theta^\star) : (n,i)\in\mathcal I_k\}$ are mean-zero and satisfy
$\sup_{k}\mathbb E\|g^{(k)}_{n,i}(\theta^\star)\|_2^2<\infty$.

Similarly, let $H^{(k)}_{n,i}(\theta^\star)$ denote the per-occurrence Hessian
contribution to the $(\pi_k,\pi_k)$ block and let $H^{(k)}_{n,i,-k}(\theta^\star)$ denote
the corresponding per-occurrence contribution to the $(\pi_k,\theta_{-k})$ cross-block.
Assume the uniform moment bounds
\(
\sup_{k}\mathbb E\|H^{(k)}_{n,i}(\theta^\star)\|_{\mathrm{op}}<\infty\) and \(
\sup_{k}\mathbb E\|H^{(k)}_{n,i,-k}(\theta^\star)\|_{\mathrm{op}}<\infty.
\)
Conditional on the codeword assignment -- which may be data-dependent but is fixed during calibration and independent of $N$ -- the empirical averages of these per-occurrence contributions concentrate around their conditional expectations (\ie a conditional LLN with $\sqrt{N_k}$-rate concentration holds).

\item \textbf{Local Hessian Lipschitzness.}
There exists \(L_H<\infty\) such that, with probability tending to one,
\[
\sup_{\theta,\theta'\in\Theta_{\rm local}}
\frac{
\|\nabla^2 L_N(\theta)-\nabla^2 L_N(\theta')\|_{\rm op}
}{
\|\theta-\theta'\|_2
}
\leq L_H .
\]

\end{enumerate}

\subsection{Proof of \texorpdfstring{\cref{thm:local_consistency}}{Theorem 1}}%
\label{sec:proof_thm1}

\begin{paragraph}{Statement:}
Let $\mathcal L_N(\theta)$ be the empirical cross-entropy loss and $\Theta_{\text{local}}$ be a compact convex neighbourhood of a population local minimizer $\theta^\star$ (the pseudo-true parameter). Let $\hat\theta_N \in \Theta_{\text{local}}$ be an approximate stationary point satisfying $\|\nabla \mathcal L_N(\hat\theta_N)\|_2 \le \varepsilon_N$, where $\varepsilon_N \xrightarrow{p} 0$. Under the regularity assumptions (A1)--(A5), the estimator converges in probability to the population minimizer: %
\[
\|\hat\theta_N-\theta^\star\|_2
=
O_p(N^{-1/2}) + O_p(\varepsilon_N).
\]
In particular, if \(\varepsilon_N=o_p(1)\), then \(\hat\theta_N \xrightarrow{p}\theta^\star\).
\end{paragraph}

\begin{proof}

In the following proof, $\|\cdot\|_2$ denotes the Euclidean norm for vectors, and $\|\mathbf A\|_{\mathrm{op}} := \sup_{\|x\|_2=1} \|\mathbf A x\|_2$ denotes the induced operator (spectral) norm for matrices.

Let $\Delta_N := \hat\theta_N - \theta^\star$.
By a first-order Taylor expansion of the empirical gradient around $\theta^\star$,
there exists a point $\tilde\theta_N \in \Theta_{\text{local}}$ on the line segment joining $\hat\theta_N$
and $\theta^\star$ such that
\begin{equation}
\label{eq:taylor_grad}
\nabla \mathcal L_N(\hat\theta_N)
=
\nabla \mathcal L_N(\theta^\star)
+
\nabla^2 \mathcal L_N(\tilde\theta_N)\,\Delta_N .
\end{equation}

Rearranging~\eqref{eq:taylor_grad} yields the exact identity
\begin{align}
\label{eq:delta_expand}
\Delta_N
&=
\left[\nabla^2 \mathcal L_N(\tilde\theta_N)\right]^{-1}
\left(
\nabla \mathcal L_N(\hat\theta_N)
-
\nabla \mathcal L_N(\theta^\star)
\right) =\\
&- \left[\nabla^2 \mathcal L_N(\tilde\theta_N)\right]^{-1}\nabla \mathcal L_N(\theta^\star)
+
\left[\nabla^2 \mathcal L_N(\tilde\theta_N)\right]^{-1}\nabla \mathcal L_N(\hat\theta_N)
\end{align}

By \(\nabla L(\theta^\star)=0\) and (A3): %
\[
\|\nabla L_N(\theta^\star)\|_2
=
\|\nabla L_N(\theta^\star)-\nabla L(\theta^\star)\|_2
=
O_p(N^{-1/2}).
\]
By (A4) and uniform Hessian convergence (A2):
\[
\big\|\nabla^2\mathcal L_N(\tilde \theta_N)^{-1}\big\|_2\le\sup_{\theta\in\Theta_{\mathrm{local}}}\big\|\nabla^2\mathcal L_N(\theta)^{-1}\big\|_2 = O_p(1).
\] %
Combining:
\[
\|\Delta_N\|_2 \le O_p(1)\cdot O_p(N^{-1/2}) + O_p(1)\cdot\varepsilon_N = O_p(N^{-1/2})+O_p(\varepsilon_N)
\]
if \(\varepsilon_N=o_p(1)\), this  proves $\hat \theta_N \to_p \theta^\star$. %

To interpret $\theta^\star$, we write the population risk as the expected negative log-likelihood:
\[
\mathcal L(\theta)=\mathbb E_{P_{\mathrm{true}}}\!\left[-\log p_\theta(\mby\mid \mbx)\right].
\]
Then, for each $\mbx$, the conditional cross-entropy decomposes as
\[
\mathbb E\!\left[-\log p_\theta(\mby\mid \mbx)\right]
=
H\!\left(P_{\mathrm{true}}(\cdot\mid \mbx)\right)
+
D_{\mathrm{KL}}\!\left(P_{\mathrm{true}}(\cdot\mid \mbx)\,\|\,p_\theta(\cdot\mid \mbx)\right),
\]
where the entropy term $H$ does not depend on $\theta$. Taking expectation over $\mbx$ yields
\[
\mathcal L(\theta)
= \text{const} + \mathbb E_X\!\left[
D_{\mathrm{KL}}\!\left(P_{\mathrm{true}}(\cdot\mid \mbx)\,\|\,p_\theta(\cdot\mid \mbx)\right)
\right].
\]
Hence $\theta^\star$ minimizes the (conditional) Kullback--Leibler divergence within
$\Theta_{\mathrm{local}}$, \ie it is the pseudo-true parameter \citep{white1982maximum}. %

\end{proof}

\subsection{Proof of \texorpdfstring{\cref{thm:convergence_rates}}{Theorem 2}}%
\label{sec:proof_thm2}
\textbf{Statement:} Under assumptions (A1)--(A9), conditioning on the global scale parameter $\boldsymbol\sigma^2$ and on the (fixed) codeword assignments induced by quantization, for any active codebook parameter $\pi_k \in \{\mathcal A \cup \mathcal B\}$ with effective occurrence count $N_k = \Theta_{p}(N) $ (pointwise in $k$), we have: %
\begin{equation*}
\| \pi_k -  \pi_k^\star\|_2 = O_p\!\left(N_k^{-1/2}\right) + O_p\!\left(\varepsilon_N\right) 
\end{equation*}

\begin{proof}
In the following proof, $\|\cdot\|_2$ denotes the Euclidean norm for vectors, and $\|\mathbf A\|_{\mathrm{op}} := \sup_{\|x\|_2=1} \|\mathbf A x\|_2$ denotes the induced operator (spectral) norm for matrices.

\textbf{Step 1: Replacing $\tilde\theta_N$ by $\theta^\star$.} Building on \cref{eq:taylor_grad}:

\[
\nabla^2\mathcal{L}_N(\tilde{\theta})\dN = \nabla\mathcal{L}_N(\hat{\theta})-\nabla\mathcal{L}_N(\theta^\star) =
\nabla\mathcal{L}_N(\hat{\theta}) -\nabla\mathcal{L}_N(\theta^\star)+\nabla^2\mathcal{L}_N(\theta^\star) \dN - \nabla^2\mathcal{L}_N(\theta^\star) \dN,
\]
where we obtain the second equality by adding and subtracting $\nabla^2\mathcal{L}_N(\theta^\star) \dN$.
By rearranging, we obtain that
\begin{align}
\nabla^2\mathcal{L}_N(\theta^\star) \dN = \nabla^2\mathcal{L}_N(\tilde{\theta})\dN + (\nabla^2\mathcal{L}_N(\theta^\star)\dN -\nabla^2\mathcal{L}_N(\tilde{\theta})\dN) = \\
-\nabla\mathcal{L}_N(\theta^\star) + \nabla\mathcal{L}_N(\hat{\theta}) + (\nabla^2\mathcal{L}_N(\theta^\star)-\nabla^2\mathcal{L}_N(\tilde{\theta}))\dN
\end{align}

Let us call $r_N := \nabla\mathcal{L}_N(\hat{\theta}) + (\nabla^2\mathcal{L}_N(\theta^\star)-\nabla^2\mathcal{L}_N(\tilde{\theta}))\dN$.

By (A5) we have $\|\nabla\mathcal{L}_N(\hat{\theta})\|_2 \le \varepsilon_N$. %
By (A9) (local Lipschitz continuity of the Hessian), we have
\[
\|\nabla^2\mathcal{L}_N(\theta^\star)-\nabla^2\mathcal{L}_N(\tilde{\theta})\|_{\rm op}
\le
L_H \|\tilde\theta_N - \theta^\star\|_2.
\]
Since $\tilde\theta_N$ lies on the segment between $\hat\theta_N$ and $\theta^\star$,
\[
\|\tilde\theta_N - \theta^\star\|_2 \le \|\Delta_N\|_2.
\]
Therefore,
\[
\|\nabla^2\mathcal{L}_N(\theta^\star)-\nabla^2\mathcal{L}_N(\tilde{\theta})\|_{\rm op}
\le
L_H \|\Delta_N\|_2.
\]
and therefore
\[
\|(\nabla^2\mathcal{L}_N(\theta^\star)-\nabla^2\mathcal{L}_N(\tilde{\theta}))\Delta_N\|_2
\le
\|\nabla^2\mathcal{L}_N(\theta^\star)-\nabla^2\mathcal{L}_N(\tilde{\theta})\|_{\rm op}\|\Delta_N\|_2
\le
L_H \|\Delta_N\|_2^2
\]
By Theorem~\ref{thm:local_consistency},
\[
\|\Delta_N\|_2
=
O_p(N^{-1/2}) + O_p(\varepsilon_N)
\]
hence
\[
\|\Delta_N\|_2^2
=
O_p\!\left((N^{-1/2}+\varepsilon_N)^2\right)
\]
Finally,
\[
\|r_N\|_2 \le \|\nabla\mathcal{L}_N(\hat{\theta}_N)\|_2 + \|(\nabla^2\mathcal L_N(\theta^\star)-\nabla^2\mathcal L_N(\tilde{\theta}))\Delta_N\|_2 \le \varepsilon_N + O_p\!\left((N^{-1/2}+\varepsilon_N)^2\right).
\]

\textbf{Step 2}: Codeword-frequency scaling. Consider the linearized system at $\theta^\star$ with optimization residual %
\begin{equation}
\label{eq:newton_system_residual}
\nabla^2 \mathcal L_N(\theta^\star)\,\Delta_N
=
-\nabla \mathcal L_N(\theta^\star) + r_N,
\qquad
\Delta_N := \hat\theta_N - \theta^\star,
\qquad
\|r_N\|_2\le \varepsilon_N + O_p\!\left((N^{-1/2}+\varepsilon_N)^2\right)
\end{equation}
To simplify notation, in this paragraph block Hessians $H$ and gradients $g$ are evaluated at $\theta^\star$ and, together with residuals, are empirical unless stated otherwise. Outside of this paragraph these quantities will instead feature the $N$ subscript to signal empirical.
Now partition $\theta^\star=(\pi_k,\theta_{-k})$ and write the gradient, residual, and Hessian
in block form:
\[
\Delta_N =
\begin{pmatrix}
\Delta_k\\
\Delta_{-k}
\end{pmatrix},
\; \;
\nabla \mathcal L_N(\theta^\star)=
\begin{pmatrix}
g_k\\
g_{-k}
\end{pmatrix},
\; \;
r_N=
\begin{pmatrix}
r_k\\
r_{-k}
\end{pmatrix},
\; \;
\nabla^2 \mathcal L_N(\theta^\star)=
\begin{pmatrix}
H_{kk} & H_{k,-k}\\
H_{-k,k} & H_{-k,-k}
\end{pmatrix},
\]
where $g_k:=\nabla_{\pi_k}\mathcal L_N(\theta^\star)$ and
$H_{kk}:=\nabla^2_{\pi_k \pi_k}\mathcal L_N(\theta^\star)$.
Equation~\eqref{eq:newton_system_residual} is equivalent to
\begin{align}
\label{eq:block_eq1_res}
H_{kk}\Delta_k + H_{k,-k}\Delta_{-k} &= -g_k + r_k,\\
\label{eq:block_eq2_res}
H_{-k,k}\Delta_k + H_{-k,-k}\Delta_{-k} &= -g_{-k} + r_{-k}.
\end{align}
Under (A4) and (A2), for large $N$ the principal submatrix $H_{-k,-k}$ is also invertible $\text{w.p.} \to 1$ and we solve~\eqref{eq:block_eq2_res} for $\Delta_{-k}$:
\begin{equation}
\label{eq:delta_minus_k_res}
\Delta_{-k}
=
- H_{-k,-k}^{-1} g_{-k}
+ H_{-k,-k}^{-1} r_{-k}
- H_{-k,-k}^{-1} H_{-k,k}\Delta_k .
\end{equation}
Substituting~\eqref{eq:delta_minus_k_res} into~\eqref{eq:block_eq1_res} yields
\[
\Big(
H_{kk} - H_{k,-k}H_{-k,-k}^{-1}H_{-k,k}
\Big)\Delta_k
=
- g_k
+ H_{k,-k}H_{-k,-k}^{-1} g_{-k}
+ r_k
- H_{k,-k}H_{-k,-k}^{-1} r_{-k}.
\]
Define the empirical Schur complement and leakage term
\[
S_{k,N}:=
H_{kk} - H_{k,-k}H_{-k,-k}^{-1}H_{-k,k},
\qquad
R_{k,N}:=
H_{k,-k}H_{-k,-k}^{-1} g_{-k}.
\]
Then
\begin{equation}
\label{eq:schur_solution_residual}
\Delta_k
=
- S_{k,N}^{-1}\Big(g_k - R_{k,N}\Big)
+
S_{k,N}^{-1}\Big(r_k - H_{k,-k}H_{-k,-k}^{-1}r_{-k}\Big).
\end{equation}
Here, for large $N$, the empirical Schur complement $S_{k,N}$ is invertible
with probability tending to one, since $S_{k,N}\to_p S_k$ by (A2) and
$S_k$ is positive definite by (A6). %

\textbf{Step 3: Gradient scaling.}
Let $\mathcal I_k := \{(n,i) : s_n(i)=k\}$ denote the set of sample--slot pairs in
which codeword $k$ appears, with $|\mathcal I_k| = N_k$, where $N$ is the number of samples and $w$ the number of slots. %
The empirical gradient block with respect to $\pi_k$ admits the decomposition
\begin{equation}
\label{eq:vk_grad}
\nabla_{\pi_k} \mathcal L_N(\theta^\star)
=
\frac{1}{N}
\sum_{n=1}^N \sum_{i=1}^w \mathbf 1\{s_n(i)=k\}
g^{(k)}_{n,i}(\theta^\star),
\end{equation}
By (A8) and a standard Chebyshev bound for empirical averages, %
\begin{equation}
\label{eq:vk_grad_rate1}
\|\nabla_{\pi_k} \mathcal L_N(\theta^\star)\|_2
=
O_p\!\left(\frac{\sqrt{N_k}}{N}\right).
\end{equation}

\textbf{Step 4: Hessian scaling.}
Similarly, the corresponding Hessian block satisfies
\begin{equation}
\label{eq:vk_hess}
\nabla^2_{\pi_k \pi_k} \mathcal L_N(\theta^\star)
=
\frac{1}{N}
\sum_{n=1}^N \sum_{i=1}^w \mathbf 1\{s_n(i)=k\}
H^{(k)}_{n,i}(\theta^\star),
\end{equation}
with $\bar H_k := \mathbb E[H^{(k)}_{n,i}(\theta^\star)\mid s_n(i)=k]$,
by the law of large numbers (as $N\to\infty$), the empirical Hessian is:
\begin{equation}
\label{eq:vk_hess_rate}
\nabla^2_{\pi_k \pi_k} \mathcal L_N(\theta^\star)
=
\frac{N_k}{N}\Big(\bar H_k + o_p(1)\Big),
\end{equation}
and for any active codeword $k$ with $N_k = \Theta_p(N)$, the corresponding population Hessian block satisfies:
\[
\nabla^2_{\pi_k \pi_k} \mathcal L(\theta^\star)
=
w\cdot P(s = k)\,\bar H_k .
\]
Since $\nabla^2 \mathcal L(\theta^\star)$ is positive definite by Assumption~(A4), every principal submatrix is positive definite, and therefore $\bar H_k \succ 0$ for all active codewords and: %
\begin{equation}
\label{eq:vk_hess_inv1}
\left[\nabla^2_{\pi_k \pi_k} \mathcal L_N(\theta^\star)\right]^{-1}
= \frac{N}{N_k}\,\bar H_k^{-1}
+
o_p\!\left(\frac{N}{N_k}\right).
\end{equation}

\textbf{Step 5: Express in terms of the Schur complement.}
Consider again the linearized system induced by the Hessian at $\theta^\star$:
\begin{equation}
\label{eq:start_bound}
\nabla^2\mathcal L_N(\theta^\star)\,\Delta_N
=
-\nabla\mathcal L_N(\theta^\star) + r_N,
\qquad \|r_N\|_2\le \varepsilon_N + O_p\!\left((N^{-1/2}+\varepsilon_N)^2\right).
\end{equation}
Taking the $\pi_k$-block and solving via the Schur complement gives \eqref{eq:schur_solution_residual}:
\[
\pi_k - \pi_k^\star
=
- S_{k,N}^{-1}\Big(\nabla_{\pi_k}\mathcal L_N(\theta^\star) - R_{k,N}\Big)
+ S_{k,N}^{-1}\Big(r_k - H_{k,-k}H_{-k,-k}^{-1}r_{-k}\Big),
\]
we take norms and apply the triangle inequality:
\begin{align}
\label{eq:vk_norm_bound}
\|\pi_k - \pi_k^\star\|_2
&\le
\|S_{k,N}^{-1}\|_2\,
\Big(
\|\nabla_{\pi_k}\mathcal L_N(\theta^\star)\|_2
+
\|R_{k,N}\|_2
\Big)
+
\|S_{k,N}^{-1}\|_2\,\big\|r_k - H_{k,-k}H_{-k,-k}^{-1}r_{-k}\big\|_2
\end{align} %

\textbf{Step 6: Controlling the Schur inverse.}
By block-wise identifiability the (population) Schur complement
\[
S_k
:=
\nabla^2_{\pi_k \pi_k}\mathcal L(\theta^\star)
-
\nabla^2_{\pi_k,-k}\mathcal L(\theta^\star)\,
\big[\nabla^2_{-k,-k}\mathcal L(\theta^\star)\big]^{-1}\,
\nabla^2_{-k,\pi_k}\mathcal L(\theta^\star)
\]
satisfies
\begin{equation}
\label{eq:decouple_domination}
S_k \succeq c\,H_{kk},
\qquad
H_{kk}:=\nabla^2_{\pi_k \pi_k}\mathcal L(\theta^\star).
\end{equation}
Then $S_k$ is positive definite and:
\begin{equation}
\label{eq:schur_inv_bound_pop}
\|S_k^{-1}\|_2 \le \frac{1}{c}\,\|H_{kk}^{-1}\|_2.
\end{equation}

(A2) implies $S_{k,N}\to_p S_k$ and $H_{kk,N}\to_p H_{kk}$ in operator norm with probability tending to one. (A4) implies $H_{-k,-k}$ is invertible, hence $H_{kk,N}$ also is, in probability. Therefore, for $N$ large enough, the inequality holds, %
\begin{equation}
\label{eq:schur_inv_vs_diag}
\|S_{k,N}^{-1}\|_2
\le
\frac{2}{c}\,\|H_{kk,N}^{-1}\|_2,
\qquad\text{and hence}\qquad
\|S_{k,N}^{-1}\|_2 = O_p\!\left(\|H_{kk,N}^{-1}\|_2\right).
\end{equation}

Using~\eqref{eq:vk_hess_inv1}, we have
\begin{equation}
\label{eq:hess_inv_rate_explicit}
\|H_{kk,N}^{-1}\|_2
=
\left\|\frac{N}{N_k}\,\bar H_k^{-1}
+
o_p\!\left(\frac{N}{N_k}\right)\right\|_2,
\qquad\text{and therefore}\qquad
\|S_{k,N}^{-1}\|_2
=
O_p\!\left(\frac{N}{N_k}\right).
\end{equation}

\textbf{Step 7: Controlling the gradient and leakage terms.}
By~\eqref{eq:vk_grad_rate1}, the intrinsic gradient contribution satisfies
\begin{equation}
\label{eq:vk_grad_rate_explicit}
\|\nabla_{\pi_k}\mathcal L_N(\theta^\star)\|_2
=
O_p\!\left(\frac{\sqrt{N_k}}{N}\right).
\end{equation}

We now bound the leakage term
\[
R_{k,N}
=
H_{k,-k,N}\,H_{-k,-k,N}^{-1}\,\nabla_{\theta_{-k}}\mathcal L_N(\theta^\star).
\]
The empirical cross-block Hessian admits the decomposition
\[
H_{k,-k,N}
=
\frac{1}{N}
\sum_{(n,i)\in\mathcal I_k}
H^{(k)}_{n,i,-k}(\theta^\star)
=
\frac{N_k}{N}\big(\bar H_{k,-k}+o_p(1)\big),
\]
where the convergence follows from a law of large numbers (A8) conditional on
$s_n(i)=k$. By (A7) and boundedness of $\|\nabla^2_{-k,-k}\mathcal L(\theta^\star)\|_{\mathrm{op}}$,
we have $\|\bar H_{k,-k}\|_{\mathrm{op}}=O(1)$ uniformly in $k$, and therefore
$\|H_{k,-k,N}\|_{\mathrm{op}} = O_p(N_k/N)$.
By (A4), for large $N$
$\|H_{-k,-k,N}^{-1}\|_{\mathrm{op}}=O_p(1)$. %
Moreover, since $\theta^\star$ minimizes the population risk,
$\nabla_{\theta_{-k}}\mathcal L_N(\theta^\star)$ is a mean-zero empirical average
with finite second moments (with $\nabla\mathcal L(\theta^\star)=0$ and exchangeability by (A1)), and hence
\[
\|\nabla_{\theta_{-k}}\mathcal L_N(\theta^\star)\|_2
=
O_p\!\left(\frac{1}{\sqrt {N}}\right).
\]

Combining these bounds yields
\begin{equation}
\label{eq:Rk_rate_explicit}
\|R_{k,N}\|_2
\le
\|H_{k,-k,N}\|_{\mathrm{op}}\,
\|H_{-k,-k,N}^{-1}\|_{\mathrm{op}}\,
\|\nabla_{\theta_{-k}}\mathcal L_N(\theta^\star)\|_2
=
O_p\!\left(\frac{N_k}{N\sqrt {N}}\right).
\end{equation}

\textbf{Step 8: Controlling the residuals.}
From Step 1,
\[
\|r_N\|_2
\le
\varepsilon_N
+
O_p\!\left((N^{-1/2}+\varepsilon_N)^2\right).
\]
Moreover,
\[
\|H_{k,-k,N}H_{-k,-k,N}^{-1}\|_2=O_p(1)
\]
by (A7). Hence, %
\[
\Big\|r_{k,N}
-
H_{k,-k,N}H_{-k,-k,N}^{-1}r_{-k,N}
\Big\|_2
=
O_p(\varepsilon_N)
+
O_p\!\left((N^{-1/2}+\varepsilon_N)^2\right).
\]
Therefore,
\[
\begin{aligned}
\Big\|S_{k,N}^{-1}
\big(
r_{k,N}
-
H_{k,-k,N}H_{-k,-k,N}^{-1}r_{-k,N}
\big)
\Big\|_2 =
O_p\!\left(
\|S_{k,N}^{-1}\|_2
\left[
\varepsilon_N
+
(N^{-1/2}+\varepsilon_N)^2
\right]
\right).
\end{aligned}
\]
Since \(\|S_{k,N}^{-1}\|_2=O_p(N/N_k)\), this becomes
\[
O_p\!\left(
\frac{N}{N_k}\varepsilon_N
\right)
+
O_p\!\left(
\frac{N}{N_k}(N^{-1/2}+\varepsilon_N)^2
\right).
\]
Finally, since \(N_k=\Theta_p(N)\), we have \(N/N_k=O_p(1)\), and therefore
\begin{equation}
\label{eq:residuals}
\Big\|S_{k,N}^{-1}
\big(
r_{k,N}
-
H_{k,-k,N}H_{-k,-k,N}^{-1}r_{-k,N}
\big)
\Big\|_2
=
O_p(\varepsilon_N)
+
O_p\!\left((N^{-1/2}+\varepsilon_N)^2\right).
\end{equation}

\textbf{Step 9: Combining the bounds.}
Substituting~\eqref{eq:hess_inv_rate_explicit}, 
\eqref{eq:vk_grad_rate_explicit}, 
\eqref{eq:Rk_rate_explicit}, and~\eqref{eq:residuals} 
into~\eqref{eq:vk_norm_bound} yields
\begin{align*}
\|\pi_k - \pi_k^\star\|_2
&\le
O_p\!\left(\frac{N}{N_k}\right)
\left[
O_p\!\left(\frac{\sqrt{N_k}}{N}\right)
+
O_p\!\left(\frac{N_k}{N\sqrt{N}}\right)
\right] \\
&\quad
+
O_p(\varepsilon_N)
+
O_p\!\left((N^{-1/2}+\varepsilon_N)^2\right) \\
&=
O_p\!\left(\frac{1}{\sqrt{N_k}}\right)
+
O_p\!\left(\frac{1}{\sqrt{N}}\right)
+
O_p(\varepsilon_N)
+
O_p\!\left((N^{-1/2}+\varepsilon_N)^2\right).
\end{align*}

Since \(N_k=\Theta_p(N)\) pointwise in \(k\), we have
\[
O_p(N^{-1/2}) = O_p(N_k^{-1/2}).
\]
Therefore,
\begin{align}
\label{eq:vk_final_rate_explicit}
\|\pi_k - \pi_k^\star\|_2
&=
O_p\!\left(N_k^{-1/2}\right)
+
O_p(\varepsilon_N)
+
O_p\!\left((N^{-1/2}+\varepsilon_N)^2\right)
\end{align}
Since \(N_k=\Theta_p(N)\) and \(\varepsilon_N=o_p(1)\), the second-order term
\(O_p((N^{-1/2}+\varepsilon_N)^2)\) is absorbed into
\(O_p(N_k^{-1/2})+O_p(\varepsilon_N)\).
This concludes the proof of the codeword-frequency scaling claim.

\end{proof}

\section{Metrics}
\label{sec:metrics}

We consider both \textit{global} and \textit{local} metrics to assess the calibration of a classifier.

\textbf{Global metrics.} Expected Calibration Error (ECE)~\citep{DBLP:conf/aaai/NaeiniCH15} is a standard metric for binary calibration, measuring the discrepancy between predicted confidence ($\text{conf}(\cdot)$) and empirical accuracy ($\text{acc}(\cdot)$) across confidence bins:
\[
    ECE = \sum_{b=1}^{m_b} \frac{|B_b|}{n} \left| \text{acc}(B_b) - \text{conf}(B_b) \right|
\]
where $B_b$ is the set of instances in the $b$-th bin and $m_b$ the number of bins.
In multiclass settings, this idea extends to Class-wise ECE \citep{DBLP:conf/nips/KullPKFSF19}, which averages per-class ECEs:
\[
\textit{Class-wise }ECE = \frac{1}{|\mcY|} \sum_{c=1}^{|\mcY|} ECE_c
\]
A more stable metric is Expected Cumulative Calibration Error (ECCE)~\citep{DBLP:journals/jmlr/IbarraGTTX22} which instead aggregates calibration errors cumulatively across bins, providing a more robust global assessment:
\[
\mathrm{ECCE}_c
=
\sum_{b=1}^{m_b}
\left|
\sum_{i=1}^{b}
\frac{|B_{i,c}|}{n}
\left(
\mathrm{freq}_c(B_i)-\mathrm{conf}_c(B_i)
\right)
\right|
\]
where $\mathrm{freq}_c(B_i)$ and $\mathrm{conf}_c(B_i)$ denote, respectively,
the empirical frequency and the average predicted probability of class $c$
within bin $B_i$.

\textbf{Local metrics.} Regarding local calibration metrics, we consider the multiclass extension of \textit{Local Calibration Error}~\citep{DBLP:conf/uai/LuoBBZWXSESP22} proposed by~\citep{DBLP:journals/corr/Barbera25}: %
    \[
    LCE = \frac1{|\mcY|}\sum_{b=1}^{m_b}
    \frac{1}{n}\sum_{i\in B_b}\left\|\frac{\sum_{j\in B_b} \bigl(\hat{\mathbf{p}}_{j} - \mathbf{y}_j \bigr)\, k_\gamma(\mbx_i, \mbx_j)}
     {\sum_{j \in B_b} k_\gamma(\mbx_i, \mbx_j)}\right\|_1
    \]
    where $\|\cdot\|_1$ is the $\ell^1$ norm and $k_\gamma(\mbx_i, \mbx_j)$ is a kernel function that weights the influence of neighboring points of the anchor $\mbx_i$ on its individual $LCE$ score. In practice, this metric captures the differences in the predicted probabilities and the corresponding ground truths for neighbours of an anchor point $\mbx_i$. %

Moreover, we also consider the \textit{Maximum Local Calibration Error}, defined as the worst-case per-anchor LCE residual:
    \[
    MLCE = \max_{\mbx_i \in D}\left\|\frac{\sum_{j\in B_{b(i)}} \bigl(\hat{\mathbf{p}}_{j} - \mathbf{y}_j \bigr)\, k_\gamma(\mbx_i, \mbx_j)}
     {\sum_{j \in B_{b(i)}} k_\gamma(\mbx_i, \mbx_j)}\right\|_1
    \]
Intuitively, this metric captures the largest local calibration error the ML model can make, making it particularly relevant in high-stakes settings. %

\section{Additional Experiments}
\label{app:extraRes}

\subsection{Tabular data}
\label{app:tabular}
\textbf{Setup.} We evaluate our method on the \texttt{Weather} dataset, a tabular benchmark for multi-class classification. For tabular data, We leverage the same data-splitting criterion of image data \ie $(a)$ a training set ($\approx 100k$ samples); $(b)$ we use the development set as a calibration set ($\approx 40k$ samples) with an internal $90\%/10\%$ calibration/validation split; and $(c)$ a held-out test set used exclusively for evaluation ($\approx 100k$ samples). Due to the extreme class imbalance we restrict the problem to $5$ classes which present a similar imbalance to \tissue{}. 
As the underlying predictor, we use an FT-Transformer (\texttt{FTT}) model \cite{DBLP:conf/nips/GorishniyRKB21}, a state-of-the-art neural network architecture for tabular data. 

\begin{center}
\begin{table*}[t]
\scriptsize
\caption{Results on the \texttt{Weather} dataset with an FT-Transformer (FTT) classifier.} %
\label{tab:results_weather} %
\begin{tabular}{c@{\hspace{7pt}} c@{\hspace{7pt}} c@{\hspace{7pt}} c@{\hspace{7pt}} c@{\hspace{7pt}} c@{\hspace{7pt}} c@{\hspace{7pt}} c}
\toprule
Data & Method & $LCE\,\downarrow$ & $MLCE\,\downarrow$ & $ECCE\,\downarrow$ & $ECE\,\downarrow$ & $ACC\,\uparrow$ & $NLL\,\downarrow$ \\
\midrule

\multirow{10}{*}{\rotatebox{90}{\texttt{weather}}}
& \cellcolor{blue!10}$\textsc{VQ}$ & \cellcolor{blue!10}$\mathbf{.0053\pm .0007}$ & \cellcolor{blue!10}$.1782\pm .0344$ & \cellcolor{blue!10}$.0010\pm .0004$ & \cellcolor{blue!10}$.0030\pm .0007$ & \cellcolor{blue!10}$.603\pm .001$ & \cellcolor{blue!10}$.943\pm .003$ \\
\cline{2-8}
& \cellcolor{gray!10}$\textsc{DC}$ & \cellcolor{gray!10}$.0221\pm .0018$ & \cellcolor{gray!10}$.4509\pm .0102$ & \cellcolor{gray!10}$\underline{.0007\pm .0001}$ & \cellcolor{gray!10}$.0022\pm .0001$ & \cellcolor{gray!10}$.607\pm .002$ & \cellcolor{gray!10}$\underline{.933\pm .003}$ \\
& $\textsc{KC}$ & $\underline{.0101\pm .0007}$ & $\mathbf{.0998\pm .0208}$ & $.0028\pm .0003$ & $.0091\pm .0005$ & $.606\pm .002$ & $.941\pm .002$ \\
& \cellcolor{gray!10}$\textsc{LN}$ & \cellcolor{gray!10}$.0107\pm .0014$ & \cellcolor{gray!10}$\underline{.1021\pm .0138}$ & \cellcolor{gray!10}$.0028\pm .0003$ & \cellcolor{gray!10}$.0100\pm .0015$ & \cellcolor{gray!10}$.605\pm .001$ & \cellcolor{gray!10}$.945\pm .004$ \\
& $\textsc{SM}$ & $.0221\pm .0018$ & $.4495\pm .0105$ & $\mathbf{.0005\pm .0001}$ & $\mathbf{.0015\pm .0002}$ & $\mathbf{.607\pm .001}$ & $\mathbf{.933\pm .003}$ \\
& \cellcolor{gray!10}$\textsc{PS}$ & \cellcolor{gray!10}$.0242\pm .0018$ & \cellcolor{gray!10}$.4379\pm .0104$ & \cellcolor{gray!10}$.0009\pm .0000$ & \cellcolor{gray!10}$.0068\pm .0003$ & \cellcolor{gray!10}$.607\pm .001$ & \cellcolor{gray!10}$.948\pm .003$ \\
& $\textsc{IR}$ & $.0223\pm .0018$ & $.4505\pm .0109$ & $.0006\pm .0001$ & $\underline{.0020\pm .0001}$ & $.606\pm .001$ & $.936\pm .003$ \\
& \cellcolor{gray!10}$\textsc{TS}$ & \cellcolor{gray!10}$.0234\pm .0018$ & \cellcolor{gray!10}$.4429\pm .0145$ & \cellcolor{gray!10}$.0019\pm .0004$ & \cellcolor{gray!10}$.0045\pm .0005$ & \cellcolor{gray!10}$.607\pm .001$ & \cellcolor{gray!10}$.936\pm .003$ \\
& $\textsc{PC}$ & $.0227\pm .0018$ & $.4471\pm .0165$ & $.0013\pm .0004$ & $.0029\pm .0002$ & $.606\pm .001$ & $.935\pm .003$ \\
& \cellcolor{gray!10}$\textsc{NC}$ & \cellcolor{gray!10}$.0226\pm .0017$ & \cellcolor{gray!10}$.4528\pm .0108$ & \cellcolor{gray!10}$.0019\pm .0005$ & \cellcolor{gray!10}$.0047\pm .0004$ & \cellcolor{gray!10}$.607\pm .001$ & \cellcolor{gray!10}$.936\pm .003$ \\

\bottomrule
\end{tabular}

\end{table*}
\end{center}

\textbf{Calibration and performance.} We report results for \texttt{Weather} data in \cref{tab:results_weather}. \VQ{} achieves the best performance in terms of local calibration, obtaining the lowest $LCE$ ($.0053$) and significantly improving over all baselines. While VQ does not achieve the best $MLCE$, it remains competitive and substantially improves over global methods, which exhibit much higher worst-case local errors. Regarding global calibration, VQ attains competitive $ECCE$ ($.0010$), slightly worse than the best-performing global methods (\SM{} and \DC{}), but still within the same order of magnitude. Finally, VQ maintains comparable predictive performance ($ACC$ $\approx .603$) to all baselines.

\textbf{Under data sparsity.} For \texttt{Weather} data, we report in \cref{fig:ess_weather} local calibration error as a function of neighbourhood density, measured via ESS. As expected, all methods exhibit a clear trend: calibration error is highest in low-density regions (low ESS) and decreases as the local support increases. However, VQ consistently achieves the lowest $LCE$ across the entire ESS spectrum, with the most pronounced improvements in the lowest-density bins. In particular, in the sparsest regions (leftmost bins), \VQ{} outperforms all baselines with local methods like \LN{} and \KC{} closely following. As density increases, the gap narrows, and all methods converge to similar performance, indicating that calibration is easier when sufficient local data is available. These results highlight that \VQ{} is particularly effective in mitigating calibration errors in low-support regions, where traditional methods struggle, while remaining competitive in high-density regimes.

\begin{figure*}[t]
\centering
        \includegraphics[width=.6\linewidth]{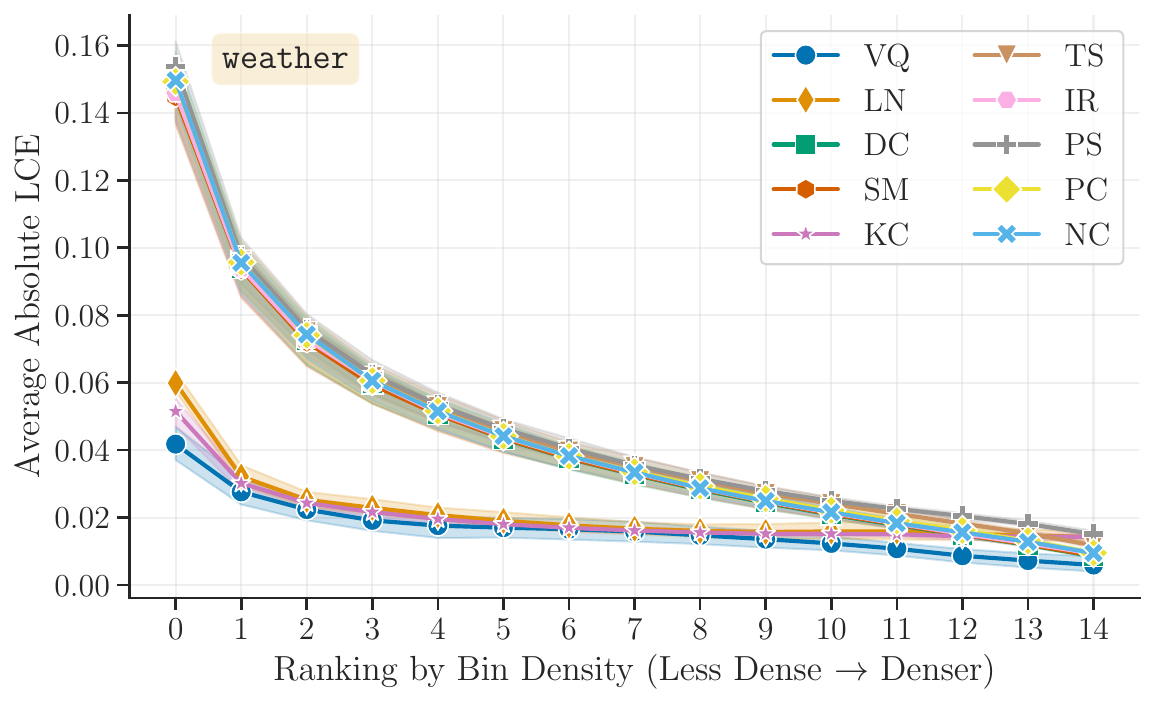}
    \caption{{Local calibration in density-based sub-bins over five runs on \texttt{Weather} (FTT backbone)}.} %
    \label{fig:ess_weather} %
\end{figure*}

\textbf{Out of Distribution.} The \texttt{Weather} dataset naturally supports out-of-distribution (OOD) evaluation, as it provides a dedicated test split containing data from different climatic regions. To assess robustness, we train the base model on in-distribution data and perform calibration using only in-distribution samples for all methods, then evaluate performance on the OOD split. This setting reflects a distribution shift at test time, while preserving the standard calibration protocol.

As expected, calibration performance degrades under distribution shift for all methods. Non-parametric local methods such \KC{} or hybrid methods like \LN{}, which rely on kernel-based or neighborhood adaptations, achieve the best results in this setting, benefiting from their ability to adapt more flexibly to local test-time structure. \VQ{} remains competitive, outperforming all global calibration methods in terms of both $LCE$ and $MLCE$, but does not match the performance of its local competitors.

We emphasize that calibration approaches are designed for in-distribution settings, where the calibration and test distributions coincide. Therefore, we argue OOD experiments should be interpreted as a robustness check rather than a core requirement.
Still, we observe that \VQ{} provides a favourable trade-off, maintaining strong performance under distribution shift while retaining its advantages in the standard in-distribution setting.
More importantly, the method shows consistent improvements in local calibration across the main in-distribution experiments, ablations, and additional architectures.

\begin{center}
\begin{table*}[t]
\scriptsize
\caption{\VQ{} results on OOD \texttt{Weather} data with an FT-Transformer (FTT) classifier.} %
\label{tab:results_weather_ood} %
\begin{tabular}{c@{\hspace{7pt}} c@{\hspace{7pt}} c@{\hspace{7pt}} c@{\hspace{7pt}} c@{\hspace{7pt}} c@{\hspace{7pt}} c@{\hspace{7pt}} c}
\toprule
Data & Method & $LCE\,\downarrow$ & $MLCE\,\downarrow$ & $ECCE\,\downarrow$ & $ECE\,\downarrow$ & $ACC\,\uparrow$ & $NLL\,\downarrow$ \\
\midrule

\multirow{10}{*}{\rotatebox{90}{\texttt{weather}}}
& \cellcolor{blue!10}$\textsc{VQ}$ 
& \cellcolor{blue!10}$.0144\pm .0017$ 
& \cellcolor{blue!10}$.3388\pm .0327$ 
& \cellcolor{blue!10}$.0038\pm .0003$ 
& \cellcolor{blue!10}$.0122\pm .0020$ 
& \cellcolor{blue!10}$.474\pm .005$ 
& \cellcolor{blue!10}$1.175\pm .011$ \\
\cline{2-8}

& \cellcolor{gray!10}$\textsc{LN}$ 
& \cellcolor{gray!10}$\mathbf{.0089\pm .0005}$ 
& \cellcolor{gray!10}$\mathbf{.2029\pm .0699}$ 
& \cellcolor{gray!10}$\mathbf{.0027\pm .0005}$ 
& \cellcolor{gray!10}$\mathbf{.0059\pm .0004}$ 
& \cellcolor{gray!10}$\underline{.478\pm .005}$ 
& \cellcolor{gray!10}$\underline{1.162\pm .008}$ \\

& $\textsc{KC}$ 
& $\underline{.0108\pm .0004}$ 
& $\underline{.2829\pm .0292}$ 
& $\underline{.0028\pm .0004}$ 
& $\underline{.0075\pm .0007}$ 
& $.477\pm .004$ 
& $\mathbf{1.159\pm .007}$ \\

& \cellcolor{gray!10}$\textsc{SM}$ 
& \cellcolor{gray!10}$.0252\pm .0018$ 
& \cellcolor{gray!10}$.4886\pm .0047$ 
& \cellcolor{gray!10}$.0034\pm .0008$ 
& \cellcolor{gray!10}$.0094\pm .0011$ 
& \cellcolor{gray!10}$\mathbf{.481\pm .005}$ 
& \cellcolor{gray!10}$1.164\pm .011$ \\

& $\textsc{DC}$ 
& $.0261\pm .0018$ 
& $.4891\pm .0047$ 
& $.0038\pm .0013$ 
& $.0108\pm .0011$ 
& $.480\pm .005$ 
& $1.167\pm .011$ \\

& \cellcolor{gray!10}$\textsc{PS}$ 
& \cellcolor{gray!10}$.0283\pm .0022$ 
& \cellcolor{gray!10}$.4742\pm .0043$ 
& \cellcolor{gray!10}$.0047\pm .0011$ 
& \cellcolor{gray!10}$.0142\pm .0012$ 
& \cellcolor{gray!10}$.480\pm .006$ 
& \cellcolor{gray!10}$1.175\pm .012$ \\

& $\textsc{TS}$ 
& $.0258\pm .0023$ 
& $.4858\pm .0048$ 
& $.0044\pm .0021$ 
& $.0087\pm .0023$ 
& $.479\pm .006$ 
& $1.163\pm .011$ \\

& \cellcolor{gray!10}$\textsc{IR}$ 
& \cellcolor{gray!10}$.0263\pm .0024$ 
& \cellcolor{gray!10}$.4898\pm .0053$ 
& \cellcolor{gray!10}$.0038\pm .0017$ 
& \cellcolor{gray!10}$.0105\pm .0017$ 
& \cellcolor{gray!10}$.479\pm .006$ 
& \cellcolor{gray!10}$1.168\pm .012$ \\

& $\textsc{PC}$ 
& $.0266\pm .0024$ 
& $.4879\pm .0038$ 
& $.0040\pm .0015$ 
& $.0111\pm .0017$ 
& $.478\pm .006$ 
& $1.169\pm .012$ \\

& \cellcolor{gray!10}$\textsc{NC}$ 
& \cellcolor{gray!10}$.0285\pm .0023$ 
& \cellcolor{gray!10}$.4898\pm .0042$ 
& \cellcolor{gray!10}$.0050\pm .0017$ 
& \cellcolor{gray!10}$.0135\pm .0017$ 
& \cellcolor{gray!10}$.478\pm .006$ 
& \cellcolor{gray!10}$1.174\pm .012$ \\

\bottomrule
\end{tabular}

\end{table*}
\end{center}

\subsection{Ablating the role of quantization and calibration}
\label{sec:ablate_roles}

\begin{table*}[t]
\centering
\scriptsize
\caption{Ablation results for \VQ{} (highlighted in blue) when evaluating variants of our pipeline components.}
\label{tab:ablation} %

\begin{tabular}{c c  c  c  c  c c}
\toprule
Data & Method & $LCE\,\downarrow$ & $MLCE\,\downarrow$ & $ECCE\,\downarrow$ & $ACC\,\uparrow$ & $NLL\,\downarrow$ \\
\midrule

\multirow{5}{*}{\rotatebox{90}{\texttt{cifar10}}} 
& \cellcolor{blue!10}$\textsc{VQ}$ 
& \cellcolor{blue!10}$\mathbf{.0059\pm .0002}$ 
& \cellcolor{blue!10}$\underline{.5595\pm .0276}$ 
& \cellcolor{blue!10}$\underline{.0013\pm .0002}$ 
& \cellcolor{blue!10}$\underline{.889\pm .001}$ &
\cellcolor{blue!10}$\underline{.350\pm .001}$ \\

& \cellcolor{gray!10}$\textsc{VQ-DC}$ 
& \cellcolor{gray!10}$\underline{.0062\pm .0003}$ 
& \cellcolor{gray!10}$\mathbf{.5534\pm .0311}$ 
& \cellcolor{gray!10}$.0014\pm .0004$ 
& \cellcolor{gray!10}$\underline{.889\pm .001}$ &
\cellcolor{gray!10}$\underline{.350\pm .001}$ \\

& $\textsc{VQ-NC}$ 
& $.0065\pm .0002$ 
& $.5586\pm .0253$ 
& $.0022\pm .0003$ 
& $\underline{.889\pm .001}$ &
$.351\pm .001$\\

& \cellcolor{gray!10}$\textsc{DC}$ 
& \cellcolor{gray!10}$.0099\pm .0003$ 
& \cellcolor{gray!10}$.8127\pm .0194$ 
& \cellcolor{gray!10}$\mathbf{.0011\pm .0001}$ 
& \cellcolor{gray!10}$\mathbf{.890\pm .002}$ &
\cellcolor{gray!10}$\mathbf{.332\pm .007}$ \\

& $\textsc{NC}$ 
& $.0154\pm .0005$ 
& $.8729\pm .0235$ 
& $.0066\pm .0004$ 
& $.884\pm .001$ &
$.494\pm .0018$\\

\midrule

\multirow{5}{*}{\rotatebox{90}{\texttt{cifar100}}} 
& \cellcolor{blue!10}$\textsc{VQ}$ 
& \cellcolor{blue!10}$\mathbf{.0017\pm .0001}$ 
& \cellcolor{blue!10}$\mathbf{.2932\pm .0099}$ 
& \cellcolor{blue!10}$\underline{.0008\pm .0001}$ 
& \cellcolor{blue!10}$\underline{.681\pm .001}$ &
\cellcolor{blue!10}$1.237\pm .010$\\

& \cellcolor{gray!10}$\textsc{VQ-DC}$ 
& \cellcolor{gray!10}$\underline{.0020\pm .0001}$ 
& \cellcolor{gray!10}$\underline{.3050\pm .0110}$ 
& \cellcolor{gray!10}$.0012\pm .0001$ 
& \cellcolor{gray!10}$.674\pm .004$ &
\cellcolor{gray!10}$\underline{1.210\pm .005}$\\

& $\textsc{VQ-NC}$ 
& $.0021\pm .0001$ 
& $.3199\pm .0074$ 
& $.0013\pm .0001$ 
& $\underline{.681\pm .001}$ &
$1.227\pm .007$\\

& \cellcolor{gray!10}$\textsc{DC}$ 
& \cellcolor{gray!10}$.0027\pm .0001$ 
& \cellcolor{gray!10}$.8113\pm .0121$ 
& \cellcolor{gray!10}$\mathbf{.0007\pm .0001}$ 
& \cellcolor{gray!10}$\mathbf{.690\pm .002}$ &
\cellcolor{gray!10}$\mathbf{1.154\pm .010}$ \\

& $\textsc{NC}$ 
& $.0032\pm .0001$ 
& $.9155\pm .0104$ 
& $.0017\pm .0001$ 
& $.670\pm .003$ &
$1.502\pm .036$\\
\midrule

\multirow{5}{*}{\rotatebox{90}{\texttt{tissue}}} 
& \cellcolor{blue!10}$\textsc{VQ}$ 
& \cellcolor{blue!10}$\underline{.0088\pm .0004}$ 
& \cellcolor{blue!10}$\mathbf{.5760\pm .0349}$ 
& \cellcolor{blue!10}$\mathbf{.0017\pm .0004}$ 
& \cellcolor{blue!10}$\underline{.624\pm .002}$ &
\cellcolor{blue!10}$\mathbf{1.028\pm .007}$\\

& \cellcolor{gray!10}$\textsc{VQ-DC}$ 
& \cellcolor{gray!10}$\mathbf{.0087\pm .0005}$ 
& \cellcolor{gray!10}$\underline{.6631\pm .0401}$ 
& \cellcolor{gray!10}$\underline{.0023\pm .0005}$ 
& \cellcolor{gray!10}$\underline{.624\pm .001}$ &
\cellcolor{blue!10}$\underline{1.029\pm .007}$\\

& $\textsc{VQ-NC}$ 
& $.0112\pm .0016$ 
& $.6557\pm .0649$ 
& $.0049\pm .0017$ 
& $\mathbf{.625\pm .001}$ &
$1.033\pm .009$\\

& \cellcolor{gray!10}$\textsc{DC}$ 
& \cellcolor{gray!10}$.0284\pm .0013$ 
& \cellcolor{gray!10}$.9613\pm .0097$ 
& \cellcolor{gray!10}$.0065\pm .0009$ 
& \cellcolor{gray!10}$.615\pm .004$ &
\cellcolor{gray!10}$1.079\pm .012$ \\

& $\textsc{NC}$ 
& $.0739\pm .0018$ 
& $.9741\pm .0086$ 
& $.0415\pm .0029$ 
& $.603\pm .008$ &
$2.100\pm .010$ \\

\midrule

\multirow{4}{*}{\rotatebox{90}{\texttt{weather}}} 
& \cellcolor{blue!10}$\textsc{VQ}$ 
& \cellcolor{blue!10}$\mathbf{.0053\pm .0007}$ 
& \cellcolor{blue!10}$\underline{.1782\pm .0344}$ 
& \cellcolor{blue!10}$\underline{.0010\pm .0004}$ 
& \cellcolor{blue!10}$.603\pm .001$ &
\cellcolor{blue!10}$.943\pm .003$\\

& \cellcolor{gray!10}$\textsc{VQ-DC}$ 
& \cellcolor{gray!10}$\underline{.0056\pm .0006}$ 
& \cellcolor{gray!10}$.1806\pm .0203$ 
& \cellcolor{gray!10}$.0012\pm .0004$ 
& \cellcolor{gray!10}$.603\pm .002$ &
\cellcolor{gray!10}$.943\pm .003$\\

& $\textsc{VQ-NC}$ 
& $.0084\pm .0016$ 
& $\mathbf{.1736\pm .0156}$ 
& $.0039\pm .0015$ 
& $.601\pm .001$ &
$.948\pm .003$\\

& \cellcolor{gray!10}$\textsc{DC}$ 
& \cellcolor{gray!10}$.0221\pm .0018$ 
& \cellcolor{gray!10}$.4509\pm .0102$ 
& \cellcolor{gray!10}$\mathbf{.0007\pm .0001}$ 
& \cellcolor{gray!10}$\underline{.607\pm .002}$ &
\cellcolor{gray!10}$\mathbf{.933\pm .003}$ \\

& $\textsc{NC}$ 
& $.0226\pm .0017$ 
& $.4528\pm .0108$ 
& $.0019\pm .0005$ 
& $\mathbf{.607\pm .001}$ &
$\underline{.936\pm .003}$ \\

\bottomrule
\end{tabular}

\end{table*}

We consider two main ablations of \VQ{}: $(i)$ \VQNC{} only employs vector quantization with no calibration procedure afterwards; $(ii)$ \VQDC{} does not use \VQ{}'s bilinear factorization, but employs the standard Dirichlet calibration on the quantization head. For completeness, we also include \NC{} and \DC{} in the comparison. \Cref{tab:ablation} shows the results.

For local metrics, we see that vector quantization is the main driver of improvements: all the ablated versions of our approach outperform both \NC{} and \DC{}, with impressive gains in terms of $MLCE$ (\textit{e.g.}, for \cifar{} we pass from $\approx .8248\pm .0161$ of \DC{} to $\approx .2932\pm.0099$ of \VQ{}). Still, the choice of parametrization matters: while we do not always observe statistically significant differences between \VQ{} and \VQDC{}, there are gains over~\VQNC {}.

When looking at $ECCE$, we can see that \VQ{} is better compared to both \VQDC{} and \VQNC{}, suggesting that our parametrization offers advantages in terms of global calibration. This is expected, as the bilinear parametrization regularizes calibration maps across regions. More precisely, it constrains regional differences to lie in a shared, low-dimensional space of miscalibration factors, allowing region-specific behaviour while preserving global coherence. Finally, regarding $NLL$, \VQ{} achieves competitive values across datasets, with only minor gaps to globally optimized methods such as DC. Not surprisingly, the main driver of $NLL$ gains is the quantization step as the new learned classification-head optimizes this quantity explicitly.

\subsection{Ablating the number of slots and codebook size}
\label{sec:codewordsandslots}
We study the sensitivity of \VQ{} to the number of slots $w$ and the codebook size $|\mathcal C|$. Results are reported in~\cref{tab:ablationSlots} by varying $w \in \{16,32,64\text{ (default)},128,256\}$ for $|\mathcal C|\in\{32,64\}$, and in \cref{tab:ablationCodebook} by varying $|\mathcal C|$ while fixing $w=64$ (the best-performing value). %

\begin{table*}[t]
\centering
\scriptsize
\caption{\VQ{} results when varying the number of slots ($w$) and codebook size ($|\mathcal{C}|$). Blue row reports main-paper parameters.} %
\label{tab:ablationSlots}
\begin{tabular}{c@{\hspace{7pt}} c@{\hspace{7pt}} c@{\hspace{7pt}} c@{\hspace{7pt}} c@{\hspace{7pt}}  c@{\hspace{7pt}}  c@{\hspace{7pt}}  c@{\hspace{7pt}}  c@{\hspace{7pt}}  c}
\toprule
Data & $w$ & $|\mathcal{C}|$ &  $LCE\,\downarrow$ & $MLCE\,\downarrow$ & $ECCE\,\downarrow$ & $ECE\,\downarrow$ & $ACC\,\uparrow$ & $NLL\,\downarrow$ \\

\midrule
\midrule
 \multirow{10}{*}{\rotatebox{90}{\texttt{cifar10}}} & $16$ & $32$ &  $.0104\pm .0005$ & $.8247\pm .0058$ & $.0012\pm .0002$ & $.0044\pm .0004$ & $.878\pm .001$ & $.412\pm .005$ \\
 & \cellcolor{gray!10}$16$ & \cellcolor{gray!10}$64$ & \cellcolor{gray!10} $.0091\pm .0002$ & \cellcolor{gray!10}$.8040\pm .0141$ & \cellcolor{gray!10}$\underline{.0011\pm .0002}$ & \cellcolor{gray!10}$.0044\pm .0003$ & \cellcolor{gray!10}$.879\pm .002$ & \cellcolor{gray!10}$.422\pm .004$ \\
 & $32$ & $32$ &  $.0081\pm .0002$ & $.6852\pm .0327$ & $\mathbf{.0010\pm .0002}$ & $.0039\pm .0002$ & $.885\pm .001$ & $.374\pm .002$ \\
 & \cellcolor{gray!10}$32$ & \cellcolor{gray!10}$64$ & \cellcolor{gray!10} $.0076\pm .0003$ & \cellcolor{gray!10}$.6365\pm .0276$ & \cellcolor{gray!10}$.0013\pm .0005$ & \cellcolor{gray!10}$.0043\pm .0003$ & \cellcolor{gray!10}$.884\pm .001$ & \cellcolor{gray!10}$.380\pm .004$ \\
 & $64$ & $32$ &  $.0065\pm .0002$ & $\underline{.5665\pm .0163}$ & $.0012\pm .0001$ & $.0038\pm .0002$ & $.889\pm .001$ & $.352\pm .001$ \\
 & \cellcolor{blue!10}$64$ & \cellcolor{blue!10}$64$ & \cellcolor{blue!10} $\mathbf{.0059\pm .0002}$ & \cellcolor{blue!10}$\mathbf{.5595\pm .0276}$ & \cellcolor{blue!10}$.0013\pm .0002$ & \cellcolor{blue!10}$\underline{.0037\pm .0002}$ & \cellcolor{blue!10}$.889\pm .001$ & \cellcolor{blue!10}$.348\pm .002$ \\
 & $128$ & $32$ &  $\underline{.0063\pm .0002}$ & $.6365\pm .0363$ & $.0012\pm .0001$ & $.0039\pm .0003$ & $.890\pm .002$ & $.341\pm .003$ \\
 & \cellcolor{gray!10}$128$ & \cellcolor{gray!10}$64$ & \cellcolor{gray!10} $.0066\pm .0001$ & \cellcolor{gray!10}$.6552\pm .0317$ & \cellcolor{gray!10}$.0014\pm .0004$ & \cellcolor{gray!10}$.0039\pm .0004$ & \cellcolor{gray!10}$.889\pm .001$ & \cellcolor{gray!10}$.341\pm .003$ \\
 & $256$ & $32$ &  $.0075\pm .0001$ & $.7306\pm .0241$ & $.0011\pm .0001$ & $\mathbf{.0036\pm .0002}$ & $\mathbf{.892\pm .001}$ & $\mathbf{.333\pm .003}$ \\
 & \cellcolor{gray!10}$256$ & \cellcolor{gray!10}$64$ & \cellcolor{gray!10} $.0082\pm .0003$ & \cellcolor{gray!10}$.7290\pm .0298$ & \cellcolor{gray!10}$.0016\pm .0003$ & \cellcolor{gray!10}$.0040\pm .0004$ & \cellcolor{gray!10}$\underline{.891\pm .001}$ & \cellcolor{gray!10}$\underline{.334\pm .002}$ \\
\midrule
 \multirow{10}{*}{\rotatebox{90}{\texttt{cifar100}}} & $16$ & $32$ &  $.0032\pm .0002$ & $.7790\pm .0337$ & $.0007\pm .0001$ & $.0017\pm .0001$ & $.638\pm .003$ & $1.428\pm .020$ \\
 & \cellcolor{gray!10}$16$ & \cellcolor{gray!10}$64$ & \cellcolor{gray!10} $.0023\pm .0001$ & \cellcolor{gray!10}$.6163\pm .0245$ & \cellcolor{gray!10}$.0008\pm .0001$ & \cellcolor{gray!10}$.0018\pm .0001$ & \cellcolor{gray!10}$.645\pm .002$ & \cellcolor{gray!10}$1.395\pm .006$ \\
 & $32$ & $32$ &  $.0023\pm .0005$ & $.6206\pm .1806$ & $.0008\pm .0001$ & $.0017\pm .0001$ & $.666\pm .002$ & $1.289\pm .006$ \\
 & \cellcolor{gray!10}$32$ & \cellcolor{gray!10}$64$ & \cellcolor{gray!10} $.0017\pm .0002$ & \cellcolor{gray!10}$\mathbf{.2758\pm .0104}$ & \cellcolor{gray!10}$.0009\pm .0001$ & \cellcolor{gray!10}$.0018\pm .0001$ & \cellcolor{gray!10}$.670\pm .001$ & \cellcolor{gray!10}$1.276\pm .007$ \\
 & $64$ & $32$ &  $\mathbf{.0016\pm .0001}$ & $.3274\pm .0133$ & $.0007\pm .0001$ & $.0017\pm .0001$ & $.680\pm .003$ & $1.213\pm .004$ \\
 & \cellcolor{blue!10}$64$ & \cellcolor{blue!10}$64$ & \cellcolor{blue!10} $.0017\pm .0001$ & \cellcolor{blue!10}$\underline{.2932\pm .0099}$ & \cellcolor{blue!10}$.0008\pm .0001$ & \cellcolor{blue!10}$.0018\pm .0001$ & \cellcolor{blue!10}$.681\pm .001$ & \cellcolor{blue!10}$1.208\pm .007$ \\
 & $128$ & $32$ &  $\underline{.0017\pm .0001}$ & $.3882\pm .0104$ & $\underline{.0006\pm .0001}$ & $\underline{.0016\pm .0001}$ & $.688\pm .001$ & $1.167\pm .008$ \\
 & \cellcolor{gray!10}$128$ & \cellcolor{gray!10}$64$ & \cellcolor{gray!10} $.0018\pm .0001$ & \cellcolor{gray!10}$.4280\pm .0208$ & \cellcolor{gray!10}$.0007\pm .0001$ & \cellcolor{gray!10}$.0017\pm .0001$ & \cellcolor{gray!10}$.688\pm .001$ & \cellcolor{gray!10}$1.159\pm .006$ \\
 & $256$ & $32$ &  $.0019\pm .0001$ & $.5717\pm .0054$ & $\mathbf{.0005\pm .0001}$ & $\mathbf{.0015\pm .0001}$ & $\mathbf{.692\pm .001}$ & $\underline{1.137\pm .004}$ \\
 & \cellcolor{gray!10}$256$ & \cellcolor{gray!10}$64$ & \cellcolor{gray!10} $.0021\pm .0001$ & \cellcolor{gray!10}$.6285\pm .0092$ & \cellcolor{gray!10}$.0006\pm .0001$ & \cellcolor{gray!10}$.0016\pm .0001$ & \cellcolor{gray!10}$\underline{.692\pm .001}$ & \cellcolor{gray!10}$\mathbf{1.135\pm .005}$ \\
\midrule
\multirow{10}{*}{\rotatebox{90}
{\texttt{tissue}}} & $16$ & $32$ 
& $.0178\pm .0011$ & $.8854\pm .0248$ & $.0011\pm .0004$ & $.0043\pm .0003$ & $.589\pm .005$ & $1.117\pm .012$ \\
 
& \cellcolor{gray!10}$16$ & \cellcolor{gray!10}$64$ 
& \cellcolor{gray!10}$.0147\pm .0006$ & \cellcolor{gray!10}$.8420\pm .0424$ & \cellcolor{gray!10}$\underline{.0010\pm .0002}$ & \cellcolor{gray!10}$.0042\pm .0003$ & \cellcolor{gray!10}$.598\pm .002$ & \cellcolor{gray!10}$1.105\pm .004$ \\

& $32$ & $32$ 
& $.0113\pm .0005$ & $.6746\pm .0607$ & $.0011\pm .0004$ & $\underline{.0037\pm .0005}$ & $.616\pm .003$ & $1.051\pm .007$ \\

& \cellcolor{gray!10}$32$ & \cellcolor{gray!10}$64$ 
& \cellcolor{gray!10}$.0097\pm .0005$ & \cellcolor{gray!10}$\underline{.6498\pm .0528}$ & \cellcolor{gray!10}$\mathbf{.0010\pm .0003}$ & \cellcolor{gray!10}$\mathbf{.0036\pm .0002}$ & \cellcolor{gray!10}$.618\pm .003$ & \cellcolor{gray!10}$1.047\pm .008$ \\

& $64$ & $32$ 
& $\mathbf{.0082\pm .0003}$ & $\mathbf{.5091\pm .0653}$ & $.0012\pm .0002$ & $.0038\pm .0003$ & $.623\pm .003$ & $1.032\pm .007$ \\

& \cellcolor{blue!10}$64$ & \cellcolor{blue!10}$64$ 
& \cellcolor{blue!10}$\underline{.0083\pm .0002}$ & \cellcolor{blue!10}$.6984\pm .0368$ & \cellcolor{blue!10}$.0017\pm .0004$ & \cellcolor{blue!10}$.0043\pm .0002$ & \cellcolor{blue!10}$.624\pm .002$ & \cellcolor{blue!10}$1.028\pm .007$ \\

& $128$ & $32$ 
& $.0095\pm .0004$ & $.7525\pm .0382$ & $.0016\pm .0007$ & $.0040\pm .0007$ & $.628\pm .002$ & $1.018\pm .006$ \\

& \cellcolor{gray!10}$128$ & \cellcolor{gray!10}$64$ 
& \cellcolor{gray!10}$.0105\pm .0002$ & \cellcolor{gray!10}$.8273\pm .0244$ & \cellcolor{gray!10}$.0017\pm .0003$ & \cellcolor{gray!10}$.0043\pm .0003$ & \cellcolor{gray!10}$.628\pm .002$ & \cellcolor{gray!10}$1.014\pm .005$ \\

& $256$ & $32$ 
& $.0133\pm .0002$ & $.8543\pm .0203$ & $.0021\pm .0003$ & $.0046\pm .0006$ & $\underline{.632\pm .001}$ & $\underline{1.006\pm .005}$ \\

& \cellcolor{gray!10}$256$ & \cellcolor{gray!10}$64$ 
& \cellcolor{gray!10}$.0153\pm .0004$ & \cellcolor{gray!10}$.8799\pm .0146$ & \cellcolor{gray!10}$.0021\pm .0004$ & \cellcolor{gray!10}$.0048\pm .0005$ & \cellcolor{gray!10}$\mathbf{.632\pm .002}$ & \cellcolor{gray!10}$\mathbf{1.004\pm .005}$ \\
\bottomrule
\end{tabular}

\end{table*}

Across datasets, performance exhibits a clear U-shape trend for $w$: small values (\eg $w=16$) provide too coarse a partition of the representation space and limit the benefit of locality, while very large values (\eg $w \ge 128$) make the region assignment and the resulting local estimates less stable. Consistently, the strongest local calibration performance is achieved for intermediate values, with $w\in\{32,64\}$ yielding the best or comparable results. %
Finally, \cref{tab:ablationCodebook} indicates that the effect of $|\mathcal C|$ is secondary once $w$ is fixed: once again we observe that the best results are achieved at $|\mathcal{C}|\in\{32,64\}$. %

\begin{table*}[h]
\centering
\scriptsize
\caption{\VQ{} results when changing codebook size ($|\mathcal{C}|$) for a fixed number of slots ($w=64$). Blue row reports main-paper parameters.} %
\label{tab:ablationCodebook} %
\begin{tabular}{c@{\hspace{7pt}} c@{\hspace{7pt}} c@{\hspace{7pt}}  c@{\hspace{7pt}}  c@{\hspace{7pt}}  c@{\hspace{7pt}}  c@{\hspace{7pt}}  c@{\hspace{7pt}}  c}
\toprule
Data & $w$ & $|\mathcal{C}|$ &  $LCE\,\downarrow$ & $MLCE\,\downarrow$ & $ECCE\,\downarrow$ & $ECE\,\downarrow$ & $ACC\,\uparrow$ & $NLL\,\downarrow$ \\

\midrule
\midrule
 \multirow{5}{*}{\rotatebox{90}{\texttt{cifar10}}} & $64$ & $16$ &  $.0082\pm .0001$ & $.6662\pm .0122$ & $\mathbf{.0011\pm .0002}$ & $\mathbf{.0037\pm .0003}$ & $.887\pm .001$ & $.359\pm .003$ \\
 & \cellcolor{gray!10}$64$ & \cellcolor{gray!10}$32$ & \cellcolor{gray!10} $.0065\pm .0002$ & \cellcolor{gray!10}$.5665\pm .0173$ & \cellcolor{gray!10}$.0012\pm .0001$ & \cellcolor{gray!10}$.0038\pm .0002$ & \cellcolor{gray!10}$\underline{.889\pm .001}$ & \cellcolor{gray!10}$\underline{.352\pm .001}$ \\
 & \cellcolor{blue!10}$64$ & \cellcolor{blue!10}$64$ & \cellcolor{blue!10} $\mathbf{.0059\pm .0002}$ & \cellcolor{blue!10}$\underline{.5595\pm .0276}$ & \cellcolor{blue!10}$.0013\pm .0002$ & \cellcolor{blue!10}$\underline{.0037\pm .0002}$ & \cellcolor{blue!10}$\mathbf{.889\pm .001}$ & \cellcolor{blue!10}$\mathbf{.348\pm .002}$ \\
 & \cellcolor{gray!10}$64$ & \cellcolor{gray!10}$128$ & \cellcolor{gray!10} $.0061\pm .0002$ & \cellcolor{gray!10}$.5650\pm .0478$ & \cellcolor{gray!10}$.0014\pm .0003$ & \cellcolor{gray!10}$.0040\pm .0005$ & \cellcolor{gray!10}$.888\pm .001$ & \cellcolor{gray!10}$.352\pm .004$ \\
 & $64$ & $256$ &  $\underline{.0060\pm .0001}$ & $\mathbf{.5358\pm .0218}$ & $\underline{.0011\pm .0003}$ & $.0038\pm .0004$ & $.888\pm .001$ & $.354\pm .004$ \\
\midrule
 \multirow{5}{*}{\rotatebox{90}{\texttt{cifar100}}} & $64$ & $16$ &  $.0023\pm .0002$ & $.6288\pm .0573$ & $\mathbf{.0006\pm .0001}$ & $\mathbf{.0016\pm .0001}$ & $.677\pm .002$ & $1.221\pm .004$ \\
 & \cellcolor{gray!10}$64$ & \cellcolor{gray!10}$32$ & \cellcolor{gray!10} $\mathbf{.0016\pm .0001}$ & \cellcolor{gray!10}$\underline{.3274\pm .0141}$ & \cellcolor{gray!10}$\underline{.0007\pm .0001}$ & \cellcolor{gray!10}$\underline{.0017\pm .0001}$ & \cellcolor{gray!10}$.680\pm .003$ & \cellcolor{gray!10}$1.213\pm .004$ \\
 & \cellcolor{blue!10}$64$ & \cellcolor{blue!10}$64$ & \cellcolor{blue!10} $\underline{.0017\pm .0001}$ & \cellcolor{blue!10}$\mathbf{.2932\pm .0099}$ & \cellcolor{blue!10}$.0008\pm .0001$ & \cellcolor{blue!10}$.0018\pm .0001$ & \cellcolor{blue!10}$\underline{.681\pm .001}$ & \cellcolor{blue!10}$\underline{1.208\pm .007}$ \\
 & \cellcolor{gray!10}$64$ & \cellcolor{gray!10}$128$ & \cellcolor{gray!10} $.0017\pm .0002$ & \cellcolor{gray!10}$.3299\pm .0068$ & \cellcolor{gray!10}$.0008\pm .0001$ & \cellcolor{gray!10}$.0018\pm .0001$ & \cellcolor{gray!10}$.681\pm .002$ & \cellcolor{gray!10}$1.212\pm .011$ \\
 & $64$ & $256$ &  $.0017\pm .0001$ & $.3546\pm .0127$ & $.0008\pm .0001$ & $.0018\pm .0001$ & $\mathbf{.682\pm .001}$ & $\mathbf{1.204\pm .006}$ \\
\midrule
\multirow{5}{*}{\rotatebox{90}{\texttt{tissue}}}
& $64$ & $16$ 
& $.0115\pm .0005$ 
& $.5393\pm .0397$ 
& $\underline{.0012\pm .0004}$ 
& $.0039\pm .0005$ 
& $.621 \pm .003$ 
& $1.039 \pm 0.007$ \\
 
& \cellcolor{gray!10}$64$ & \cellcolor{gray!10}$32$ 
& \cellcolor{gray!10}$\mathbf{.0082\pm .0003}$ 
& \cellcolor{gray!10}$\mathbf{.5091\pm .0653}$ 
& \cellcolor{gray!10}$\mathbf{.0012\pm .0002}$ 
& \cellcolor{gray!10}$\mathbf{.0038\pm .0003}$ 
& \cellcolor{gray!10}$.623 \pm .003$ 
& \cellcolor{gray!10}$1.032 \pm 0.007$ \\

& \cellcolor{blue!10}$64$ & \cellcolor{blue!10}$64$ 
& \cellcolor{blue!10}$\underline{.0083\pm .0002}$ 
& \cellcolor{blue!10}$\underline{.6984\pm .0368}$ 
& \cellcolor{blue!10}$.0017\pm .0004$ 
& \cellcolor{blue!10}$.0043\pm .0002$ 
& \cellcolor{blue!10}$.624 \pm .002$ 
& \cellcolor{blue!10}$1.028 \pm 0.007$ \\

& \cellcolor{gray!10}$64$ & \cellcolor{gray!10}$128$ 
& \cellcolor{gray!10}$.0086\pm .0003$ 
& \cellcolor{gray!10}$.7504\pm .0304$ 
& \cellcolor{gray!10}$.0015\pm .0004$ 
& \cellcolor{gray!10}$\underline{.0038\pm .0004}$ 
& \cellcolor{gray!10}$\underline{.625 \pm .002}$ 
& \cellcolor{gray!10}$\underline{1.026 \pm 0.006}$ \\

& $64$ & $256$ 
& $.0088\pm .0004$ 
& $.7341\pm .0508$ 
& $.0013\pm .0004$ 
& ${.0038\pm .0007}$ 
& $\mathbf{.625\pm .003}$ 
& $\mathbf{1.024\pm .007}$ \\
\bottomrule
\end{tabular}

\end{table*}

\subsection{Ablating the calibration-set size}
\label{sec:ablate_calsize}
We report the effect of calibration set size on local calibration ($LCE$) and negative log-likelihood ($NLL$) for \tissue{} in \cref{fig:calsize_tissue}. As expected, all methods benefit from increased calibration data, with both LCE and NLL generally decreasing as the calibration set grows with the sole exception of \LN{} which shows rather unstable performance. However, \VQ{} consistently outperforms all baselines across the entire data regime. Notably, \VQ{} achieves strong performance even in the low-data setting, where it already attains the lowest $LCE$ and competitive $NLL$, indicating superior data efficiency. As the calibration set size increases, \VQ{} continues to improve and maintains a clear margin in $LCE$, while remaining among the best methods in terms of $NLL$. In contrast, several baselines exhibit slower improvements or plateau early, particularly in low-data regimes. These results suggest that VQ is not only more accurate in well-sampled settings but is also substantially more robust under data scarcity, highlighting the effectiveness of parameter sharing in leveraging limited calibration data.
\begin{center}
\begin{figure*}[t]
        \includegraphics[width=\linewidth]{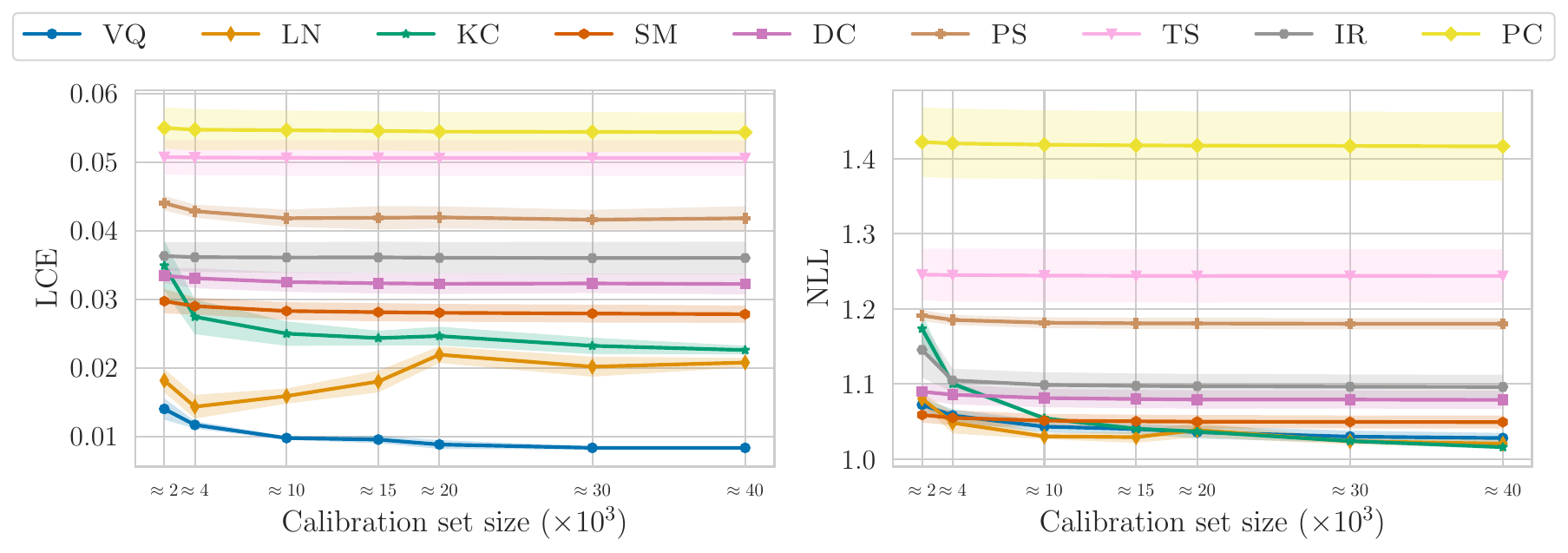}
    \caption{{$LCE$ (left) and $NLL$ (right) for various calibration-set sizes on \texttt{TissueMNIST}.}} %
    \label{fig:calsize_tissue}
\end{figure*}
\end{center}

We report the effect of calibration set size on local calibration ($LCE$) and negative log-likelihood ($NLL$) for \texttt{Weather} in \cref{fig:calsize_weather}. Consistent with the previous experiment, all methods generally benefit from increased calibration data, with both $LCE$ and $NLL$ decreasing as the calibration set grows. However, \VQ{} consistently achieves the lowest $LCE$ across all calibration sizes, with a clear margin over competing methods. Notably, \VQ{} exhibits strong data efficiency, attaining low $LCE$ even in the smallest data regime and continuing to improve steadily as more calibration data becomes available. In terms of $NLL$, all methods converge to similar performance as the calibration set increases, with only minor differences in the large-data regime. While \VQ{} is not always the best in $NLL$ for very small calibration sizes, it quickly becomes competitive and remains among the top-performing methods as more data is available. Overall, these results reinforce that \VQ{}’s primary advantage lies in robust and data-efficient local calibration, particularly in low-data settings, while maintaining competitive negative log-likelihood performance.

\begin{center}
\begin{figure*}[t]
        \includegraphics[width=\linewidth]{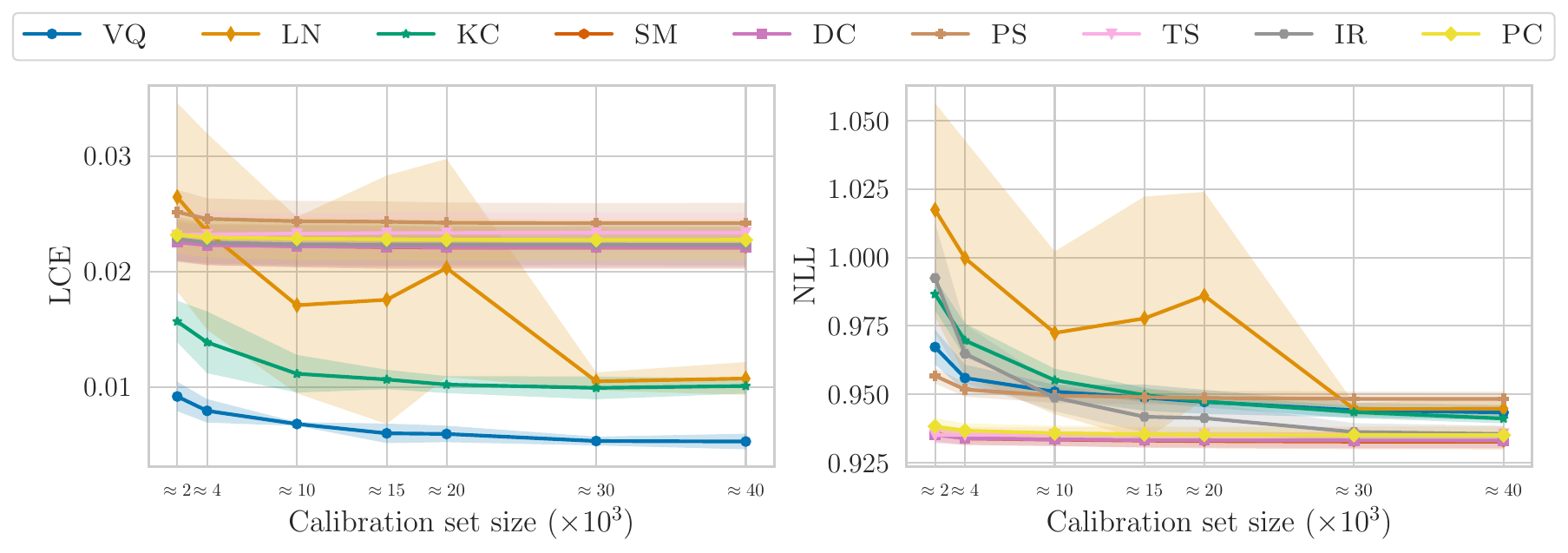}
    \caption{{$LCE$ (left) and $NLL$ (right) for various calibration-set sizes on \texttt{Weather}.}} %
    \label{fig:calsize_weather}
\end{figure*}
\end{center}
\subsection{Additional metrics} %
We additionally evaluate our method and all baselines according to expected calibration error ($ECE$) and negative log-likelihood ($NLL$) (\cref{tab:allResults}).

Across all three datasets, \VQ{} consistently attains very strong results in terms of $ECE$. It ties for first place with \DC{} and \SM{} on \cifarTen{}, is best on \cifar{} and second best on \tissue{}. This result further validates the effectiveness of our approach in jointly achieving strong local and global calibration. %
For $NLL$, as for $ACC$, \VQ{} yields improvements over the \NC{} baseline. Overall, its performance is competitive with the other approaches, though it never attains the best individual score.

\begin{table*}[t]
\centering
\scriptsize
\caption{Results for our approach (highlighted in blue) vs baselines (\texttt{ResNet} backbone).}
\label{tab:allResults}
    \begin{tabular}{c@{\hspace{7pt}} c@{\hspace{7pt}}  c@{\hspace{7pt}}  c@{\hspace{7pt}}  c@{\hspace{7pt}}  c@{\hspace{7pt}}  c@{\hspace{7pt}}  c}
\toprule
Data & Method & $LCE\,\downarrow$ & $MLCE\,\downarrow$ & $ECCE\,\downarrow$ & $ECE\,\downarrow$ & $ACC\,\uparrow$ & $NLL\,\downarrow$ \\
\midrule
\multirow{10}{*}{\rotatebox{90}{\texttt{cifar10}}} & \cellcolor{blue!10}$\textsc{VQ}$ & \cellcolor{blue!10}$\mathbf{.0059\pm .0002}$ & \cellcolor{blue!10}$\mathbf{.5595\pm .0276}$ & \cellcolor{blue!10}$.0013\pm .0002$ & \cellcolor{blue!10}$.0037\pm .0002$ & \cellcolor{blue!10}$.889\pm .001$ & \cellcolor{blue!10}$.348\pm .002$ \\

& $\textsc{SM}$ & $.0102\pm .0003$ & $.8027\pm .0166$ & $\mathbf{.0008\pm .0001}$ & $\mathbf{.0037\pm .0001}$ & $\underline{.890\pm .002}$ & $\underline{.333\pm .007}$ \\
 & \cellcolor{gray!10}$\textsc{DC}$ 
& \cellcolor{gray!10}$.0099\pm .0003$ 
& \cellcolor{gray!10}$.8127\pm .0194$ 
& \cellcolor{gray!10}$\underline{.0011\pm .0001}$ 
& \cellcolor{gray!10}$\underline{.0037\pm .0001}$ 
& \cellcolor{gray!10}$.890\pm .002$ 
& \cellcolor{gray!10}$\mathbf{.332\pm .007}$ \\
 & $\textsc{KC}$ & $.0095\pm .0004$ & $.7923\pm .0241$ & $.0015\pm .0003$ & $.0054\pm .0004$ & $\mathbf{.893\pm .001}$ & $.340\pm .002$ \\
 & \cellcolor{gray!10}$\textsc{LN}$ & \cellcolor{gray!10}$\underline{.0079\pm .0003}$ & \cellcolor{gray!10}$\underline{.6279\pm .0157}$ & \cellcolor{gray!10}$.0017\pm .0001$ & \cellcolor{gray!10}$.0048\pm .0003$ & \cellcolor{gray!10}$.888\pm .002$ & \cellcolor{gray!10}$.347\pm .002$ \\
 & $\textsc{PS}$ & $.0180\pm .0005$ & $.8542\pm .0224$ & $.0014\pm .0001$ & $.0085\pm .0003$ & $.884\pm .001$ & $.466\pm .004$ \\
 & \cellcolor{gray!10}$\textsc{IR}$ & \cellcolor{gray!10}$.0115\pm .0003$ & \cellcolor{gray!10}$.7978\pm .0162$ & \cellcolor{gray!10}$.0014\pm .0002$ & \cellcolor{gray!10}$.0047\pm .0001$ & \cellcolor{gray!10}$.884\pm .001$ & \cellcolor{gray!10}$.364\pm .008$ \\
 & $\textsc{TS}$ & $.0127\pm .0006$ & $.8200\pm .0191$ & $.0036\pm .0008$ & $.0072\pm .0010$ & $.884\pm .001$ & $.362\pm .008$ \\
 & \cellcolor{gray!10}$\textsc{PC}$ & \cellcolor{gray!10}$.0133\pm .0006$ & \cellcolor{gray!10}$.8339\pm .0456$ & \cellcolor{gray!10}$.0039\pm .0008$ & \cellcolor{gray!10}$.0085\pm .0009$ & \cellcolor{gray!10}$.880\pm .002$ & \cellcolor{gray!10}$.377\pm .008$ \\
& $\textsc{NC}$ 
& $.0154\pm .0005$ 
& $.8729\pm .0235$ 
& $.0066\pm .0004$ 
& $.0153\pm .0006$ 
& $.8838\pm .0014$ 
& $.4943\pm .0180$ \\

\midrule
\multirow{10}{*}{\rotatebox{90}{\texttt{cifar100}}} & \cellcolor{blue!10}$\textsc{VQ}$ & \cellcolor{blue!10}$\mathbf{.0017\pm .0001}$ & \cellcolor{blue!10}$\mathbf{.2932\pm .0099}$ & \cellcolor{blue!10}$.0008\pm .0001$ & \cellcolor{blue!10}$\mathbf{.0018\pm .0001}$ & \cellcolor{blue!10}$.681\pm .001$ & \cellcolor{blue!10}$1.208\pm .007$ \\

& $\textsc{SM}$ & $.0031\pm .0002$ & $.8121\pm .0190$ & $.0008\pm .0006$ & $.0024\pm .0015$ & $.677\pm .013$ & $1.225\pm .043$ \\
 & \cellcolor{gray!10}$\textsc{DC}$ 
& \cellcolor{gray!10}$.0027\pm .0001$ 
& \cellcolor{gray!10}$.8113\pm .0121$ 
& \cellcolor{gray!10}$\mathbf{.0007\pm .0001}$ 
& \cellcolor{gray!10}$.0019\pm .0001$ 
& \cellcolor{gray!10}$\mathbf{.690\pm .002}$ 
& \cellcolor{gray!10}$\underline{1.154\pm .009}$ \\
 & $\textsc{KC}$ & $.0043\pm .0001$ & $.7575\pm .0101$ & $.0013\pm .0001$ & $.0044\pm .0001$ & $.679\pm .003$ & $1.352\pm .009$ \\
 & \cellcolor{gray!10}$\textsc{LN}$ & \cellcolor{gray!10}$\underline{.0024\pm .0001}$ & \cellcolor{gray!10}$\underline{.7022\pm .0070}$ & \cellcolor{gray!10}$\mathbf{.0007\pm .0001}$ & \cellcolor{gray!10}$\underline{.0018\pm .0001}$ & \cellcolor{gray!10}$\underline{.688\pm .001}$ & \cellcolor{gray!10}$\mathbf{1.125\pm .002}$ \\
 & $\textsc{PS}$ & $.0053\pm .0001$ & $.7031\pm .0409$ & $.0011\pm .0001$ & $.0033\pm .0001$ & $.670\pm .002$ & $1.618\pm .007$ \\
 & \cellcolor{gray!10}$\textsc{IR}$ & \cellcolor{gray!10}$.0031\pm .0001$ & \cellcolor{gray!10}$.8049\pm .0116$ & \cellcolor{gray!10}$\mathbf{.0007\pm .0001}$ & \cellcolor{gray!10}$.0020\pm .0001$ & \cellcolor{gray!10}$.670\pm .002$ & \cellcolor{gray!10}$1.437\pm .029$ \\
 & $\textsc{TS}$ & $.0034\pm .0001$ & $.8239\pm .0173$ & $.0013\pm .0001$ & $.0024\pm .0001$ & $.670\pm .002$ & $1.277\pm .009$ \\
 & \cellcolor{gray!10}$\textsc{PC}$ & \cellcolor{gray!10}$.0041\pm .0002$ & \cellcolor{gray!10}$.8187\pm .0127$ & \cellcolor{gray!10}$.0013\pm .0001$ & \cellcolor{gray!10}$.0026\pm .0001$ & \cellcolor{gray!10}$.662\pm .003$ & \cellcolor{gray!10}$1.272\pm .012$ \\
 & $\textsc{NC}$ & $.0032\pm .0001$ & $.9155\pm .0104$ & $.0017\pm .0001$ & $.0039\pm .0001$ & $.670\pm .002$ & $1.502\pm .035$ \\

\midrule
\multirow{10}{*}{\rotatebox{90}{\texttt{tissue}}} 
& \cellcolor{blue!10}$\textsc{VQ}$ 
& \cellcolor{blue!10}$\mathbf{.0083\pm .0002}$ 
& \cellcolor{blue!10}$\underline{.6984\pm .0368}$ 
& \cellcolor{blue!10}$.0017\pm .0004$ 
& \cellcolor{blue!10}$\underline{.0043\pm .0002}$ 
& \cellcolor{blue!10}$.624\pm .002$ 
& \cellcolor{blue!10}$1.028\pm .007$ \\

& $\textsc{SM}$ 
& $.0278\pm .0013$ 
& $.9609\pm .0106$ 
& $\mathbf{.0008\pm .0001}$ 
& $\mathbf{.0037\pm .0001}$ 
& $.618\pm .003$ 
& $1.049\pm .009$ \\

& \cellcolor{gray!10}$\textsc{DC}$ 
& \cellcolor{gray!10}$.0322\pm .0015$ 
& \cellcolor{gray!10}$.9706\pm .0098$ 
& \cellcolor{gray!10}$.0065\pm .0009$ 
& \cellcolor{gray!10}$.0183\pm .0010$ 
& \cellcolor{gray!10}$.615\pm .004$ 
& \cellcolor{gray!10}$1.079\pm .012$ \\

& $\textsc{KC}$ 
& $.0226\pm .0006$ 
& $.7296\pm .0570$ 
& $.0061\pm .0004$ 
& $.0195\pm .0004$ 
& $\mathbf{.638\pm .001}$ 
& $\mathbf{1.016\pm .003}$ \\

& \cellcolor{gray!10}$\textsc{LN}$ 
& \cellcolor{gray!10}$\underline{.0208\pm .0007}$ 
& \cellcolor{gray!10}$\mathbf{.6932\pm .0661}$ 
& \cellcolor{gray!10}$.0058\pm .0004$ 
& \cellcolor{gray!10}$.0181\pm .0006$ 
& \cellcolor{gray!10}$\underline{.634\pm .002}$ 
& \cellcolor{gray!10}$\underline{1.021\pm .006}$ \\

& $\textsc{PS}$ 
& $.0418\pm .0018$ 
& $.9518\pm .0045$ 
& $\underline{.0013\pm .0001}$ 
& $.0135\pm .0012$ 
& $.611\pm .004$ 
& $1.180\pm .008$ \\

& \cellcolor{gray!10}$\textsc{IR}$ 
& \cellcolor{gray!10}$.0360\pm .0024$ 
& \cellcolor{gray!10}$.9602\pm .0103$ 
& \cellcolor{gray!10}$.0026\pm .0004$ 
& \cellcolor{gray!10}$.0095\pm .0005$ 
& \cellcolor{gray!10}$.612\pm .004$ 
& \cellcolor{gray!10}$1.096\pm .016$ \\

& $\textsc{TS}$ 
& $.0506\pm .0026$ 
& $.9822\pm .0084$ 
& $.0168\pm .0023$ 
& $.0410\pm .0022$ 
& $.603\pm .008$ 
& $1.244\pm .036$ \\

& \cellcolor{gray!10}$\textsc{PC}$ 
& \cellcolor{gray!10}$.0543\pm .0029$ 
& \cellcolor{gray!10}$.9768\pm .0133$ 
& \cellcolor{gray!10}$.0145\pm .0022$ 
& \cellcolor{gray!10}$.0384\pm .0025$ 
& \cellcolor{gray!10}$.575\pm .008$ 
& \cellcolor{gray!10}$1.417\pm .046$ \\

& $\textsc{NC}$ 
& $.1031\pm .0020$ 
& $.9953\pm .0021$ 
& $.0415\pm .0029$ 
& $.0977\pm .0020$ 
& $.603\pm .008$ 
& $2.100\pm .101$ \\
    \bottomrule
    \end{tabular}

\end{table*}

\subsection{More architectures: ConvNeXt} 
\label{sec:convnext_results}

\textbf{Calibration and performance.} We report results for \texttt{ConvNext} architecture in \cref{tab:results_convnext}. Results with a \texttt{ConvNeXt} backbone confirm the trends observed with \texttt{ResNet}. Across all datasets, \VQ{} consistently achieves the best local calibration, attaining the lowest LCE and MLCE by a clear margin. On \texttt{cifar10} and \texttt{cifar100}, \VQ{} substantially reduces both average and worst-case local calibration error compared to all baselines, with particularly large improvements in $MLCE$, highlighting its robustness in hard regions. Similar behaviour is observed on \texttt{tissue}, where \VQ{} again achieves the lowest $LCE$ and $MLCE$, while competing local methods exhibit significantly higher errors.

In terms of global calibration, \VQ{} remains competitive but does not always achieve the best $ECCE$, with global methods such as \DC{} and \SM{} slightly outperforming it in some cases. However, these differences are small and come at the cost of significantly worse local calibration. Finally, predictive performance remains comparable across methods: \VQ{} achieves accuracy on par with other calibrators, with only minor variations across datasets, indicating that improvements in local calibration do not come at the expense of classification performance.

\begin{table*}[t]
\centering
\scriptsize
\caption{\VQ{} results when the classifier backbone is a \texttt{ConvNeXt} model.} %
\label{tab:results_convnext} %
\begin{tabular}{c@{\hspace{7pt}} c@{\hspace{7pt}} c@{\hspace{7pt}} c@{\hspace{7pt}} c@{\hspace{7pt}} c@{\hspace{7pt}} c@{\hspace{7pt}} c}
\toprule
Data & Method & $LCE\,\downarrow$ & $MLCE\,\downarrow$ & $ECCE\,\downarrow$ & $ECE\,\downarrow$ & $ACC\,\uparrow$ & $NLL\,\downarrow$ \\
\midrule

\multirow{10}{*}{\rotatebox{90}{\texttt{cifar10}}}
& \cellcolor{blue!10}$\textsc{VQ}$ & \cellcolor{blue!10}$\mathbf{.0024\pm .0002}$ & \cellcolor{blue!10}$\mathbf{.3326\pm .0353}$ & \cellcolor{blue!10}$\underline{.0006\pm .0003}$ & \cellcolor{blue!10}$\underline{.0021\pm .0004}$ & \cellcolor{blue!10}$.958\pm .002$ & \cellcolor{blue!10}$.128\pm .006$ \\
\cline{2-8}
& \cellcolor{gray!10}$\textsc{DC}$ & \cellcolor{gray!10}$.0034\pm .0006$ & \cellcolor{gray!10}$.5810\pm .0829$ & \cellcolor{gray!10}$\mathbf{.0006\pm .0001}$ & \cellcolor{gray!10}$\mathbf{.0020\pm .0002}$ & \cellcolor{gray!10}$.958\pm .002$ & \cellcolor{gray!10}$.124\pm .003$ \\
& $\textsc{KC}$ & $.0038\pm .0004$ & $\underline{.5501\pm .0854}$ & $.0008\pm .0001$ & $.0025\pm .0003$ & $\mathbf{.960\pm .001}$ & $\mathbf{.122\pm .005}$ \\
& \cellcolor{gray!10}$\textsc{LN}$ & \cellcolor{gray!10}$\underline{.0032\pm .0006}$ & \cellcolor{gray!10}$.5615\pm .1347$ & \cellcolor{gray!10}$.0007\pm .0003$ & \cellcolor{gray!10}$.0023\pm .0004$ & \cellcolor{gray!10}$\underline{.960\pm .002}$ & \cellcolor{gray!10}$\underline{.122\pm .002}$ \\
& $\textsc{SM}$ & $.0036\pm .0006$ & $.5791\pm .0867$ & $.0006\pm .0001$ & $.0021\pm .0001$ & $.959\pm .002$ & $.124\pm .003$ \\
& \cellcolor{gray!10}$\textsc{PS}$ & \cellcolor{gray!10}$.0077\pm .0005$ & \cellcolor{gray!10}$.7295\pm .0649$ & \cellcolor{gray!10}$.0013\pm .0001$ & \cellcolor{gray!10}$.0059\pm .0007$ & \cellcolor{gray!10}$.956\pm .002$ & \cellcolor{gray!10}$.187\pm .005$ \\
& $\textsc{IR}$ & $.0040\pm .0007$ & $.5589\pm .0676$ & $.0006\pm .0001$ & $.0024\pm .0002$ & $.957\pm .002$ & $.148\pm .001$ \\
& \cellcolor{gray!10}$\textsc{TS}$ & \cellcolor{gray!10}$.0047\pm .0004$ & \cellcolor{gray!10}$.5893\pm .0908$ & \cellcolor{gray!10}$.0014\pm .0004$ & \cellcolor{gray!10}$.0033\pm .0006$ & \cellcolor{gray!10}$.956\pm .003$ & \cellcolor{gray!10}$.135\pm .007$ \\
& $\textsc{PC}$ & $.0045\pm .0006$ & $.5987\pm .1074$ & $.0014\pm .0003$ & $.0032\pm .0003$ & $.955\pm .002$ & $.139\pm .005$ \\
& \cellcolor{gray!10}$\textsc{NC}$ & \cellcolor{gray!10}$.0043\pm .0003$ & \cellcolor{gray!10}$.6400\pm .1017$ & \cellcolor{gray!10}$.0017\pm .0003$ & \cellcolor{gray!10}$.0039\pm .0007$ & \cellcolor{gray!10}$.956\pm .003$ & \cellcolor{gray!10}$.143\pm .007$ \\

\midrule

\multirow{10}{*}{\rotatebox{90}{\texttt{cifar100}}}
& \cellcolor{blue!10}$\textsc{VQ}$ & \cellcolor{blue!10}$\mathbf{.0012\pm .0001}$ & \cellcolor{blue!10}$\mathbf{.2778\pm .0078}$ & \cellcolor{blue!10}$.0005\pm .0001$ & \cellcolor{blue!10}$.0014\pm .0001$ & \cellcolor{blue!10}$.799\pm .003$ & \cellcolor{blue!10}$.690\pm .009$ \\
\cline{2-8}
& \cellcolor{gray!10}$\textsc{DC}$ & \cellcolor{gray!10}$\underline{.0013\pm .0001}$ & \cellcolor{gray!10}$.4576\pm .0056$ & \cellcolor{gray!10}$\underline{.0004\pm .0001}$ & \cellcolor{gray!10}$\underline{.0012\pm .0001}$ & \cellcolor{gray!10}$\underline{.817\pm .002}$ & \cellcolor{gray!10}$\mathbf{.609\pm .006}$ \\
& $\textsc{KC}$ & $.0023\pm .0001$ & $.5495\pm .0085$ & $.0009\pm .0001$ & $.0019\pm .0001$ & $.789\pm .001$ & $.729\pm .004$ \\
& \cellcolor{gray!10}$\textsc{LN}$ & \cellcolor{gray!10}$.0015\pm .0001$ & \cellcolor{gray!10}$.6542\pm .0277$ & \cellcolor{gray!10}$.0005\pm .0001$ & \cellcolor{gray!10}$.0013\pm .0001$ & \cellcolor{gray!10}$\mathbf{.819\pm .001}$ & \cellcolor{gray!10}$\underline{.616\pm .006}$ \\
& $\textsc{SM}$ & $.0013\pm .0001$ & $.4611\pm .0059$ & $\mathbf{.0004\pm .0001}$ & $\mathbf{.0012\pm .0001}$ & $.811\pm .001$ & $.631\pm .004$ \\
& \cellcolor{gray!10}$\textsc{PS}$ & \cellcolor{gray!10}$.0037\pm .0001$ & \cellcolor{gray!10}$\underline{.4288\pm .0202}$ & \cellcolor{gray!10}$.0012\pm .0001$ & \cellcolor{gray!10}$.0036\pm .0001$ & \cellcolor{gray!10}$.801\pm .001$ & \cellcolor{gray!10}$.989\pm .008$ \\
& $\textsc{IR}$ & $.0014\pm .0001$ & $.4560\pm .0095$ & $.0004\pm .0001$ & $.0014\pm .0001$ & $.805\pm .002$ & $.837\pm .020$ \\
& \cellcolor{gray!10}$\textsc{TS}$ & \cellcolor{gray!10}$.0020\pm .0001$ & \cellcolor{gray!10}$.4876\pm .0135$ & \cellcolor{gray!10}$.0011\pm .0001$ & \cellcolor{gray!10}$.0020\pm .0001$ & \cellcolor{gray!10}$.791\pm .003$ & \cellcolor{gray!10}$.702\pm .012$ \\
& $\textsc{PC}$ & $.0019\pm .0001$ & $.4851\pm .0159$ & $.0011\pm .0001$ & $.0019\pm .0001$ & $.790\pm .003$ & $.696\pm .011$ \\
& \cellcolor{gray!10}$\textsc{NC}$ & \cellcolor{gray!10}$.0018\pm .0001$ & \cellcolor{gray!10}$.5137\pm .0082$ & \cellcolor{gray!10}$.0010\pm .0001$ & \cellcolor{gray!10}$.0020\pm .0001$ & \cellcolor{gray!10}$.791\pm .003$ & \cellcolor{gray!10}$.711\pm .013$ \\

\midrule

\multirow{10}{*}{\rotatebox{90}{\texttt{tissue}}}
& \cellcolor{blue!10}$\textsc{VQ}$ & \cellcolor{blue!10}$\mathbf{.0041\pm .0003}$ & \cellcolor{blue!10}$\mathbf{.1553\pm .0323}$ & \cellcolor{blue!10}$.0011\pm .0002$ & \cellcolor{blue!10}$.0033\pm .0003$ & \cellcolor{blue!10}$.671\pm .004$ & \cellcolor{blue!10}$.903\pm .011$ \\
\cline{2-8}
& \cellcolor{gray!10}$\textsc{DC}$ & \cellcolor{gray!10}$\underline{.0060\pm .0004}$ & \cellcolor{gray!10}$.4298\pm .0422$ & \cellcolor{gray!10}$\underline{.0011\pm .0002}$ & \cellcolor{gray!10}$\underline{.0031\pm .0003}$ & \cellcolor{gray!10}$.674\pm .003$ & \cellcolor{gray!10}$\underline{.894\pm .008}$ \\
& $\textsc{KC}$ & $.0200\pm .0012$ & $.2108\pm .0127$ & $.0066\pm .0005$ & $.0191\pm .0013$ & $\mathbf{.678\pm .003}$ & $.913\pm .008$ \\
& \cellcolor{gray!10}$\textsc{LN}$ & \cellcolor{gray!10}$.0171\pm .0003$ & \cellcolor{gray!10}$\underline{.1858\pm .0084}$ & \cellcolor{gray!10}$.0061\pm .0003$ & \cellcolor{gray!10}$.0166\pm .0004$ & \cellcolor{gray!10}$\underline{.677\pm .004}$ & \cellcolor{gray!10}$.907\pm .009$ \\
& $\textsc{SM}$ & $.0060\pm .0003$ & $.4325\pm .0424$ & $\mathbf{.0006\pm .0001}$ & $\mathbf{.0029\pm .0001}$ & $.674\pm .003$ & $\mathbf{.894\pm .008}$ \\
& \cellcolor{gray!10}$\textsc{PS}$ & \cellcolor{gray!10}$.0175\pm .0005$ & \cellcolor{gray!10}$.4316\pm .0305$ & \cellcolor{gray!10}$.0023\pm .0001$ & \cellcolor{gray!10}$.0152\pm .0007$ & \cellcolor{gray!10}$.672\pm .003$ & \cellcolor{gray!10}$.953\pm .006$ \\
& $\textsc{IR}$ & $.0075\pm .0006$ & $.4405\pm .0550$ & $.0011\pm .0002$ & $.0041\pm .0004$ & $.672\pm .003$ & $.903\pm .009$ \\
& \cellcolor{gray!10}$\textsc{TS}$ & \cellcolor{gray!10}$.0127\pm .0025$ & \cellcolor{gray!10}$.4303\pm .0458$ & \cellcolor{gray!10}$.0068\pm .0020$ & \cellcolor{gray!10}$.0111\pm .0029$ & \cellcolor{gray!10}$.668\pm .002$ & \cellcolor{gray!10}$.909\pm .006$ \\
& $\textsc{PC}$ & $.0124\pm .0023$ & $.4331\pm .0497$ & $.0066\pm .0019$ & $.0109\pm .0027$ & $.668\pm .002$ & $.910\pm .005$ \\
& \cellcolor{gray!10}$\textsc{NC}$ & \cellcolor{gray!10}$.0159\pm .0036$ & \cellcolor{gray!10}$.4537\pm .0480$ & \cellcolor{gray!10}$.0084\pm .0033$ & \cellcolor{gray!10}$.0144\pm .0041$ & \cellcolor{gray!10}$.668\pm .002$ & \cellcolor{gray!10}$.910\pm .004$ \\

\bottomrule
\end{tabular}

\end{table*}

\textbf{Under data sparsity.} For \texttt{ConvNext} architecture we report in \cref{fig:ess_convnext} local calibration error as a function of neighbourhood density, measured via ESS. As in the \texttt{ResNet} setting, all methods exhibit a consistent trend: calibration error is highest in low-density regions and decreases as the effective sample size increases. However, \VQ{} consistently achieves the lowest $LCE$ across all datasets and ESS bins, with the largest gains in the sparsest regions. On \texttt{cifar10} and \texttt{cifar100}, \VQ{} significantly reduces local calibration error in low-density bins and quickly reaches near-zero error as density increases, outperforming both global and local baselines. A similar pattern is observed on \texttt{tissue}, where \VQ{} maintains a clear advantage across the entire ESS spectrum, particularly in the low-support regime where other methods exhibit substantially higher errors.

As density increases, the performance gap narrows and all methods converge to similar $LCE$ values, indicating that calibration becomes easier with sufficient local data. These results confirm that the benefits of \VQ{} generalize to across different convolutional architectures.

\begin{center}
\begin{figure*}[t]
        \includegraphics[width=\linewidth]{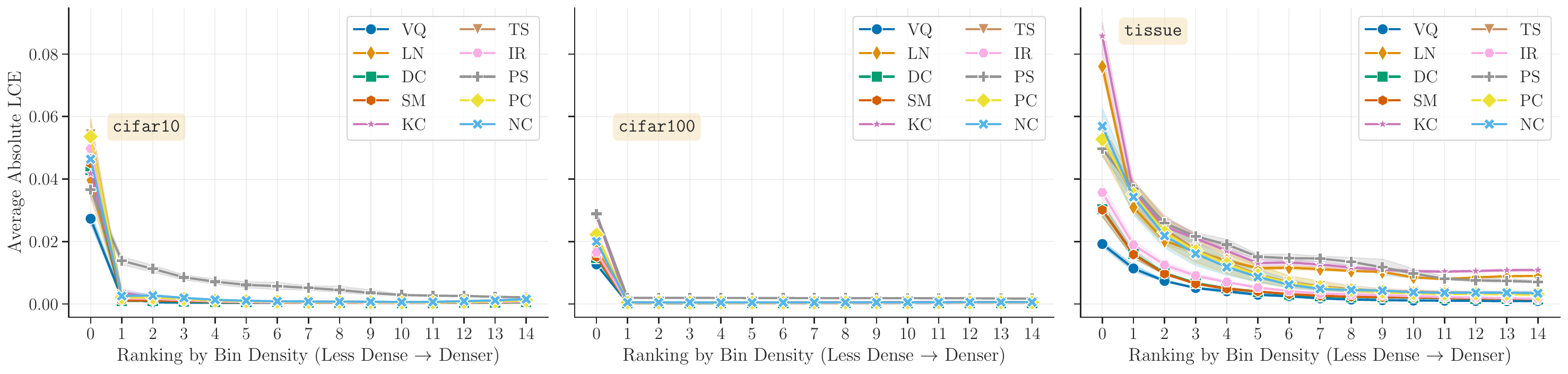}
    \caption{{Local calibration in density-based sub-bins over five runs (\texttt{ConvNeXt} backbone).}} %
    \label{fig:ess_convnext}
\end{figure*}
\end{center}

\subsection{More Architectures: ViT}
\label{sec:vit_results}
\textbf{Calibration and performance.} We report results for \texttt{ViT} architecture in \cref{tab:results_vit}. Results with a \texttt{ViT} backbone further confirm the trends observed across other architectures. VQ consistently achieves the best or near-best local calibration, attaining the lowest $LCE$ across all datasets (tied on \texttt{cifar10}, second-best but within noise on \texttt{cifar100}, and best on \texttt{tissue}). More importantly, \VQ{} clearly dominates in terms of $MLCE$, achieving the lowest worst-case local calibration error on all datasets by a substantial margin with the sole exception of \cifarTen{} where differences are not significant. This highlights its robustness in challenging regions of the representation space, where competing methods exhibit significantly higher variability.

In terms of global calibration, \VQ{} remains competitive but does not consistently achieve the best $ECCE$, with global methods such as \DC{} and \SM{} occasionally outperforming it. However, these gains in global calibration are accompanied by substantially worse local calibration. Finally, \VQ{} shows a slight decrease in $ACC$ in some ViT settings, likely due to interactions between quantization and the geometry of the representation space. However, this effect is small and does not impact its strong gains in local calibration.
\begin{table*}[t]
\centering
\scriptsize
\caption{\VQ{} results when the classifier backbone is a \texttt{ViT} model.} %
\label{tab:results_vit} %
\begin{tabular}{c@{\hspace{7pt}} c@{\hspace{7pt}} c@{\hspace{7pt}} c@{\hspace{7pt}} c@{\hspace{7pt}} c@{\hspace{7pt}} c@{\hspace{7pt}} c}
\toprule
Data & Method & $LCE\,\downarrow$ & $MLCE\,\downarrow$ & $ECCE\,\downarrow$ & $ECE\,\downarrow$ & $ACC\,\uparrow$ & $NLL\,\downarrow$ \\
\midrule

\multirow{10}{*}{\rotatebox{90}{\texttt{cifar10}}}
& \cellcolor{blue!10}$\textsc{VQ}$ & \cellcolor{blue!10}$\mathbf{.0016\pm .0001}$ & \cellcolor{blue!10}$.2808\pm .0210$ & \cellcolor{blue!10}$.0004\pm .0001$ & \cellcolor{blue!10}$.0016\pm .0001$ & \cellcolor{blue!10}$.974\pm .000$ & \cellcolor{blue!10}$.086\pm .002$ \\
\cline{2-8}
& \cellcolor{gray!10}$\textsc{DC}$ & \cellcolor{gray!10}$\mathbf{.0016\pm .0001}$ & \cellcolor{gray!10}$.4789\pm .0354$ & \cellcolor{gray!10}$\underline{.0003\pm .0001}$ & \cellcolor{gray!10}$\mathbf{.0013\pm .0001}$ & \cellcolor{gray!10}$.977\pm .001$ & \cellcolor{gray!10}$.073\pm .001$ \\
& $\textsc{KC}$ & $.0032\pm .0002$ & $.7915\pm .0214$ & $.0006\pm .0001$ & $.0018\pm .0002$ & $\mathbf{.979\pm .000}$ & $\mathbf{.071\pm .002}$ \\
& \cellcolor{gray!10}$\textsc{LN}$ & \cellcolor{gray!10}$.0025\pm .0003$ & \cellcolor{gray!10}$.9027\pm .0709$ & \cellcolor{gray!10}$.0013\pm .0005$ & \cellcolor{gray!10}$.0032\pm .0011$ & \cellcolor{gray!10}$.977\pm .001$ & \cellcolor{gray!10}$.132\pm .039$ \\
& $\textsc{SM}$ & $.0017\pm .0001$ & $.4389\pm .0408$ & $\mathbf{.0003\pm .0001}$ & $\underline{.0014\pm .0001}$ & $\underline{.978\pm .001}$ & $\underline{.071\pm .002}$ \\
& \cellcolor{gray!10}$\textsc{PS}$ & \cellcolor{gray!10}$.0043\pm .0002$ & \cellcolor{gray!10}$\mathbf{.2219\pm .1137}$ & \cellcolor{gray!10}$.0011\pm .0001$ & \cellcolor{gray!10}$.0035\pm .0005$ & \cellcolor{gray!10}$.977\pm .001$ & \cellcolor{gray!10}$.118\pm .007$ \\
& $\textsc{IR}$ & $.0018\pm .0001$ & $.4422\pm .0467$ & $.0004\pm .0001$ & $.0016\pm .0001$ & $.977\pm .001$ & $.087\pm .004$ \\
& \cellcolor{gray!10}$\textsc{TS}$ & \cellcolor{gray!10}$.0020\pm .0001$ & \cellcolor{gray!10}$.4728\pm .0424$ & \cellcolor{gray!10}$.0008\pm .0002$ & \cellcolor{gray!10}$.0019\pm .0003$ & \cellcolor{gray!10}$.977\pm .000$ & \cellcolor{gray!10}$.078\pm .004$ \\
& $\textsc{PC}$ & $.0022\pm .0003$ & $.5402\pm .0483$ & $.0009\pm .0002$ & $.0024\pm .0004$ & $.975\pm .001$ & $.086\pm .006$ \\
& \cellcolor{gray!10}$\textsc{NC}$ & \cellcolor{gray!10}$.0021\pm .0002$ & \cellcolor{gray!10}$.5239\pm .0750$ & \cellcolor{gray!10}$.0012\pm .0003$ & \cellcolor{gray!10}$.0031\pm .0007$ & \cellcolor{gray!10}$.977\pm .000$ & \cellcolor{gray!10}$.106\pm .018$ \\

\midrule

\multirow{10}{*}{\rotatebox{90}{\texttt{cifar100}}}
& \cellcolor{blue!10}$\textsc{VQ}$ & \cellcolor{blue!10}$\underline{.0012\pm .0001}$ & \cellcolor{blue!10}$\mathbf{.2680\pm .0075}$ & \cellcolor{blue!10}$.0005\pm .0001$ & \cellcolor{blue!10}$.0014\pm .0001$ & \cellcolor{blue!10}$.773\pm .002$ & \cellcolor{blue!10}$.838\pm .003$ \\
\cline{2-8}
& \cellcolor{gray!10}$\textsc{DC}$ & \cellcolor{gray!10}$.0023\pm .0002$ & \cellcolor{gray!10}$.4558\pm .0088$ & \cellcolor{gray!10}$.0009\pm .0001$ & \cellcolor{gray!10}$.0026\pm .0002$ & \cellcolor{gray!10}$.786\pm .004$ & \cellcolor{gray!10}$1.013\pm .083$ \\
& $\textsc{KC}$ & $.0049\pm .0001$ & $.8920\pm .0029$ & $.0010\pm .0001$ & $.0022\pm .0001$ & $.782\pm .001$ & $.818\pm .005$ \\
& \cellcolor{gray!10}$\textsc{LN}$ & \cellcolor{gray!10}$.0028\pm .0001$ & \cellcolor{gray!10}$.9671\pm .0017$ & \cellcolor{gray!10}$.0006\pm .0001$ & \cellcolor{gray!10}$.0017\pm .0001$ & \cellcolor{gray!10}$\underline{.819\pm .002}$ & \cellcolor{gray!10}$.717\pm .008$ \\
& $\textsc{SM}$ & $\mathbf{.0011\pm .0001}$ & $.3590\pm .0090$ & $\mathbf{.0003\pm .0001}$ & $\mathbf{.0011\pm .0001}$ & $\mathbf{.819\pm .002}$ & $\mathbf{.660\pm .005}$ \\
& \cellcolor{gray!10}$\textsc{PS}$ & \cellcolor{gray!10}$.0030\pm .0001$ & \cellcolor{gray!10}$\underline{.3317\pm .0149}$ & \cellcolor{gray!10}$.0011\pm .0001$ & \cellcolor{gray!10}$.0031\pm .0001$ & \cellcolor{gray!10}$.818\pm .002$ & \cellcolor{gray!10}$.969\pm .007$ \\
& $\textsc{IR}$ & $\mathbf{.0011\pm .0001}$ & $.3439\pm .0073$ & $\underline{.0004\pm .0001}$ & $.0013\pm .0001$ & $.816\pm .002$ & $.868\pm .011$ \\
& \cellcolor{gray!10}$\textsc{TS}$ & \cellcolor{gray!10}$.0012\pm .0001$ & \cellcolor{gray!10}$.3715\pm .0048$ & \cellcolor{gray!10}$.0005\pm .0001$ & \cellcolor{gray!10}$\underline{.0013\pm .0001}$ & \cellcolor{gray!10}$.816\pm .002$ & \cellcolor{gray!10}$\underline{.677\pm .007}$ \\
& $\textsc{PC}$ & $.0014\pm .0001$ & $.3550\pm .0094$ & $.0005\pm .0001$ & $.0015\pm .0001$ & $.812\pm .002$ & $.700\pm .005$ \\
& \cellcolor{gray!10}$\textsc{NC}$ & \cellcolor{gray!10}$.0017\pm .0001$ & \cellcolor{gray!10}$.4098\pm .0056$ & \cellcolor{gray!10}$.0008\pm .0001$ & \cellcolor{gray!10}$.0020\pm .0001$ & \cellcolor{gray!10}$.816\pm .002$ & \cellcolor{gray!10}$.768\pm .008$ \\

\midrule

\multirow{10}{*}{\rotatebox{90}{\texttt{tissue}}}
& \cellcolor{blue!10}$\textsc{VQ}$ & \cellcolor{blue!10}$\mathbf{.0042\pm .0006}$ & \cellcolor{blue!10}$\mathbf{.1064\pm .0101}$ & \cellcolor{blue!10}$.0014\pm .0004$ & \cellcolor{blue!10}$.0037\pm .0005$ & \cellcolor{blue!10}$.662\pm .001$ & \cellcolor{blue!10}$.925\pm .001$ \\
\cline{2-8}
& \cellcolor{gray!10}$\textsc{DC}$ & \cellcolor{gray!10}$.0062\pm .0003$ & \cellcolor{gray!10}$.5737\pm .0755$ & \cellcolor{gray!10}$\underline{.0011\pm .0002}$ & \cellcolor{gray!10}$\mathbf{.0034\pm .0002}$ & \cellcolor{gray!10}$.677\pm .002$ & \cellcolor{gray!10}$\mathbf{.886\pm .005}$ \\
& $\textsc{KC}$ & $.0225\pm .0015$ & $.6667\pm .0451$ & $.0066\pm .0005$ & $.0193\pm .0018$ & $\mathbf{.678\pm .002}$ & $.911\pm .002$ \\
& \cellcolor{gray!10}$\textsc{LN}$ & \cellcolor{gray!10}$.0189\pm .0007$ & \cellcolor{gray!10}$.5906\pm .0351$ & \cellcolor{gray!10}$.0059\pm .0005$ & \cellcolor{gray!10}$.0159\pm .0008$ & \cellcolor{gray!10}$.676\pm .002$ & \cellcolor{gray!10}$.906\pm .003$ \\
& $\textsc{SM}$ & $.0066\pm .0005$ & $.5603\pm .0670$ & $\mathbf{.0009\pm .0001}$ & $\underline{.0036\pm .0002}$ & $\underline{.677\pm .002}$ & $\underline{.885\pm .005}$ \\
& \cellcolor{gray!10}$\textsc{PS}$ & \cellcolor{gray!10}$.0180\pm .0008$ & \cellcolor{gray!10}$.4960\pm .0901$ & \cellcolor{gray!10}$.0025\pm .0002$ & \cellcolor{gray!10}$.0166\pm .0010$ & \cellcolor{gray!10}$.674\pm .002$ & \cellcolor{gray!10}$.951\pm .008$ \\
& $\textsc{IR}$ & $.0077\pm .0004$ & $.5725\pm .0937$ & $.0014\pm .0002$ & $.0042\pm .0002$ & $.675\pm .002$ & $.896\pm .007$ \\
& \cellcolor{gray!10}$\textsc{TS}$ & \cellcolor{gray!10}$.0132\pm .0006$ & \cellcolor{gray!10}$.5615\pm .0898$ & \cellcolor{gray!10}$.0066\pm .0007$ & \cellcolor{gray!10}$.0108\pm .0008$ & \cellcolor{gray!10}$.672\pm .002$ & \cellcolor{gray!10}$.902\pm .005$ \\
& $\textsc{PC}$ & $.0126\pm .0006$ & $.5676\pm .0732$ & $.0062\pm .0006$ & $.0110\pm .0007$ & $.672\pm .001$ & $.905\pm .006$ \\
& \cellcolor{gray!10}$\textsc{NC}$ & \cellcolor{gray!10}$.0169\pm .0033$ & \cellcolor{gray!10}$.6226\pm .0826$ & \cellcolor{gray!10}$.0076\pm .0016$ & \cellcolor{gray!10}$.0164\pm .0044$ & \cellcolor{gray!10}$.672\pm .002$ & \cellcolor{gray!10}$.911\pm .011$ \\

\bottomrule
\end{tabular}

\end{table*}

\textbf{Under data sparsity.} For \texttt{ViT} architecture we report in \cref{fig:ess_vit} local calibration error as a function of neighbourhood density, measured via ESS. As in previous architectures, all methods exhibit the expected trend: calibration error is highest in low-density regions and decreases as ESS increases. However, the relative improvements of \VQ{} vary across datasets. On \texttt{cifar10} and \texttt{cifar100}, differences between methods are relatively small, and \VQ{} performs comparably to strong baselines across most ESS bins, with no substantial advantage in low-density regions.

In contrast, on \texttt{tissue}, \VQ{} consistently achieves lower $LCE$ across the entire ESS spectrum, with particularly pronounced gains in the lowest-density bins. This indicates that \VQ{} remains effective in mitigating calibration errors under data sparsity when the representation space exhibits stronger heterogeneity. Overall, these results suggest that the benefits of \VQ{} depend on the structure of the underlying representation space: while gains may be limited when features are already well-behaved (as in \cifarTen{} or \cifar{} with \texttt{ViT}), \VQ{} provides clear advantages in more challenging settings.

\begin{figure*}[t]
        \includegraphics[width=\linewidth]{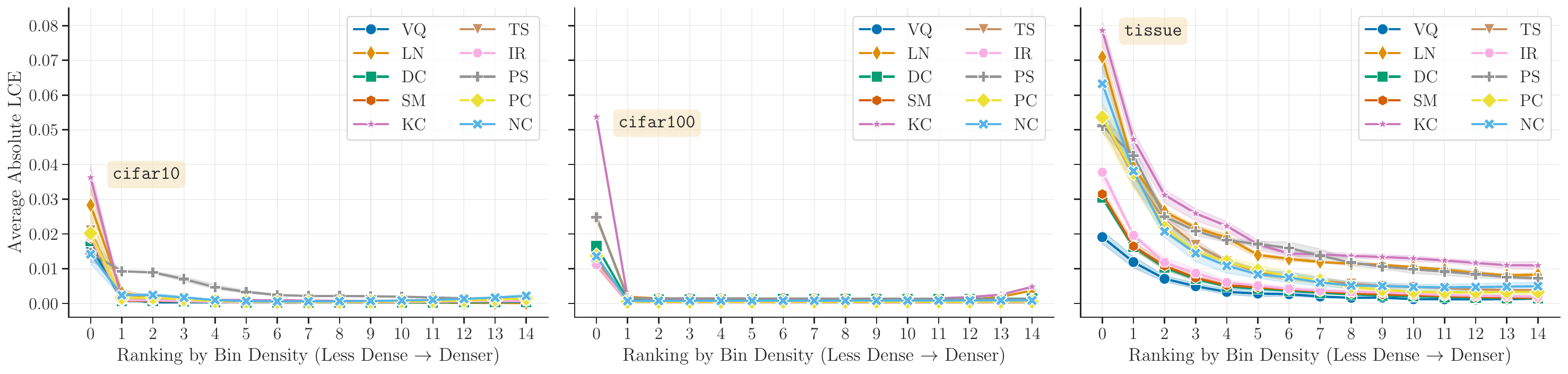}
    \caption{{Local calibration in density-based sub-bins over five runs (\texttt{ViT} backbone).}} %
    \label{fig:ess_vit}
\end{figure*}

\subsection{Codewords usage statistics}
\label{sec:usage_stats}
We report quantization codewords usage statistics across datasets and architectures in \cref{tab:usage_stats}. 

Min Usage and Max Usage report the minimum and maximum number of assignments received by any codeword, respectively, while Usage Std denotes the standard deviation of codeword assignments, capturing the variability in usage across the codebook. \# Assign. indicates the total number of assignments (in millions), obtained as the number of quantization slots times the test-set cardinality. Each value is assigned a standard deviation across the 5 used seeds. These statistics provide a global view of how evenly the codebook is utilized and allow us to detect potential issues such as underutilized (dead) codewords.

We observe that all codewords are consistently assigned a non-zero number of samples, with minimum usage well above zero in all settings. This confirms that no dead codewords are present and that the learned codebooks are fully utilized. While usage is naturally imbalanced, with some codewords receiving more assignments than others, the standard deviation remains moderate relative to the total number of assignments, indicating a healthy distribution of mass across the codebook. Overall, these results show that the quantization scheme effectively leverages the full capacity of the codebook across different architectures and datasets.

\begin{center}
\begin{table*}[t]
\centering
\caption{\VQ{} codewords usage statistics across all datasets and model architectures on the test set.} 
\label{tab:usage_stats}

\begin{tabular}{c@{\hspace{6pt}} c@{\hspace{6pt}} 
@{\hspace{6pt}}S[table-format=6.0(4)]
@{\hspace{6pt}}S[table-format=6.0(5)]
@{\hspace{6pt}}S[table-format=6.0(4)]
@{\hspace{6pt}}S[table-format=1.3]}
\toprule
Data & Model & {Min Usage} & {Max Usage} & {Usage Std} & {\# Assign. ($\times 10^6$)} \\
\midrule

\multirow{3}{*}{\texttt{cifar10}}
&  \cellcolor{gray!10}\textsc{resnet} 
&  \cellcolor{gray!10}\num{1790(208)} 
&  \cellcolor{gray!10}\num{126583(10273)} 
&  \cellcolor{gray!10}\num{17974(1123)} 
&  \cellcolor{gray!10}\num{1.296} \\

& \textsc{convnext} 
& \num{2874(1200)} 
& \num{42831(8268)} 
& \num{9245(2427)} 
& \num{1.296} \\

&  \cellcolor{gray!10}\textsc{vit} 
&  \cellcolor{gray!10}\num{4968(1425)} 
&  \cellcolor{gray!10}\num{27955(1325)} 
&  \cellcolor{gray!10}\num{5252(580)} 
&  \cellcolor{gray!10}\num{1.296} \\

\midrule

\multirow{3}{*}{\texttt{cifar100}}
&  \cellcolor{gray!10}\textsc{resnet} 
&  \cellcolor{gray!10}\num{2481(1167)} 
&  \cellcolor{gray!10}\num{106616(12547)} 
&  \cellcolor{gray!10}\num{14896(683)} 
&  \cellcolor{gray!10}\num{1.296} \\

& \textsc{convnext} 
& \num{4294(792)} 
& \num{31017(3162)} 
& \num{5207(602)} 
& \num{1.296} \\

&  \cellcolor{gray!10}\textsc{vit} 
&  \cellcolor{gray!10}\num{5435(1635)} 
&  \cellcolor{gray!10}\num{23984(599)} 
&  \cellcolor{gray!10}\num{3565(425)} 
&  \cellcolor{gray!10}\num{1.296} \\

\midrule

\multirow{3}{*}{\texttt{tissue}}
&  \cellcolor{gray!10}\textsc{resnet} 
&  \cellcolor{gray!10}\num{4825(897)} 
&  \cellcolor{gray!10}\num{503999(17512)} 
&  \cellcolor{gray!10}\num{71517(1148)} 
&  \cellcolor{gray!10}\num{4.765} \\

& \textsc{convnext} 
& \num{9233(2048)} 
& \num{254738(32372)} 
& \num{50291(4460)} 
& \num{4.765} \\

&  \cellcolor{gray!10}\textsc{vit} 
&  \cellcolor{gray!10}\num{14126(2716)} 
&  \cellcolor{gray!10}\num{116994(7445)} 
&  \cellcolor{gray!10}\num{23582(1222)} 
&  \cellcolor{gray!10}\num{4.765} \\

\midrule

\multirow{1}{*}{\texttt{weather}}
&  \cellcolor{gray!10}\textsc{ftt} 
&  \cellcolor{gray!10}\num{28983(5381)} 
&  \cellcolor{gray!10}\num{177686(21865)} 
&  \cellcolor{gray!10}\num{41796(3553)} 
&  \cellcolor{gray!10}\num{5.760} \\

\bottomrule
\end{tabular}

\end{table*}
\end{center}

\subsection{Training Time}
We report the average training time for a single epoch for \VQ{} with a \texttt{ResNet} backbone on all three datasets.
For both \cifarTen{} and \cifar{}, the \textit{quantization} head takes on average $\approx1$ second to complete an epoch. The \textit{calibrator} also takes $\approx1$ second to complete a single epoch. %
For \tissue{}, the \textit{quantization} head takes on average $\approx3$ seconds to complete an epoch. The \textit{calibrator} takes $\approx2.5$ seconds to complete a single epoch.

\section{Implementation Details}
\label{sec:experimental_details}

In this section we provide a comprehensive description of all the procedures underlying our empirical results. We followed the same setting as in \citet{DBLP:journals/corr/Barbera25}. %
\subsection{Training of Classifiers}
Here we describe hyper-parameters and the training procedure for the baseline classifiers.

For \texttt{CIFAR-10} and \tissue{} we leverage a \texttt{ResNet-50} architecture with \texttt{IMAGENET1K\_V2} pre-trained weights.
We additionally add a dropout layer to the backbone with $0.2$ rate and a linear classification head. 

For \texttt{CIFAR-100} we instead rely on a \texttt{ResNet-152} model, still with \texttt{IMAGENET1K\_V2} pre-trained weights.
We again add a dropout layer to the backbone, but with $0.5$ rate in this case. 

All \texttt{ResNet} architectures are trained with early stopping on validation $NLL$, \texttt{Adam} optimizer and a learning rate of $3\!\times\!10^{-4}$. 

For all datasets we employ a \texttt{ConvNext-tiny} architecture. We again add a dropout layer to the backbone with $0.2$ rate. 

For all datasets we also employ a \texttt{vit-base-patch16-224} from the timm library \citep{wightman2019pytorch}, still with \texttt{IMAGENET1K\_V2} pre-trained weights.

Both \texttt{ConvNext} and \texttt{ViT} architectures are trained with early stopping on validation $NLL$, \texttt{Adam} optimizer and a learning rate of $5\!\times\!10^{-5}$. 

Finally, for \texttt{Weather} we leverage an \texttt{FTT} classifier. The model is trained with early stopping on validation $NLL$, \texttt{Adam} optimizer and a learning rate of $1\!\times\!10^{-4}$. 

All classifiers are trained with Categorical Cross-Entropy. 

\subsection{Local Dirichlet Calibration}

\textbf{Quantized classifier}. For all of our experiments we segment latent representations into $w=64$ slots, yielding $d=32$ per slot for \texttt{ResNet} architectures and $d=12$ for \texttt{ConvNext}, and \texttt{ViT} architectures and we instantiate the codebook with $|\mathcal{C}|=64$ prototype vectors. The codebook $\mathcal C$ is initialized with random samples drawn from the segmented representations of the classifier. During learning, updates are performed via EMA with a decay rate of $0.99$ as in \citep{van2017neural}. The new classification head $f_{\mathrm{VQ}}: \mathcal Q \to \mcY$ is a linear layer that maps quantized representations to logits and is trained with \texttt{Adam} and $1\times 10^{-3}$ for both learning rate and weight decay. %

\textbf{Calibration function.}
The \textit{local} calibrator we propose requires two sets of parameter vectors \(\mathcal{A}=\{\mba_1,\dots,\mba_{|\mathcal C|}\}\subset\mathbb{R}^{|\mcY|},
\mathcal{B}=\{\mathbf{b}_1,\dots,\mathbf b_{|\mathcal C|}\}\subset\mathbb{R}^{|\mcY|}\) and a third parameter vector $\boldsymbol\sigma^2 \in \mathbb{R}^w$. $\mathcal{A}$ and $\mathcal{B}$ are initialised such that for any induced submatrices $\mathbf A$, $\mathbf B$ (as in \cref{sec:local_dir_cal}): %
\[
\phi(\mathbf A^\top \mathbf B) = \mathbf 1
\] %
where $\phi(\cdot)$ is the softplus. The calibration parameters (\ie $\boldsymbol\alpha^{(\mathcal V)}-\mathbf 1$) are then obtained as:
\[
\phi\!\left(\mathbf A^\top \mathrm{diag}(\boldsymbol\sigma^2) \mathbf B\right) - \mathbf 1 + \mathbf I
\] %

This modelling choice, initializes each region's calibration map close to the identity, providing a strong starting point for optimization and improving convergence. %
Lastly, we fix $\boldsymbol{\sigma}^2 = \mathbf 1$ (the all-ones vector) and do not learn it. We refrain from using the full calibrator model, as it introduces additional capacity that empirically increases the risk of overfitting. This is a well-known issue in post-hoc calibration, where simpler models and stronger regularization are often preferred \cite{DBLP:journals/corr/abs-2511-03685}. %

Training leverages \texttt{Adam}~\citep{DBLP:journals/corr/KingmaB14} with learning rate and weight decay set to $1\times 10^{-3}$. %
\subsection{Training of Local Methods}
In what follows we illustrate the technical details regarding implementation of local calibration methods in our experiments.

For \texttt{CIFAR-10} we use a fully connected network with a single hidden layer of size $64$ and a dropout rate of $0.3$.
We operate in a PCA-reduced feature representation space of size $50$. %
We use \texttt{Adam} for training with learning rate $1\!\times\!10^{-3}$, for $22$ epochs (early stopping) and a batch size of $1024$.
For \texttt{CIFAR-100} we leverage the same architecture and hyper-parameters except for the hidden layer, which now has size $128$, and the dropout rate, which is set to $0.5$. %
Training lasts $30$ epochs with early stopping.
The same applies to \texttt{TissueMNIST}, but we set the hidden dimension to $256$ and the dropout rate back to $0.3$. %
Learning instead lasts $60$ epochs with early stopping. %
\subsection{Metrics}
We follow the implementation of \citep{DBLP:journals/corr/Barbera25}.
More precisely, for both \textit{global} and \textit{local} metrics we partitioned $f_k(\mbx)$ into $15$ bins based on predicted confidence scores. Moreover, we picked a value of $10$ for the kernel-bandwidth hyper-parameter consistently with the learning procedure of the local methods. 

\subsection{Calibration Algorithm}
\label{sec:algo}
In this section we provide a comprehensive description of the algorithm underlying our proposal. The method can be summarized in two stages: (i) a discretization step (\textit{divide}) and (ii) a \textit{calibration} (\textit{calibra}) step. %

\begin{algorithm}[t]
\caption{Calibration algorithm}
\label{alg:appendix_calibration}
\KwIn{
Frozen encoder $E(\cdot)$; segmentation map $\Phi(\cdot)$;
dataset $\mathcal D_{\mathrm{cal}}$;
EMA decay $\gamma$; learning rates $\eta_{\mathrm{vq}}, \eta_{\mathrm{cal}}$.
}
\KwOut{Learned codebook $\mathcal C$ and calibrated predictor $f_{\mathrm{cal}}$.}

\BlankLine
\textbf{Stage 1: Quantization-aware representation learning}\\
Initialize codebook $\mathcal C$ and quantization-aware head $f_{\mathrm{VQ}}$
\Repeat{convergence}{
  Sample minibatch $\mathcal B \subset \mathcal D_{\mathrm{cal}}$\;

  $z \leftarrow E(x)$ for $(x,y)\in\mathcal B$\;
  $(z^{(1)},\dots,z^{(w)}) \leftarrow \Phi(z)$\;

  $\mathbf s \leftarrow \mathrm{Assign}((z^{(i)})_{i=1}^w,\mathcal C)$\;
  $\overline{\mbq} \leftarrow \mathrm{Select}(\mathbf s,\mathcal C)$\;

  $\hat{\mbp}_{\mathrm{VQ}} \leftarrow f_{\mathrm{VQ}}(\overline{\mbq})$\;

  $\mathcal{L}_{\mathrm{ce}} \leftarrow \mathrm{CE}(\hat{\mbp}_{\mathrm{VQ}}, y)$\;

  $\theta_{\mathrm{VQ}} \leftarrow \theta_{\mathrm{VQ}} - \eta_{\mathrm{vq}}\nabla_{\theta_{\mathrm{VQ}}}\mathcal{L}_{\mathrm{ce}}$\;

  $\mathcal C \leftarrow \mathrm{EMAUpdate}(\mathcal C, (z^{(i)})_{i=1}^w, \mathbf s;\gamma)$\;
}
\textbf{Freeze} $\mathcal C$ and $f_{\mathrm{VQ}}$\;

\BlankLine
\textbf{Stage 2: Region-aware Dirichlet calibration}\\
Initialize calibration parameters $(\mathcal A,\mathcal B,\boldsymbol\sigma^2)$ and
$f_{\mathrm{cal}}(\hat{\mbp}_{\mathrm{VQ}},\mathbf s)$\; %

\Repeat{convergence}{
  Sample minibatch $\mathcal B \subset \mathcal D_{\mathrm{cal}}$\;

  $\hat{\mbp}_{\mathrm{cal}} \leftarrow f_{\mathrm{cal}}(\hat{\mbp}_{\mathrm{VQ}},\mathbf s)$ for $(\hat{\mbp}_{\mathrm{VQ}},\mathbf s)\in\mathcal B$\;

  $\mathcal{L}_{\mathrm{cal}} \leftarrow \mathrm{CE}(\hat{\mbp}_{\mathrm{cal}}, y)$\;

  $(\mathcal A,\mathcal B,\boldsymbol\sigma^2) \leftarrow
  (\mathcal A,\mathcal B,\boldsymbol\sigma^2)
  - \eta_{\mathrm{cal}}\nabla_{(\mathcal A,\mathcal B,\boldsymbol\sigma^2)}\mathcal{L}_{\mathrm{cal}}$\; %
}

\Return{$\mathcal C,\ f_{\mathrm{cal}}$}\;
\end{algorithm}

\textbf{Discretization step.} The first step assigns a discrete representation to the continuous latent representation of the baseline classifier. %
Learning proceeds as follows:
\begin{itemize}
    \item \textbf{line 1:} A latent encoding $\mbz$ is extracted from the frozen encoder. $\mbz$ is then segmented into $w$ equal-sized slots $\mbz^{(i)}$. %
    \item \textbf{line 2:} Each slot $\mbz^{(i)}$ is discretized by nearest neighbour assignment with respect to the vectors in $\mathcal{C}$. $\mathrm{Assign}(\cdot,\cdot)$ yields an indices vector $\mathbf s$ used to select vectors from $\mathcal C$ and obtain the discrete representation $\mathbf{\overline{q}}$ via $\mathrm{Select}(\cdot, \cdot)$.
    \item \textbf{lines 3-6:} A new classification head takes as input $\mathbf{\overline{q}}$ and produces new scores. These values are used to compute the categorical cross entropy between predicted probabilities and ground truth. The weights of $f_{\mathrm{VQ}}$ are updated via gradient descent, whereas the codebook prototype vectors are updated via exponential moving average. %
\end{itemize}
Training proceeds until convergence.

\textbf{Calibration step.} This step takes as input the quantization head scores and the index sequence $\mathbf s$. Its goal is to produce new \textit{locally} calibrated probabilities.

Given scores $\mathbf p_{\mathrm{VQ}}$ and indices $\mathbf s$, the calibration head does the following (\textbf{line 1} of \textbf{Stage 2}):
\begin{itemize}
    \item Produces the \textit{receiver} basis $\mathbf A^{(\mcV)}$ from $\mathcal A$ and the \textit{sender} basis $\mathbf B^{(\mcV)}$ from $\mathcal B$ via $\mathrm{Select}(\mathbf s, \mathcal A)$ and $\mathrm{Select}(\mathbf s, \mathcal B)$; %
    \item Computes $\boldsymbol\alpha^{(\mcV)} - \mathbf 1$ (\cref{eq:bilinear_alpha_quant}) and produces new calibrated scores (\cref{eqn:log-linear-form}). %
\end{itemize}

Calibration parameters (potentially including $\text{diag}(\boldsymbol{\sigma}^2)$) are updated via gradient descent on the categorical cross entropy until convergence. %

\subsection{Hardware}
\label{sec:hardware}

To perform the experiments, we used two machines: $(i)$ a 16-core machine with an AMD Ryzen 9 7950X CPU and 2 NVIDIA GeForce RTX 4090 GDDR6X with 24GB of memory, OS Ubuntu 22.04.4 LTS; $(ii)$ a small cluster with 1 single master node and 2 nodes with 8 x GPU A100 80GB, OS Rocky Linux 9.6

\section*{Broader Impact}
\label{sec:broader}
This work aims to improve the reliability of probabilistic predictions by reducing miscalibration in local regions of the data space, which is particularly relevant when models are used for risk-sensitive decisions (e.g., medical triage, content moderation, or autonomous systems). Better local calibration enables more trustworthy uncertainty estimates, possibly improving downstream decision-making.

\end{document}